\renewcommand{\paragraph}[1]{\vspace{1.em} \noindent \textbf{#1}\hspace{0.25em}}
\def\sloppy{%
  \tolerance 500
  \emergencystretch 3em%
  \hfuzz .5pt
  \vfuzz\hfuzz}
\newtheorem{thm}{Theorem}
\numberwithin{thm}{chapter}
\newtheorem{prop}[thm]{Proposition}
\newtheorem{lma}[thm]{Lemma}
\theoremstyle{definition}
\numberwithin{equation}{chapter}
\begin{document}

\version{1}

\title{
\centering
A deep learning theory\\
for neural networks grounded in physics
}

\author{Benjamin Scellier}

\copyrightyear{2020}

\department{Département d'informatique et de recherche opérationnelle}

\date{December 31, 2020}

\sujet{Informatique}

\president{Irina Rish}
\directeur{Yoshua Bengio}
\membrejury{Pierre-Luc Bacon}
\examinateur{Yann Ollivier}

\maketitle

\maketitle

\anglais
\chapter*{Abstract}

In the last decade, deep learning has become a major component of artificial intelligence, leading to a series of breakthroughs across a wide variety of domains. The workhorse of deep learning is the optimization of loss functions by stochastic gradient descent (SGD). Traditionally in deep learning, neural networks are differentiable mathematical functions, and the loss gradients required for SGD are computed with the backpropagation algorithm. However, the computer architectures on which these neural networks are implemented and trained suffer from speed and energy inefficiency issues, due to the separation of memory and processing in these architectures. To solve these problems, the field of neuromorphic computing aims at implementing neural networks on hardware architectures that merge memory and processing, just like brains do. In this thesis, we argue that building large, fast and efficient neural networks on neuromorphic architectures also requires rethinking the algorithms to implement and train them. We present an alternative mathematical framework, also compatible with SGD, which offers the possibility to design neural networks in substrates that directly exploit the laws of physics. Our framework applies to a very broad class of models, namely those whose state or dynamics are described by variational equations. This includes physical systems whose equilibrium state minimizes an energy function, and physical systems whose trajectory minimizes an action functional (principle of least action). We present a simple procedure to compute the loss gradients in such systems, called equilibrium propagation (EqProp), which requires solely locally available information for each trainable parameter. Since many models in physics and engineering can be described by variational principles, our framework has the potential to be applied to a broad variety of physical systems, whose applications extend to various fields of engineering, beyond neuromorphic computing.

\paragraph{Keywords:} deep learning, machine learning, physical learning, equilibrium propagation, energy-based model, variational principle, principle of least action, local learning rule, stochastic gradient descent, Hopfield networks, resistive networks, circuit theory, principle of minimum dissipated power, co-content, neuromorphic computing 

\anglais

\cleardoublepage
\pdfbookmark[chapter]{\contentsname}{toc}

\tableofcontents
\cleardoublepage
\phantomsection
\listoftables
\cleardoublepage
\phantomsection
\listoffigures

\chapter*{Abbreviations List}

\begin{twocolumnlist}{.2\textwidth}{.7\textwidth}
  BPTT & Backpropagation Through Time\\
  CHL & Contrastive Hebbian Learning\\
  CIFAR-10 & A dataset of images of animals and objects (a standard benchmark in machine learning)\\
  ConvNet & Convolutional Network\\
  DHN & Deep Hopfield Network\\
  EBM & Energy-Based Model\\
  EqProp & Equilibrium Propagation\\
  GDD & Gradient Descending Dynamics, a property that relates the dynamics of EqProp to the loss gradients, in the discrete-time setting of Chapter \ref{chapter:discrete-time}\\
  GPU & Graphics Processing Unit\\
  LBM & Lagrangian-Based Model\\
  MNIST & A dataset of images of handwritten digits (a standard benchmark in machine learning)\\
  RBP & Recurrent Back-Propagation\\
  SGD & Stochastic Gradient Descent\\
\end{twocolumnlist}

\cleardoublepage

\vspace*{2cm}
\epigraph{\hfill \itshape To the memory of my mother}

\vspace*{1cm}

\begin{figure*}[ht!]
\begin{center}
    \includegraphics[width=0.7\textwidth]{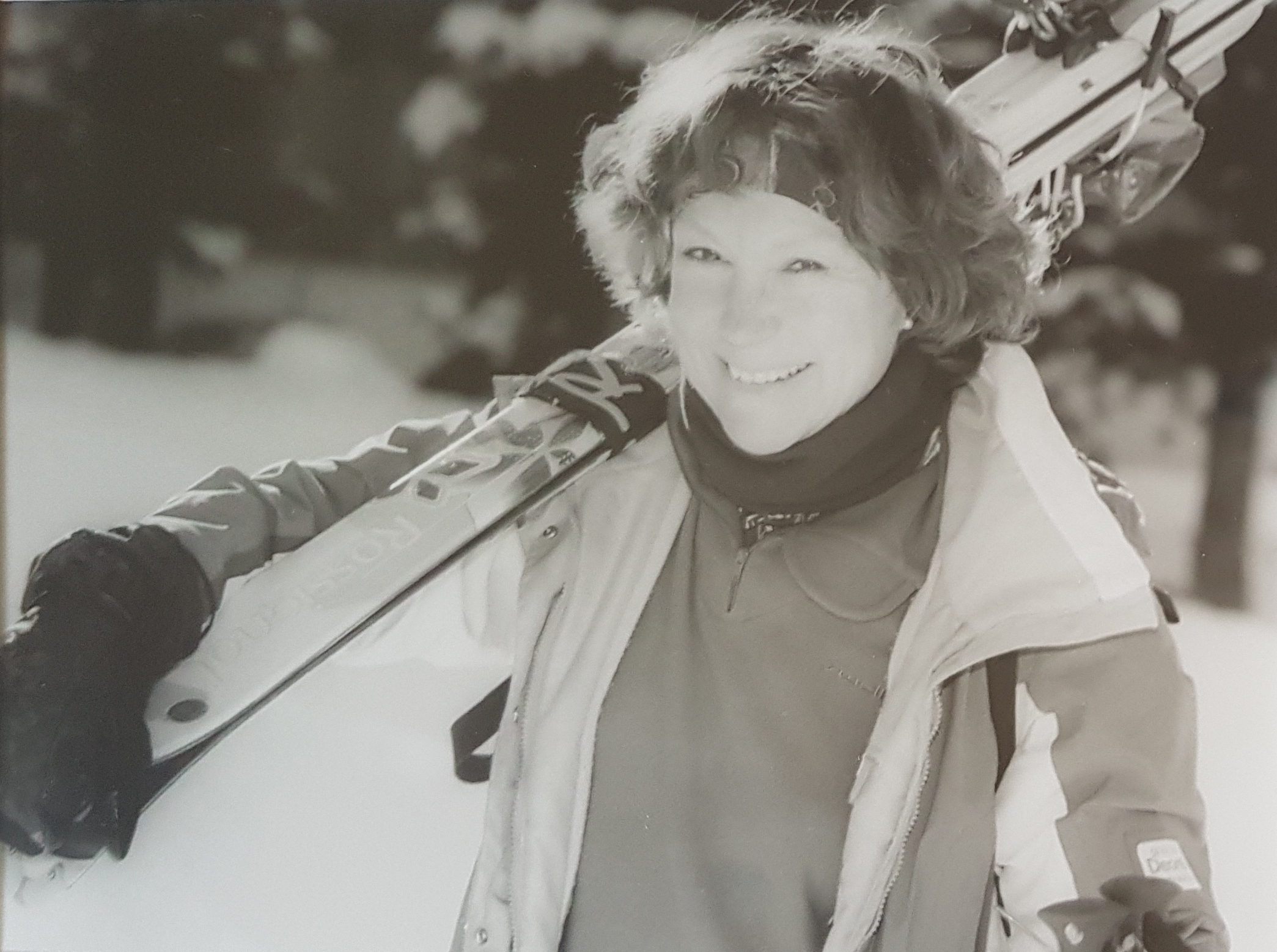}
\end{center}
\end{figure*}
\chapter*{Acknowledgements}

I had the privilege to do my doctoral studies at Mila to conduct research in the blooming field of neural networks, on a topic that I am extremely passionate about. The end of my program marks the end of a chapter, and it is an opportunity for me to reflect on what I have learned and how I have grown thanks to the people who have gone through this journey with me.

First of all, I want to thank my advisor Yoshua Bengio for being a great mentor and for communicating me his passion for neural network research and the `science of intelligence' in general (both `artificial' and `natural'). I am thankful for the freedom he granted me in my work, and for his guidance, his contributions and great insights, as well as his support throughout my PhD program. More than just a scientific advisor, Yoshua is a friend and a truly inspiring person who constantly takes actions for the common interests of society.

I would like to thank all my collaborators for their enthusiasm and contributions. Research is often exciting, but can be discouraging sometimes ; nevertheless, collaborating with them was always truly enjoyable. I would like to thank my friend Maxence Ernoult who I had the privilege to meet at a conference, after which we started an extremely fruitful collaboration, together with Axel Laborieux, Julie Grollier and Damien Querlioz. I would also like to thank Jack Kendall for collaborating together with his colleagues Ross Pantone and Kalpana Manickavasagam, as well as my friends and colleagues Jo\~ao Sacramento, Olexa Bilaniuk, Walter Senn, Anirudh Goyal, Jonathan Binas and Thomas Mesnard for our collaborations, and for all the great and inspiring scientific discussions.

I thank Wulfram Gerstner, Walter Senn and Alexandre Thierry for inviting me to their research groups, as well as the organizers of the Barbados workshop on `learning in the neocortex' -- Blake Richards, Timothy Lillicrap, Konrad Körding and Denis Therien. I also thank the organizers of the Brains, Minds and Machines summer school, and the organizers of the Cellular, Computational and Cognitive Neuroscience summer school. These visits, workshops and summer schools were precious opportunities for me to further expand my knowledge in other scientific disciplines, and I am grateful for the friends and colleagues that I met during these events, for the inspiring conversations that we had and the moments that we shared together. I also thank the juries of my thesis, Yann Ollivier, Pierre-Luc Bacon and Irina Rish, for their time and their insightful comments on my manuscript, as well as Bertrand Maubert, Jo\~ao Sacramento and Nicolas Zucchet for valuable feedback.

I would also like to thank all my friends in Montreal, Singapore, Switzerland and elsewhere, for all the wonderful moments that we have spent together, which brightened the journey of my doctoral studies. A special mention to the `friends of Normanton' for all the memories and moments of craziness. I also would like to thank all of those who were present and gave me their invaluable supports during the most difficult moments of my PhD program.

Finally, I thank my family members for their unconditional supports. I am thankful to my parents and brother who always supported me and gave me the freedom to pursue things that I felt like doing. Last but not least, I would like to thank my girlfriend Mannie for her support, her patience, and her love.

I would like to dedicate my thesis to my mother who passed away during the course of my PhD program. She has been a wonderful mother, always giving me support in all possible ways she could. She always encouraged me in pursuing my dreams, and she followed me around the world, even in the most difficult moments of her illness. She was really brave, and she attracted people's sympathy so naturally by her generosity and care. She inspired me so much and I have learned so much through her. I am extremely grateful that I had a mother like her.

\NoChapterPageNumber
\cleardoublepage

\chapter{Introduction}

What is intelligence? Are there general principles from which every aspect of intelligence derives? If yes, can we discover these principles, formulate them in mathematical language, and use them to build machines that possess human-like intelligence? And if we managed to do so, would this teach us something about ourselves and the human nature? Here are some of the fascinating questions at the interface of several disciplines of science, engineering and philosophy that motivated me to pursue a PhD in artificial intelligence.

In this introductory chapter, we start by motivating the brain-inspired approach to artificial intelligence (AI). We then review the most important principles of current AI systems. Then, we point out one of their weaknesses: the energy inefficiency of their current implementation in hardware. Finally, we present a novel mathematical framework which allows us to preserve the core principles of current AI systems, while suggesting a path to implement them in substrates that directly exploit physics to do the computations for us. In the long run, this mathematical framework may help us develop AI systems that are much larger, much faster and much more efficient than those that we use today.

\section{On Artificial Intelligence}

Computers are since long able to surpass humans at tasks that we like to think of as intellectually advanced. For example, in 1997, Deep Blue, a computer program created by IBM, beat the world champion Garry Kasparov at the game of chess \citep{campbell2002deep}.
While this was an impressive achievement, the form of intelligence that we may want to assign to Deep Blue is very rudimentary, though. The strategy of Deep Blue consists in analyzing essentially every possible scenario to pick the move leading to the best possible outcome. The state of technology at the time made it possible, with enough computing power, to automate this brute-force strategy. Humans on the other hand are not able to analyze every possible scenario at the game of chess in a reasonable amount of time. Our brains haven't evolved to do that. Instead, we develop intuitions about what are the most promising moves. Developing the right intuitions is what makes the game of chess challenging for us.

Conversely, there are plenty of tasks that humans (and other animals) do so naturally and effortlessly that it can be hard to appreciate the difficulty to build machines to automate them. Consider for example the task of classifying images of cats and dogs. Although we now have computer programs that can classify images fairly reliably, it is only in the past ten years that we have seen impressive improvements in this area. No one is said to be `intelligent' for being able to tell apart cats from dogs, given that a two-year old can already do this. So why was it so difficult to design programs to classify images? Solving this task is indeed deceptively more complex than it seems. In fact, when we see something, myriads of calculations are performed continuously and automatically in our brains. These calculations happen `behind the scenes', unconsciously, until the concept of `cat' or 'dog' pops up to our consciousness. We take for granted the fact that our brains do all these calculations for us, every second of every day. Perhaps one way to appreciate the difficulty that it represents for a programmer to write a program to classify images, is to have in mind that when our eyes see a cat, the computer program sees a bunch of numbers corresponding to pixel intensities (Fig.~\ref{fig:cifar-samples}). The difficulty for the programmer thus resides in making sense and handling these numbers in such a way that they produce the answer `cat'. Similarly, the human brain is able to learn to hear and recognize sounds, to learn to process the sense of touch, etc. We can perceive the world around us, make sense of it and interact with it like no computer or machine can do today. Undoubtedly, humans (and other animals) are in many ways incredibly smarter than machines today.

\begin{figure}[!ht]
    \begin{center}
        \includegraphics[width= 0.8 \textwidth]{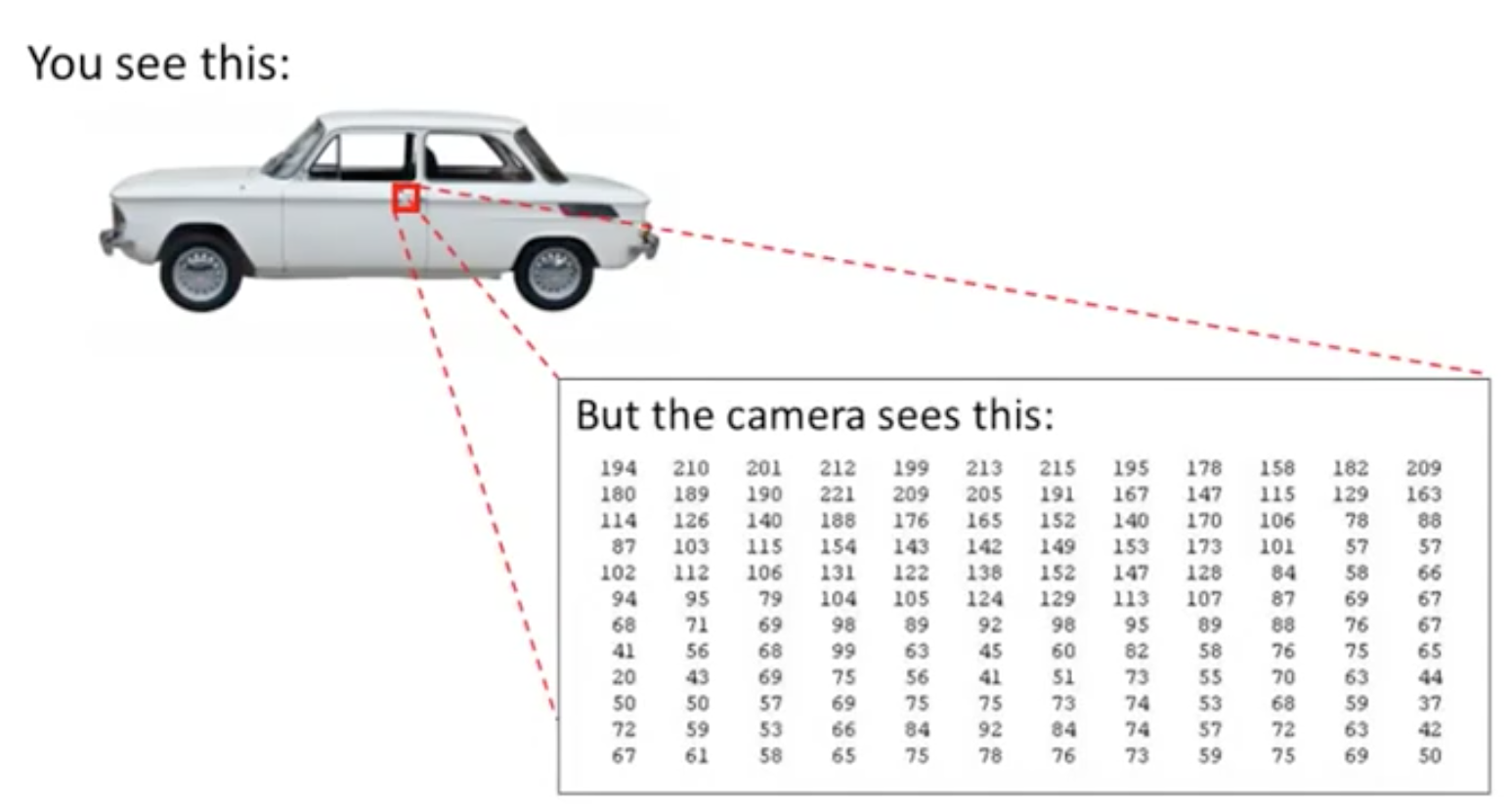}
        \caption[Picture of a car and the corresponding representation in computer language]{Picture of a car, and the corresponding representation in computer language. Each number represents the intensity of a pixel. The task of image classification consists in making sense of these numbers to produce the answer `car'. Figure credits: \citet{ng2014coursera}.}
        \label{fig:cifar-samples}
    \end{center}
\end{figure}

\subsection{Human Intelligence as a Benchmark}

Artificial intelligence (AI) emerged as a research discipline in the 1950s from the idea that every aspect of human intelligence can in principle be discovered, understood, and built into machines. The idea of AI started after Alan Turing formalized the notion of computation and began to study how computers can solve problems on their own \citep{turing2009computing}, but the term \textit{artificial intelligence} was first coined by John McCarthy in 1956. Today, AI is usually used in a broader sense, to refer more broadly to the field of study concerned with designing computational systems to solve practical tasks that we want to automate, whether or not such tasks require some form of human-like intelligence, and whether or not such computational systems are inspired by the brain. However, human intelligence is a natural benchmark for us to build `intelligent' machines. This is an arbitrary choice, and implies by no means that we humans should be regarded as the `perfect' or `ultimate' form of intelligence. But, until we are able to build machines that can do all the incredible things that we humans can do, as easily and as effortlessly as we do them, it seems rather natural to take human intelligence as a benchmark. In the quest of building intelligent machines, Turing proposed that the goal would be reached when our machines can exhibit intelligent behaviours indistinguishable from that of humans.

\subsection{Machine Learning Basics}

Consider the task of classifying images of cats and dogs mentioned earlier. Say that each image is made of $1000$ by $1000$ pixels, each pixel being described by three numbers (in the RGB color representation). Thus, each image can be represented by a vector of three million numbers. The goal is to come up with a program which, given such a vector $x$ as input, produces `dog' or `cat' as output, accordingly. Because there exist many very different vectors $x$ associated to the concept of `dog', there is no obvious, simple and reliable rule to recognize a dog. To solve this task, the program must combine a very large number of `weak rules'.

Early forms of AI consisted of explicit, manually-crafted rules, e.g. depending on formal logic. However, using this methodology to figure out all the weak rules that are necessary to correctly classify images is a really daunting task, given the complexity of real-world images. One of the key features of the brain, which these traditional programs did not have, is its ability to learn from experience and to adapt to the environment. Arthur Samuel introduced a new approach to AI, called \textit{machine learning} (ML) \citep{samuel1959some}, that takes inspiration from how we learn. In the ML approach, instead of operating with predetermined (i.e. immutable) instructions, the program is made of flexible rules that depend on adjustable parameters. As we modify the parameters, the program changes. The goal is then to tune these parameters so that the resulting program solves the task we want.

To solve the task of image classification, the ML approach requires to collect lots of examples that specify the correct output (the label) for a given input (the image). Such a collection of examples is called a \textit{dataset}. Then we use these examples to adjust the parameters of the ML program so that, for each input image, the program produces as output the label associated to that image. Such a procedure to adjust the parameters is called a \textit{learning algorithm}: the ML program \textit{learns} from examples to solve the problem. Once trained, the program obtained can then be used to predict outputs for new unseen inputs. The performance of the program is assessed on a separate set of examples called the \textit{test dataset}.

In the setting of image classification, the data is such that the labels are provided together with the corresponding images. This setting, where the expected result is known in advance for the available data, is called \textit{supervised learning}. This type of learning is currently the most widely used and successful approach to ML. Depending on the task that we want to solve, and the type and the amount of data that is available, there are two other main machine learning paradigms for training an ML program: unsupervised and reinforcement learning. Unsupervised learning refers to data for which no explicitly identified labels exist. Reinforcement learning refers to the case where no exact labels exist, but a scalar value is available (usually called `reward') that provides some knowledge on whether a proposed output is good or bad.

\section{Neural Networks}

Artificial neural networks (ANNs) are a family of ML models inspired by the basic working mechanisms of the brain. In recent years ANNs have had resounding success in AI, in areas as diverse as image recognition, speech recognition, image generation, speech synthesis, text generation and machine language translation. We start this section by presenting the basic concepts of neuroscience that have inspired the design of ANNs. Then we present the key principles at the heart of these ANNs. Finally we point out some weaknesses in the current implementation of these principles in hardware, which makes these neural networks orders of magnitude less energy efficient than brains.

Subsection \ref{sec:neuroscience} is inspired by \citet[Chapter 5]{dehaene2020we}.

\subsection{Neuroscience Basics}
\label{sec:neuroscience}

The foundations of modern neuroscience were laid by Santiago Ramon y Cajal, several decades before AI research started. Cajal was the first to observe the brain's micro-organisation with a microscope. He observed that the brain consists of disjoint nerve cells (the \textit{neurons}), not of a continuous network as the proponents of the \textit{reticular theory} thought before him. Neurons have a very particular shape. Each neuron is composed of three main parts (Figure \ref{fig:neuron}): a large `tree' composed of thousands of branches (the \textit{dendrites}\footnote{In Greek, the word \textit{dendron} means tree}), a cell body (also called the \textit{soma}), and a long fiber which extends out of the cell body towards other neurons (the \textit{axon}). A neuron collects information from other neurons through its dendritic tree. The messages collected in the dendrites converge to the cell body, where they are compiled. After compilation, the neuron sends a unique message, called \textit{action potential} (or \textit{spike}), which is carried along its \textit{axon} away from the cell body. In turn, this message is delivered to other neurons.

\begin{figure}[!ht]
	\begin{center}
		\includegraphics[width= 0.6 \textwidth]{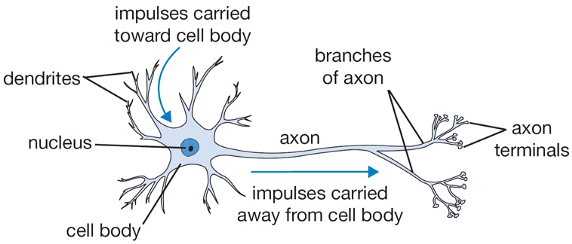}
		\caption[Schema of a neuron]{Schema of a neuron\footnotemark.}
	    \label{fig:neuron}
	\end{center}
\end{figure}

\footnotetext{https://towardsdatascience.com/a-gentle-introduction-to-neural-networks-series-part-1-2b90b87795bc}

While neurons are distinct cells, they come into contact at certain points called \textit{synapses} (Figure \ref{fig:synapse}). Synapses are junction zones through which neurons communicate. Specifically, each synapse is the point of contact of the axon of a neuron (called \textit{pre-synaptic} neuron) and the dendrite of another neuron (called \textit{post-synaptic} neuron). The message traveling through the axon of the pre-synaptic neuron is electrical, but the synapse turns it into a chemical message. The axon terminal of the pre-synaptic neuron contains some sorts of pockets (the \textit{vesicles}) filled with molecules (the \textit{neurotransmitters}). When the electrical signal reaches the axon terminal, the vesicles open and the neurotransmitters flow in the small synaptic gap between the two neurons. The neurotransmitters then bind with the membrane of the post-synaptic neuron at specific points (the \textit{receptors}). A neurotransmitter acts on a receptor as a key in a lock: they open `gates' (called \textit{channels}) in the post-synaptic membrane. As a result, ions flow from the extra-cellular fluid through these channels and generate a current in the post-synaptic neuron. To sum up, the message coming from the pre-synaptic neuron went from electrical to chemical, back to electrical, and in the process, the message was transmitted to the post-synaptic neuron.

Each synapse is a chemical factory in which numerous elements can be modified: the number of vesicles and their size, the number of receptors and their efficacy, as well as the size and the shape of the synapse itself. All these elements affect the strength with which the pre-synaptic electrical message is transmitted to the post-synaptic neuron. Synapses are constantly modified and these modifications reflect what we \textit{learn}.

\begin{figure}[!ht]
	\begin{center}
		\includegraphics[width= 0.5 \textwidth]{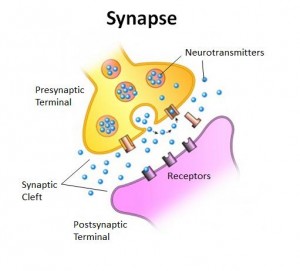}
		\caption[Schema of a synapse]{Schema of a synapse\footnotemark.}
	    \label{fig:synapse}
	\end{center}
\end{figure}

\footnotetext{https://thesalience.wordpress.com/neuroscience/the-chemical-synapse/chemical-synapses/}

The human brain is composed of around 100 billion ($10^{11}$) neurons, interconnected by a total of around a quadrillion ($10^{15}$) synapses. The brain is a huge parallel computer: in this incredibly complex machine, all the synapses work in parallel -- like independent nanoprocessors -- to process the messages sent between the neurons. Besides, synapses are modified in response to experience, and in turn these modifications alter our behaviours. Thus, synapses are both the computing units and the memory units of the brain. For every task we do, all our thoughts, memories and all our behaviours emerge from the neural activity generated by this machinery.

One of the fundamental questions of neuroscience is that of figuring out the \textit{learning algorithms} of the brain: what is the set of rules which translate the experiences we have into synaptic changes, and how do these synaptic changes modify our behaviour? Understanding the brain’s learning algorithms is not only key to understanding the biological basis of intelligence, but would also unlock the development of truly intelligent machines.

\subsection{Artificial Neural Networks}

Artificial neural networks are ML models that draw inspiration from real brains. Artificial neurons imitate the functionality of biological neurons. These models are highly simplified: they keep some essential ideas from real neurons and synapses but they discard many details of their working mechanisms. The first neuron model was introduced by \citet{mcculloch1943logical}, but the idea to use such artificial neurons in machine learning was proposed by \citet{rosenblatt1958perceptron} and \citet{widrow1960adaptive}. Artificial neurons used today in deep learning are essentially unchanged and rely on the same basic math algebra.

Each neuron $i$ is described by a single number $y_i$. This number can be thought of as the \textit{firing rate} of neuron $i$, that is the rate of spikes sent along its axon. Each synapse is also described by a single number, representing its strength. The strength of the synapse connecting pre-synaptic neuron $j$ to post-synaptic neuron $i$ is denoted $W_{ij}$. These artificial synapses can transmit signals with different efficacies depending on their strength. The neuron calculates a nonlinear function of the weighted sum of its inputs:
\begin{equation}
    \label{eq:artificial-neuron}
    y_i = \sigma \left( \sum_j W_{ij} y_j \right).
\end{equation}
The \textit{pre-activation} $x_i = \sum_j W_{ij} y_j$ is a weighted sum of the messages received from other neurons, weighted by the corresponding synaptic strengths. $x_i$ can be though of as the membrane voltage of neuron $i$. $\sigma$ is a function called an \textit{activation function}, which maps $x_i$ onto the firing-rate $y_i$.

Such artificial neurons can be combined to form an artificial neural network (ANN). Each neuron in the network receives messages from other neurons (the $y_j$'s), compiles them ($x_i$), and sends in turn a message to other neurons ($y_i$). Thus, a network of interconnected neurons exploits the composition of many elementary operations to form more complex computations. The synaptic strengths (the $W_{ij}$'s), also called \textit{weights}, play the role of adjustable parameters that parameterize this computation.

\textit{Deep learning} refers to ANNs composed of multiple layers of neurons \citep{lecun2015deep,goodfellow2016deep}. These \textit{deep neural networks} were inspired by the structure of the visual cortex in the brain, each layer corresponding to a different brain region. One of the core ideas of neural networks is that of \textit{distributed representations}, the idea that the vector of neuron's states can represent abstract concepts, by opposition to other approaches to AI that use discrete symbols to represent concepts. In a deep network, each layer of neurons applies specialized operations and transformations on its inputs, with the intuition that each layer builds up more abstract concepts than the previous \citep{bengio2009learning}.

\subsection{Energy-Based Models vs Differentiable Neural Networks}

Several families of neural networks emerged in the 1980s. One of these families is that of \textit{energy-based models}, which includes the Hopfield network \citep{hopfield1982neural} and the Boltzmann machine \citep{ackley1985learning}. In these models, under the assumption that the synaptic weights are symmetric (i.e. $W_{ij}=W_{ji}$ for every pair of neurons $i$ and $j$), the dynamics of the network converges to an equilibrium state, after iterating Eq.~\ref{eq:artificial-neuron} a large number of times for every neuron $i$. Because of the large number of iterations required, these models tend to be slow. This is one of the reasons why these neural networks have been mostly discontinued today. However, by reinterpreting the equilibrium equation of energy-based models as a variational principle of physics, I believe that these models could be the basis of a new generation of fast, efficient and scalable neural networks grounded in physics. We will come back to this point later in the discussion (Section \ref{sec:deep-learning-theory}).

The family of neural networks that is at the heart of the on-going deep learning revolution is that of \textit{differentiable neural networks}, which became popular thanks to the discovery of the \textit{backpropagation algorithm} to train them \citep{rumelhart1988learning}. In such neural networks, each operation in the process of computation is differentiable. The earliest models of this kind were feedforward neural networks (e.g. the multi-layer perceptron), wherein the connections between the neurons do not form loops. Recurrent neural networks can also be cast to this category of differentiable neural networks, by unfolding the graph of their computations in time. Since their inception, differentiable neural networks have come a long way. Many novel architectures have been introduced, in particular: convolutional neural networks \citep{fukushima1980neocognitron,lecun1989backpropagation}, Long Short-term Memory \citep{hochreiter1997long,graves2013generating}, and attention mechanisms \citep{bahdanau2014neural,vaswani2017attention}.

\subsection{Stochastic Gradient Descent}

The computations performed by a neural network are parameterized by its synaptic weights. The goal is to find weight values for which the computations of the neural network solve the task of interest. One essential idea of machine learning is to introduce a \textit{loss function}, which provides a numerical measure of how good or bad the computations of the model are, with respect to the task that we want to solve. The goal is then to minimize the loss function with respect to the model weights. For example, in image classification, the computations of the model produce an output which represents a `guess' for the class of that image, and the loss provides a graded measure of `wrongness' between that guess and the actual image label. A smaller value of the loss function means that the model produces an output closer to the desired target. The loss function is minimal when the output is equal to the desired target.

One of the most important ideas of deep learning today is \textit{stochastic gradient descent} (SGD). Provided that the loss function is differentiable with respect to the network weights, we can use the gradient of the loss function to indicate the direction of the minimum of this function. SGD consists in taking examples from the training set one at a time, and adjusting the network weights iteratively in proportion to the negative of the gradient of the loss function. At each iteration, the network performance (as measured per the loss value) slightly improves.

A key discovery that has greatly eased and accelerated deep learning research is the following. Given a computer program that computes a \textit{differentiable} scalar function $f: \mathbb{R}^n \to \mathbb{R}$, it is possible to automatically transform the program into another program that computes the gradient operator $\nabla f : \mathbb{R}^n \to \mathbb{R}^n$. The gradient $\nabla f(\theta)$ can then be evaluated at any given point $\theta \in \mathbb{R}^n$, with a computational overhead that scales linearly with the complexity of the original program. This technique, known as \textit{reverse-mode automatic differentiation} \citep{speelpenning1980compiling}, provides a general framework for `backpropagating loss gradients' \citep{rumelhart1988learning} in any \textit{differentiable computational graph}. In the last decade, dozens of deep learning frameworks and libraries have been developed, which exploit reverse-mode automatic differentiation to compute gradients in arbitrary differentiable computational graphs. This includes Theano \citep{bergstra2010theano} -- a framework that was developed at Université de Montréal -- and the more recent Tensorflow \citep{abadi2016tensorflow} and PyTorch \citep{paszke2017automatic} frameworks. The emergence of these deep learning frameworks has considerably accelerated deep learning research, by enabling researchers to quickly design neural network architectures and train them by SGD. Thanks to these frameworks, deep learning researchers can explore the space of differentiable computational graphs much more rapidly, as they seek novel and more effective neural architectures.

\subsection{Landscape of Loss Functions}

It was not trivial to discover that deep neural networks can be trained at all. Until the seminal work of \citet{hinton2006fast}, common belief was that neural networks with more than two layers were essentially impossible to train. In particular, because the landscape of the loss function associated to a deep network is typically highly non-convex, a common misconception was that gradient-descent methods would likely get stuck at bad local minima. In terms of generalization performance, the large over-parameterization of neural networks was also against general prescriptions from classical statistics and learning theory. One of the surprising discoveries of the deep learning revolution was that, in such highly non-convex and over-parameterized statistical models, provided that the loss landscape has appropriate shape, SGD can solve complex tasks by finding excellent parameter values that generalize well to unseen examples.

Several elements contributed to unlock the training of deep neural networks ; among others: the discovery \citep{glorot2011deep} that the ReLU (`Rectified Linear Unbounded') activation function usually outperforms the sigmoid activation function, the discovery of better weight initialization schemes \citep{glorot2010understanding,saxe2013exact,he2015delving}, and the batch-normalization technique to systematically normalize signal amplitudes at each layer of a network \citep{ioffe2015batch}. Besides, fundamental advances have come from the introduction of new network architectures such as the ones mentioned earlier, and from novel machine learning paradigms such as that of generative adversarial networks \citep{goodfellow2014generative}. All these techniques have as an effect to modify the landscape of the loss function, as well as the starting parameter (before training) in the parameter space. Understanding how the landscape of the loss function can be appropriately shaped to ease optimization by SGD is an active area of research \citep{poggio2017theory,arora2018toward}.

\subsection{Deep Learning Revolution}

In recent years, deep neural networks have proved capable of solving very complex problems across a wide variety of domains. Today, they achieve state-of-the-art performance in image recognition \citep{he2016deep}, speech recognition \citep{hinton2012deep,chan2016listen}, machine translation \citep{vaswani2017attention}, image-to-text \citep{sharma2018conceptual}, text-to-speech \citep{oord2016wavenet}, text generation \citep{brown2020language}, and synthesis of realistic-looking portraits \citep{karras2019style}, among many other applications. Neural networks have become better than humans at playing Atari games \citep{mnih2013playing}, playing the game of Go \citep{silver2018general} and playing Starcraft \citep{vinyals2019alphastar}.
Perhaps what is most exciting is that, although these neural networks are designed to solve very different tasks and deal with different types of data, they are all trained using the same handful of basic principles. As we scale these neural networks, more advanced aspects of intelligence seem to emerge from this handful of principles.

While the pace of progress in neural network research is breathtaking, we should also emphasize that, without any question, neural networks are nowhere close to `surpass' humans. In their current form, they miss key elements of human intelligence. Whenever neural networks beat humans, they beat us at a very specific task and/or under very specific conditions.
We may need new learning paradigms to move away from `task-specific' neural networks towards `multi-functional' and continually learning neural networks.
Besides, as of today, neural networks cannot handle and combine abstract concepts nearly as flexibly as humans do. Making progress along these lines may require new breakthroughs, to give these neural networks the ability to develop a `thought language' and a sense of causal reasoning, among others.

\subsection{Graphics Processing Units}

The on-going deep learning revolution owes to other technological developments too. In the past decades, the amount of data available has greatly increased, and powerful multi-core parallel processors for general purpose computing, such as Graphics Processing Units (GPUs), have emerged \citep{owens2007survey}. Training large models on large datasets -- here is part of the recipe to make a deep neural network solve a challenging task. In the 1980s, when deep neural networks were first conceived, the lack of data and computational power to train them made it practically unfeasible to demonstrate their effectiveness.

The last decade of neural network research seems to hint at a simple and straightforward strategy to further improve performance of our AI systems: scaling. Using more memory and more compute to train larger models on larger datasets -- here is the current trend to build state-of-the-art deep learning systems. The largest model ever built thus far, GPT-3 \citep{brown2020language}, has a capacity of 175 billion parameters.

Training these neural networks requires very large amounts of computations. Standard practice today is to distribute the computations across more and more GPUs, to train larger and larger models. Still, even using thousands of GPUs working in parallel, training these neural networks can take months. For example, AlphaZero learnt to play the game of Go by playing 140 million games, which took 5000 processors and two weeks. Moreover, training such models can cost millions of dollars, just for electricity consumption, not to mention their ecological impact. Yet, even these large neural networks are only a tiny fraction of the size of the human brain. What causes such inefficiency, preventing us from building models of the size of the human brain?

\subsection{The Von Neumann Bottleneck}
\label{sec:von-neumann-bottleneck}

If at the conceptual level the neural networks used today take their overall strategy from the brain, on the hardware implementation level however, they use little of the cleverness of nature. Our current processors, on which these neural networks are trained and run, operate in fundamentally different ways than brains. They rely on the \textit{von Neumann architecture}. In this computer architecture, the memory unit where information is stored, is separated from the processing unit where calculations are done. A so-called \textit{bus} moves information back and forth between these two units. Over the course of history of computing, this computer architecture has become the norm, and today, the von Neumann architecture is used in virtually all computer systems: laptops, smartphones, and all kinds of embedded systems. The GPUs, massively used for neural network training today, also rely on the von Neuman architecture, where memory is separated from computing.

The brain on the other hand deeply merges memory and computing by using the same functional unit: the synapse. The human brain is composed of a quadrillion ($10^{15}$) synapses. In other words, the human brain has $10^{15}$ nanoprocessors working in parallel.

Training neural networks on von Neumann hardware as we do it today is extraordinarily energy inefficient in comparison with the way brains operate. The necessity to move the data back and forth between the memory and processing units in the von Neumann architecture is energy intensive and creates considerable latency. This limitation is known as the \textit{von Neumann bottleneck}. How inefficient is it compared to biological systems like the brain? The human brain is composed of $10^{11}$ neurons and consumes around 20W to conduct all of its activities \citep{attwell2001energy}. In comparison, training a BERT model (a state-of-the-art natural language processing model) on a modern supercomputer requires 1500 kW.h \citep{strubell2019energy}, which is the total amount of energy consumed by a brain in nine years. Besides, a GPU running real-time object detection with YOLO \citep{redmon2016you}, a network smaller than the brain by four orders of magnitude, consumes around 200W.

This striking mismatch holds more broadly with biological systems in general. For example, \citet{kempes2017thermodynamic} study the energy efficiency of `cellular computation' in the process of biological translation. What is the amount of energy required (in ATP equivalents) by ribosomes to assemble amino acids into proteins ?
They point out that "the best supercomputers perform a bit operation at roughly $5.27 \times 10^{-13} J$, [...] which is about five orders of magnitude less efficient than biological translation."

To sum up, our current neural networks are orders of magnitude less energy efficient than biological systems at processing information, and the von Neumann bottleneck is largely responsible for this inefficiency. If using more and more GPUs may increase speed, and thereby speed up training and inference of neural networks, this strategy however can't improve energy efficiency.

\subsection{In-Memory Computing}

In order to build massively parallel neural networks that are energy efficient and can scale to the size of the human brain, we need to fundamentally rethink the underlying computing hardware. We need to design neural networks so that computations are performed at the physical location of the synapses, where the strength of the connections (the weights of the neural network) are stored and adjusted, just like in the brain. The concept of hardware that merges memory and computing is called \textit{in-memory computing} (or \textit{in-memory processing}), and the field tackling this problem is called \textit{neuromorphic computing}. This field of research, started by Carver Mead in the 1980s \citep{mead1989analog} aims at mimicking brains at a hardware level, by building physical neurons and synapses onto a chip.

The most common approach to in-memory computing today is to use \textit{programmable resistors} as synapses. Programmable resistors, such as \textit{memristors} \citep{chua1971memristor}, are resistors whose conductance can be changed (or `programmed'). The weights of a neural network can be encoded in the conductance of such devices. In the last decade, important advances in nanotechnology were made, and a number of new technologies have emerged and have been studied as potentially promising programmable resistors \citep{burr2017neuromorphic,xia2019memristive}.

Neuromorphic computing thus explores analog computations that fundamentally depart from the standard digital computing paradigm.

\subsection{Challenges of Analog Computing}

Analog processing differs from digital processing in important ways. Whereas digital circuits manipulate binary signals with reliably distinguishable \textit{on} and \textit{off}-states, analog circuits on the other hand manipulate real-valued currents and voltages that are subject to analog noise that bounds the precision with which computation may be performed. More importantly, analog devices suffer from \textit{mismatches}, i.e. small random variations in the physical characteristics of devices, which occurs during their manufacturing. No two devices are exactly alike in their characteristics, and it is impossible to make a perfect clone of one. These variations result in behavioral differences between identically designed devices. Due to the accumulation of the mismatch errors from individual devices, it is very difficult to analytically predict the behavior of a large analog circuit.

A growing field of research in the neuromorphic literature attempts to perform in analog the operations that we normally do in software, so as to implement feedforward neural networks and the backpropagation algorithm efficiently. In this approach, the starting point is an equation of the kind of Eq.~\ref{eq:artificial-neuron}. Many of these operations are then performed in analog and combined to form the computations of a feedforward network. However, because of device mismatches, it is hard to perform such idealized operations, and as we combine many of these operations, the resulting computation may be different from the desired one. Either these idealized operations are performed with low precision, or we may spend a lot of energy trying to improve precision, e.g. by using analog-to-digital conversion.

Not coincidentally, the constraint of device nonidealities and device variability is shared with biology too. No two neurons are exactly the same. This realization demonstrates, in principle, that it is possible to train (biological) neural networks even in the presence of noise and imperfect `devices'. It invites us to rethink the learning algorithm for neural networks (and the notion of computation altogether).

\section{A Deep Learning Theory for Neural Networks Grounded in Physics}
\label{sec:deep-learning-theory}

In this thesis, we propose an alternative theoretical framework for neural network inference and training, with potential implications for neuromorphic computing. Our theoretical framework preserves the key principles that power deep learning today, such as optimization by stochastic gradient descent (SGD), but we use variational formulations of the laws of physics as first principles, so as to directly implement neural networks in physics. We present two very broad classes of neural network models, called \textit{energy-based models} and \textit{Lagrangian-based models}, whose state or dynamics derive from variational principles. The learning algorithm, called \textit{equilibrium propagation} (EqProp), enables to estimate the gradients of arbitrary loss functions in such physical systems using solely locally available information for each parameter.

\subsection{Physical Systems as Deep Learning Models}

The general idea of the manuscript is the following. We consider a physical system composed of multiple parts whose characteristics and working mechanisms may be only partially known. The system has some `adjustable parameters', some of them playing the role of `inputs', and we may read or measure a `response' on some other `output' part of the system. We can think of this black box system as performing computations and implementing a nonlinear input-to-output mapping function (which may be analytically unknown). We wish to tune the adjustable parameters of this system by stochastic gradient descent (SGD), as we normally do in deep learning. The question is: how can we compute or estimate the gradients in such a physical system, by relying on the physics of the system?

The main theoretical result of the thesis is that, for a large class of physical systems (those whose state or dynamics derive from a variational principle), there is a simple procedure to estimate the parameter gradients, which in many practical situations requires only locally available information for each parameter.
This procedure, called \textit{equilibrium propagation} (EqProp), preserves the key benefit of being compatible with SGD, while offering the possibility to directly exploit physics to implement and train neural networks.

\subsection{Variational Principles of Physics as First Principles}

Rather than Eq.~\ref{eq:artificial-neuron}, our starting point is a \textit{variational equation} of the form
\begin{equation}
    \label{eq:variational-principle}
    \frac{\partial E}{\partial s}=0,
\end{equation}
where $E$ is a scalar function. If $s$ is the state of the system, then Eq.~\ref{eq:variational-principle} is an equilibrium condition. In this case, we say that the system is an \textit{energy-based model} (EBM) and we call $E$ the \textit{energy function}.

In this thesis, we also introduce the concept of \textit{Lagrangian-based model} (LBM). Variational equations exist not just to characterize equilibrium states, but also entire trajectories. Many physical systems are such that their trajectory derives from a \textit{principle of stationary action} (e.g. a principle of least action). Denoting $\mathrm{s_t}$ the state of the system at time $t$, this means that the (continuous-time) trajectory $\mathrm{s} = \{ \mathrm{s_t} \}_{0 \leq t \leq T}$ over a time interval $[0,T]$ minimizes a functional of the form
\begin{equation}
    \mathcal{S} = \int_0^T L(\mathrm{s_t},\mathrm{\dot{s}_t}) dt,
\end{equation}
where $\mathrm{\dot{s}_t}$ is the time derivative of $\mathrm{s_t}$, $L$ is a function called the \textit{Lagrangian function} of the system, and $\mathcal{S}$ is a scalar functional called the \textit{action}. The stationarity of the action tells us that $\frac{\delta \mathcal{S}}{\delta \mathrm{s}}=0$, which is another variational equation of the kind of Eq.~\ref{eq:variational-principle}. These systems, which we call Lagrangian-based models (LBMs), are suitable in particular in the setting with time-varying inputs and can thus play the role of `recurrent neural networks'.

Equilibrium propagation (EqProp) allows to compute gradients with respect to arbitrary loss functions in these EBMs and LBMs. Furthermore, if the energy function (resp. Lagrangian function) of the system has a property called \textit{sum-separability}, meaning that it is the sum of the energies (resp. Lagrangians) of its parts, then computing the loss gradients with EqProp requires only information that is locally available for each parameter (i.e. the learning rule is \textit{local}).

\subsection{Universality of Variational Principles in Physics}

In the 1650s, Pierre de Fermat proposed the \textit{principle of least time} which states that, between two given points, the light travels the path which takes the least time. He showed that both the laws of reflection and refraction can be derived from this principle. Fermat's least time principle is an instance of what we call more generally a \textit{variational principle}. Today, the variational approach pervades much of modern physics and engineering \citep{lanczos2012variational}, with applications not only in optics, but also in mechanics, electromagnetism, thermodynamics, etc. Even at a fundamental level of description, our universe seems to behave according to variational principles: for example, Einstein's equations of general relativity can be derived from the Einstein-Hilbert action, and in a sense, Feynman's path integral formulation of quantum mechanics can be seen as a generalized principle of least action \citep{feynman2005principle}.

Thus, many physical systems qualify as energy-based models or Lagrangian-based models. This offers in principle a lot of options for implementing our proposed method on physical substrates. In this manuscript we will present one such option in details: \textit{nonlinear resistive networks}.

\subsection{Rethinking the Notion of Computation}

Interestingly, our theoretical framework invites us to rethink not only the von Neumann architecture on which our current processors rely, but also the notion of computation altogether.

Much of computer science deals with computation in the abstract, without worrying about physical implementation \citep{lee2017plato}. Our computers today rely on the computing paradigm introduced by Turing, where computers operate on digital data and carry out computations algorithmically, via step-by-step (discrete-time) processes. The von Neumann architecture was invented to implement these computations (i.e. to bridge the gap between physics and these abstract computations), but suffers from the speed and energy efficiency problems mentioned earlier (Section~\ref{sec:von-neumann-bottleneck}).

The theoretical framework presented in this manuscript can be seen as an alternative approach to computing that takes advantage of the ways by which Nature operates. We suggest a novel computing paradigm, which uses the variational formulations of the laws of physics as first principles. Together with the \textit{equilibrium propagation} (EqProp) training procedure, our approach suggests a way to implement the core principles of deep learning by exploiting physics directly. As will become apparent in Section \ref{sec:resistive-networks-algo}, the process of `computations' in EqProp is very different from the step-by-step processes of Turing's conventional computing paradigm. Although we may call EqProp a learning `algorithm', it is not an algorithm in the conventional sense (one that performs step-by-step computations).

\subsection{A Novel Differentiation Method Compatible with Variational Principles}

More technically, the main idea of the manuscript can be summarized by the following mathematical identity -- a novel differentiation method compatible with variational principles. Consider two functions $E(\theta,s)$ and $C(s)$ of the two variables $\theta$ and $s$. We wish to compute the gradient $\frac{d}{d\theta}C(s_\theta)$, where $s_\theta$ is such that $\frac{\partial E}{\partial s}(\theta,s_\theta) = 0$. To do this, we introduce the function $F(\theta,\beta,s) = E(\theta,s) + \beta \; C(s)$, where $\beta$ is a scalar. For fixed $\theta$, we further define $s_\star^\beta$ by the relationship $\frac{\partial F}{\partial s}(\theta,\beta,s_\star^\beta) = 0$, for any $\beta$ in the neighborhood of $\beta=0$. In particular $s_\star^0 = s_\theta$. Then we have the identity
\begin{equation}
    \frac{d}{d\theta}C(s_\theta) = \left. \frac{d}{d\beta} \right|_{\beta=0} \frac{\partial E}{\partial \theta}(\theta,s_\star^\beta),
\end{equation}
where $\frac{\partial E}{\partial \theta}$ denotes the partial derivative of the function $E$ with respect to its first argument.
This result is developed and proved in Chapter \ref{chapter:eqprop}.

\section{Overview of the Manuscript and Link to Prior Works}

The manuscript is organized as follows.
\begin{itemize}
    \item In Chapter \ref{chapter:eqprop}, we present EqProp in its original formulation, as a learning algorithm to train energy-based models (EBMs). We show that, provided that the energy function of the system is \textit{sum-separable}, then the learning rule of EqProp is local. This corresponds to Section 3 and Appendix A of \citet{scellier2017equilibrium}.
    \item In Chapter \ref{chapter:hopfield}, we use EqProp to train a particular class of EBMs called \textit{gradient systems}. This includes the continuous Hopfield network, a neural network model introduced by Hopfield in the 1980s, studied by both the neurosience community and the neuromorphic community. In this setting, the learning rule of EqProp is a form of contrastive Hebbian learning. The first part of this chapter corresponds to the result established in \citet{scellier2019equivalence}. The second part corresponds to sections 2, 4, and 5 of \citet{scellier2017equilibrium}.
    \item In Chapter \ref{chapter:neuromorphic}, we show that a class of analog neural networks called \textit{nonlinear resistive networks} are energy-based models: they possess an energy function called the \textit{co-content} of the circuit, as a reformulation of Kirchhoff's laws. Furthermore the co-content has the sum-separability property. Therefore we can train these nonlinear resistive networks with EqProp using a local learning rule. This chapter corresponds to \citet{kendall2020training}.
    \item In Chapter \ref{chapter:discrete-time}, we present a class of discrete-time neural network models trainable with EqProp, which is useful to accelerate computer simulations. This formulation, which uses notations closer to those used in conventional deep learning, is also more adapted to train more advanced network architectures such as convolutional networks. This chapter corresponds to \citet{ernoult2019updates} and \citet{laborieux2020scaling}.
    \item In Chapter \ref{chapter:future}, we present on-going developments. In particular, we introduce the concept of \textit{Lagrangian-based models}, a wide class of machine learning models that can serve as recurrent neural networks and can be implemented directly in physics by exploiting the \textit{principle of stationary action}. These Lagrangian-based models can also be trained with an EqProp-like training procedure. We also present an extension of EqProp to stochastic systems, which was introduced in Appendix C of \citet{scellier2017equilibrium}. Finally, we briefly present the \textit{contrastive meta-learning} framework of \citet{zucchet2021contrastive}, which uses the EqProp technique to train the meta-parameters of a meta-learning model.
\end{itemize}

\chapter{Equilibrium Propagation: A Learning Algorithm for Systems Described by Variational Equations}
\label{chapter:eqprop}

Much of machine learning today is powered by stochastic gradient descent (SGD). The standard method to compute the loss gradients required at each iteration of SGD is the backpropagation (Backprop) algorithm. Equilibrium propagation (EqProp) is an alternative to Backprop to compute the loss gradients. The difference between EqProp and Backprop lies in the class of models that they apply to: while Backprop applies to \textit{differentiable neural networks}, EqProp is broadly applicable to systems described by \textit{variational equations}, i.e. systems whose state or dynamics is a stationary point of a scalar function or functional. Since many physical systems have this property \citep{lanczos2012variational}, EqProp offers the perspective to implement and train machine learning models which use the laws of physics at their core.

In this chapter, we present EqProp in its original formulation \citep{scellier2017equilibrium}, as an algorithm to train energy-based models (EBMs). EBMs are systems whose equilibrium states are stationary points of a scalar function called the \textit{energy function}. EBMs are suitable in particular when the input data is static. In most of the manuscript, we consider for simplicity of presentation the supervised learning setting with static input, e.g. the setting of image classification where the input is an image and the target is the category of that image. However, EqProp is applicable beyond this setting. In Section \ref{sec:time-varying-setting}, we introduce the concept of \textit{Lagrangian-based models} (LBMs) which, by definition, are physical systems whose dynamics derives from a \textit{principle of stationary action}, and we show how EqProp can be applied to such systems. LBMs are suitable in the context of time-varying data, and can thus play the role of `recurrent neural networks'. We also present an extension of EqProp to stochastic systems (Section \ref{sec:stochastic-setting}), and to the setting of meta-learning (Section \ref{sec:contrastive-meta-learning}).

The present chapter is organized as follows.
\begin{itemize}
    \item In section \ref{sec:sgd}, we present the stochastic gradient descent (SGD) algorithm, which is at the heart of current deep learning. We present SGD in the setting of supervised learning, which we will consider in most of the manuscript to illustrate the ideas of the equilibrium propagation training framework. We note however that SGD is also the workhorse of state-of-the-art unsupervised and reinforcement learning algorithms.
    \item In section \ref{sec:energy-based-models} we define the notion of \textit{energy-based model} (EBM) that we will use throughout the manuscript.
    \item In section \ref{sec:loss-gradients}, we present the general formula for computing the loss gradients in an EBM, and in section \ref{sec:equilibrium-propagation}, we present the equilibrium propagation (EqProp) algorithm to estimate the loss gradients. Under the assumption that the energy function of the system satisfies a property called \textit{sum-separability}, the learning rule for each parameter is local.
    \item In section \ref{sec:ebms-examples}, we give a few examples of models trainable with EqProp, which we will study in the next chapters. Besides the well-known Hopfield model, nonlinear resistive networks, flow networks and elastic networks are examples of sum-separable energy-based models and, as such, are trainable with EqProp using a local learning rule.
    \item In section \ref{sec:remarks}, we discuss the general applicability of the framework presented here and the conditions under which EqProp is applicable.
\end{itemize}

\section{Stochastic Gradient Descent}
\label{sec:sgd}

In most of the manuscript, we consider the supervised learning setting, e.g. the setting of image classification where the data consists of images together with the labels associated to these images. In this scenario, we want to build a system that is able, given an input $x$, to `predict' the label $y$ associated to $x$. To do this, we design a \textit{parametric} system, meaning a system that depends on a set of \textit{adjustable parameters} denoted $\theta$. Given an input $x$, the system produces an output $f(\theta, x)$ which represents a `guess' for the label of $x$. Thus, the system implements a mapping function $f(\theta, \cdot)$ from an input space (the space of $x$) to an output space (the space of $y$), parameterized by $\theta$. The goal is to tune $\theta$ so that for most $x$ of interest, the output $f(\theta, x)$ is close to the target $y$. The `closeness' between $f(\theta, x)$ and $y$ is measured using a scalar function $C(f(\theta, x), y)$ called the \textit{cost function}. The overall performance of the system is measured by the expected cost $\mathcal{R}(\theta) = \mathbb E_{(x, y)}[C(f(\theta, x), y)]$ over examples $(x, y)$ from the data distribution of interest. The goal is then to minimize $\mathcal{R}(\theta)$ with respect to $\theta$.

In deep learning, the core idea and leading approach to tune the set of parameters $\theta$ is \textit{stochastic gradient descent} (SGD). The first step consists in gathering a (large) dataset of examples $\mathcal{D}_{\rm train} = \{ (x^{(i)}, y^{(i)}) \}_{1 \leq i \leq N}$, called \textit{training set}, which specifies for each input $x^{(i)}$ the correct output $y^{(i)}$. Then, each step of the training process proceeds as follows. First, a sample $(x, y)$ is drawn from the training set. Input $x$ is presented to the system, which produces $f(\theta, x)$ as output. This output is compared with $y$ to evaluate the loss
\begin{equation}
    \mathcal{L}(\theta, x, y) = C(f(\theta, x), y).
\end{equation}
Subsequently, the gradient $\frac{\partial {\mathcal L}}{\partial \theta} \left( \theta, x, y \right)$ is computed or estimated using some procedure, and the parameters are updated proportionally to the loss gradient:
\begin{equation}
    \Delta \theta = - \eta \frac{\partial {\mathcal L}}{\partial \theta} \left( \theta, x, y \right),
\end{equation}
where $\eta$ is a step-size parameter called \textit{learning rate}. This process is repeated multiple times (often millions of times) until convergence (or until desired). Once trained, the performance of the system is evaluated on a separate set of \textit{previously unseen} examples, called \textit{test set} and  denoted $\mathcal{D}_{\rm test} = \{ (x_{\rm test}^{(i)}, y_{\rm test}^{(i)}) \}_{1 \leq i \leq M}$. The \textit{test loss} is $\widehat{\mathcal{R}}_{\rm test}(\theta) = \frac{1}{M} \sum_{i=1}^M \mathcal{L} \left( \theta, x_{\rm test}^{(i)}, y_{\rm test}^{(i)} \right)$.

Several variants of SGD have been proposed, which use adaptive learning rates to accelerate the optimization process. This includes the \textit{momentum method} \citep{sutskever2013importance}
and \textit{Adam} \citep{kingma2014adam}. In some cases, these methods are not only faster, but also achieve better test performance than standard SGD. Besides, common practice is to average the loss gradients over \textit{mini-batches} of data examples before updating the weights -- a method sometimes called \textit{mini-batch gradient descent} -- but in this manuscript we consider for simplicity of presentation that training examples are processed one at a time. 

The SGD algorithm described above powers nearly all of deep learning today. There are, however, two ingredients that we have not specified so far: the `system' that implements the mapping function $f(\theta, x)$, and the `procedure' to compute the loss gradient $\frac{\partial {\mathcal L}}{\partial \theta} \left( \theta, x, y \right)$. In conventional deep learning, the `system' is a \textit{differentiable neural network}, and the loss gradients are computed with the \textit{backpropagation algorithm}. In this chapter, we present an alternative framework for optimization by SGD, where the `system' (i.e. the neural network) is an \textit{energy-based model}, and the procedure to compute the loss gradients is called \textit{equilibrium propagation} (EqProp).

\section{Energy-Based Models}
\label{sec:energy-based-models}

There exist different definitions for the concept of energy-based model in the literature -- see \citet{lecun2006tutorial} for a tutorial. In this manuscript, we reserve the term to refer to the specific class of machine learning models described in this section.

In the context of supervised learning, an \textit{energy-based model} (EBM) is specified by three variables: a parameter variable $\theta$, an input variable $x$, and a state variable $s$. An essential ingredient of an EBM is the \textit{energy function}, which is a scalar function $E$ that specifies how the state $s$ depends on the parameter $\theta$ and the input $x$. Given $\theta$ and $x$, the energy function associates to each \textit{conceivable} configuration $s$ a real number $E(\theta, x, s)$. Among all conceivable configurations, the \textit{effective} configuration of the system is by definition a state $s(\theta, x)$ such that
\begin{equation}
    \label{eq:free-equilibrium-state}
    \frac{\partial E}{\partial s}(\theta, x, s(\theta, x)) = 0.
\end{equation}
We call $s(\theta, x)$ an \textit{equilibrium state} of the system. The aim is to minimize the loss at equilibrium:
\begin{equation}
    \mathcal{L}(\theta, x, y) = C(s(\theta, x), y).
\end{equation}
In this expression, $s(\theta, x)$ is the `prediction' from the model, and plays the role of the `output' $f(\theta, x)$ of the previous section. A conceptual difference between $s(\theta, x)$ and $f(\theta, x)$ is that, in conventional deep learning, $f(\theta, x)$ is usually thought of as the output layer of the model (i.e. the last layer of the neural network), whereas here $s(\theta, x)$ represents the entire state of the system. Another difference is that $f(\theta, x)$ is usually \textit{explicitly} determined by $\theta$ and $x$ through an analytical formula, whereas here $s(\theta, x)$ is \textit{implicitly} specified through the variational equation of Eq.~(\ref{eq:free-equilibrium-state}) and may not be expressible by an analytical formula in terms of $\theta$ and $x$. In particular, there exists in general several such states $s(\theta, x)$ that satisfy Eq.~(\ref{eq:free-equilibrium-state}). We further point out that $s(\theta, x)$ need not be a minimum of the energy function $E$ ; it may be a maximum or more generally any saddle point of $E$.

We note that, just like the energy function $E$, the cost function $C$ is defined for any conceivable configuration $s$, not just the equilibrium state $s(\theta, x)$. Although $C(s, y)$ may depend on the entire state $s$, in practical situations that we will study in the next chapters, $C(s, y)$ depends only on a subset of $s$ that plays the role of `outputs'.

We also introduce another key concept: the concept of \textit{sum-separability}. Let $\theta = (\theta_1, \ldots, \theta_N)$ be the adjustable parameters of the system. For each $\theta_k$, we denote $\{x, s\}_k$ the information about $(x, s)$ which is locally available to $\theta_k$. We say that the energy function $E$ is \textit{sum-separable} if it is of the form
\begin{equation}
    \label{eq:sum-separability}
    E(\theta, x, s) = E_0(x, s) + \sum_{k=1}^N E_k(\theta_k, \{x, s\}_k),
\end{equation}
where $E_0(x, s)$ is a term that is independent of the parameters to be adjusted, and $E_k$ is a scalar function of $\theta_k$ and $\{x, s\}_k$, for each $k \in \{ 1, \ldots, N \}$. Importantly, many physical systems are energy-based models, many of which have the sum-separability property; we give examples in section \ref{sec:ebms-examples}.

\section{Gradient Formula}
\label{sec:loss-gradients}

The central ingredient of the equilibrium propagation training method is the \textit{total energy function} $F$, defined by $F = E + \beta \; C$, where $\beta$ is a real-valued variable called \textit{nudging factor}. The intuition here is that we augment the energy of the system by bringing an additional energy term $\beta C$. By varying $\beta$, the total energy $F$ is modified, and so is the equilibrium state relative to $F$. Specifically, assuming that the functions $E$ and $C$ are continuously differentiable, there exists a continuous mapping $\beta \mapsto s_\star^\beta$ such that $s_\star^0 = s(\theta, x)$ and\footnote{We note that it is also possible to define $s_\star^\beta$ differently, by the relationship $\frac{\partial E}{\partial s}(\theta, x, s_\star^\beta) + \beta \; \frac{\partial C}{\partial s}(s(\theta, x), y) = 0$, without changing the conclusions of Theorem \ref{thm:static-eqprop}. In Chapter \ref{chapter:neuromorphic} we will use this modified definition of $s_\star^\beta$.}
\begin{equation}
    \label{eq:nudged-steady-state}
    \frac{\partial E}{\partial s}(\theta, x, s_\star^\beta) + \beta \; \frac{\partial C}{\partial s}(s_\star^\beta, y) = 0
\end{equation}
for any value of the nudging factor $\beta$. Theorem \ref{thm:static-eqprop} provides a formula to compute the loss gradients by varying the nudging factor $\beta$.

\medskip

\begin{thm}[Gradient formula for energy-based models]
\label{thm:static-eqprop}
The gradient of the loss is equal to
\begin{equation}
    \label{eq:static-eqprop}
    \frac{\partial \mathcal{L}}{\partial \theta}(\theta, x, y) = \left. \frac{d}{d\beta} \right|_{\beta=0} \frac{\partial E}{\partial \theta} \left( \theta, x, s_\star^\beta \right).
\end{equation}
Furthermore, if the energy function $E$ is sum-separable, then the loss gradient for each parameter $\theta_k$ depends only on information that is locally available to $\theta_k$:
\begin{equation}
    \label{eq:static-eqprop-local}
    \frac{\partial \mathcal{L}}{\partial \theta_k}(\theta, x, y) = \left. \frac{d}{d\beta} \right|_{\beta=0} \frac{\partial E_k}{\partial \theta_k} \left( \theta_k, \{x, s_\star^\beta\}_k \right).
\end{equation}
\end{thm}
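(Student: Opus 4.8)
The plan is to reduce the identity of Eq.~(\ref{eq:static-eqprop}) to the symmetry of mixed second-order partial derivatives (Schwarz's theorem), applied to a single scalar function: the value of the total energy $F = E + \beta\,C$ at its own nudged equilibrium. First I would make the analytic setup precise. Assuming $E$ and $C$ are twice continuously differentiable in $s$ and that the Hessian $\frac{\partial^2 E}{\partial s^2}$ is invertible at the equilibrium state $s(\theta,x)$, the implicit function theorem applied to Eq.~(\ref{eq:nudged-steady-state}) — viewed as an equation determining $s$ from $(\theta,\beta)$, with $s$-derivative $\frac{\partial^2 E}{\partial s^2} + \beta\,\frac{\partial^2 C}{\partial s^2}$, which stays invertible for $\beta$ near $0$ — promotes the continuous selection $\beta \mapsto s_\star^\beta$ to a continuously differentiable map $(\theta,\beta) \mapsto s_\star^\beta$ on a neighborhood of $\beta = 0$. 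I then introduce the equilibrium value of the total energy, $\Phi(\theta,\beta) := E(\theta,x,s_\star^\beta) + \beta\,C(s_\star^\beta,y)$, which is then $C^2$ near $\beta = 0$.

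The core step is an envelope computation followed by Schwarz's theorem. Differentiating $\Phi$ and using the stationarity condition Eq.~(\ref{eq:nudged-steady-state}) — which says exactly that $\frac{\partial}{\partial s}(E + \beta\,C)$ vanishes at $s_\star^\beta$ — to annihilate the terms coming from $\frac{\partial s_\star^\beta}{\partial\theta}$ and $\frac{\partial s_\star^\beta}{\partial\beta}$, together with the fact that $C$ does not depend on $\theta$, I obtain the envelope identities
\begin{equation}
    \frac{\partial \Phi}{\partial\theta}(\theta,\beta) = \frac{\partial E}{\partial\theta}(\theta,x,s_\star^\beta), \qquad \frac{\partial \Phi}{\partial\beta}(\theta,\beta) = C(s_\star^\beta,y).
\end{equation}
Equality of the mixed partials of $\Phi$ then yields
\begin{equation}
    \frac{\partial}{\partial\beta}\,\frac{\partial E}{\partial\theta}(\theta,x,s_\star^\beta) = \frac{\partial}{\partial\theta}\,C(s_\star^\beta,y).
\end{equation}
Evaluating at $\beta = 0$ and using $s_\star^0 = s(\theta,x)$, the right-hand side becomes $\frac{\partial}{\partial\theta}\,C(s(\theta,x),y) = \frac{\partial\mathcal{L}}{\partial\theta}(\theta,x,y)$, while the left-hand side is precisely $\left. \frac{d}{d\beta} \right|_{\beta=0} \frac{\partial E}{\partial\theta}(\theta,x,s_\star^\beta)$; this is Eq.~(\ref{eq:static-eqprop}). (Equivalently, one may bypass $\Phi$ and differentiate Eqs.~(\ref{eq:free-equilibrium-state}) and~(\ref{eq:nudged-steady-state}) directly via the chain rule; both sides of Eq.~(\ref{eq:static-eqprop}) then reduce to $-\,\frac{\partial^2 E}{\partial s\,\partial\theta}\big(\frac{\partial^2 E}{\partial s^2}\big)^{-1}\frac{\partial C}{\partial s}$, and agree because the $s$-Hessian of $E$ is symmetric — which is again Schwarz's theorem in disguise. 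An analogous argument also handles the variant definition of $s_\star^\beta$ mentioned in the footnote.)

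For the sum-separability refinement I would simply note that, under Eq.~(\ref{eq:sum-separability}), the term $E_0(x,s)$ and every summand $E_j(\theta_j, \{x,s\}_j)$ with $j \neq k$ are independent of $\theta_k$, so $\frac{\partial E}{\partial\theta_k}(\theta,x,s) = \frac{\partial E_k}{\partial\theta_k}(\theta_k, \{x,s\}_k)$ as a function of $s$; substituting $s = s_\star^\beta$ and differentiating in $\beta$ at $0$ turns Eq.~(\ref{eq:static-eqprop}) into Eq.~(\ref{eq:static-eqprop-local}). The right-hand side of Eq.~(\ref{eq:static-eqprop-local}) involves only $\theta_k$ and the locally available data $\{x, s_\star^\beta\}_k$, which is the claimed locality.

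The part I expect to require the most care is not the algebra but the regularity bookkeeping: pinning down the smoothness and non-degeneracy hypotheses on $E$ and $C$ under which $\beta \mapsto s_\star^\beta$ is a well-defined $C^1$ curve through $\beta = 0$, under which $\Phi$ is $C^2$, and under which Schwarz's theorem legitimately applies. With an isolated, non-degenerate equilibrium (invertible $s$-Hessian of $E$) this is standard, and I would state that hypothesis explicitly — while remarking that the conclusion can also be recovered, through a more hands-on first-order expansion, under the bare ``continuously differentiable'' assumption stated in the theorem.
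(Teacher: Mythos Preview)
Your proposal is correct and is essentially the same argument as the paper's: the paper packages the envelope computation plus symmetry of mixed partials into a standalone Lemma~\ref{lma:main} (applied to $F(\theta,\beta,s)=E(\theta,x,s)+\beta\,C(s,y)$), whose proof is exactly your introduction of $\Phi(\theta,\beta)=F(\theta,\beta,s_\star^\beta)$, the two envelope identities, and Schwarz's theorem. Your treatment of the regularity hypotheses (implicit function theorem under an invertible $s$-Hessian) and of the sum-separable case also match the paper's.
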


\begin{proof}
Eq.~(\ref{eq:static-eqprop}) is a consequence of Lemma \ref{lma:main} (Section \ref{chapter:fundamental-lemma}) applied to the total energy function\footnote{With the modified definition of $s_\star^\beta$, the total energy function to consider is $F(\theta, \beta, s) = E(\theta, x, s) + \beta \; C(s(\theta, x), y)$.}
$F$, defined for a fixed input-target pair $(x, y)$ by $F(\theta, \beta, s) = E(\theta, x, s) + \beta \; C(s, y)$, at the point $\beta=0$. Eq.~(\ref{eq:static-eqprop-local}) is a consequence of Eq.~(\ref{eq:static-eqprop}) and the definition of sum-separability (Eq.~(\ref{eq:sum-separability})).
\end{proof}

\section{Equilibrium Propagation}
\label{sec:equilibrium-propagation}

We can use Theorem \ref{thm:static-eqprop} to derive a learning algorithm for energy-based models. Let us assume that the energy function is sum-separable. We can estimate the loss gradients using finite differences, for example with
\begin{equation}
\label{eq:one-sided-estimator}
\widehat{\nabla}_{\theta_k}(\beta) = \frac{1}{\beta} \left( \frac{\partial E_k}{\partial \theta_k}(\theta_k, \{ x, s_\star^\beta \}_k) - \frac{\partial E_k}{\partial \theta_k}(\theta_k, \{ x, s_\star^0 \}_k) \right)
\end{equation}
to approximate the right-hand side of Eq.~(\ref{eq:static-eqprop-local}). We arrive at the following two-phase training procedure to update the parameters in proportion to their loss gradients.

\paragraph{Free phase (inference).}
The nudging factor $\beta$ is set to zero, and the system settles to an equilibrium state $s_\star^0$, characterized by Eq.~(\ref{eq:free-equilibrium-state}). We call $s_\star^0$ the \textit{free state}. For each parameter $\theta_k$, the quantity $\frac{\partial E_k}{\partial \theta_k} \left( \theta_k, \{x, s_\star^0\}_k \right)$ is measured locally and stored locally.

\paragraph{Nudged phase.}
The nudging factor $\beta$ is set to a nonzero value (positive or negative), and the system settles to a new equilibrium state $s_\star^\beta$, characterized by Eq.~(\ref{eq:nudged-steady-state}). We call $s_\star^\beta$ the \textit{nudged state}. For each parameter $\theta_k$, the quantity $\frac{\partial E_k}{\partial \theta_k} \left( \theta_k, \{x, s_\star^\beta\}_k \right)$ is measured locally.

\paragraph{Update rule.}
Finally, each parameter $\theta_k$ is updated locally in proportion to its gradient as $\Delta \theta_k = - \eta \widehat{\nabla}_{\theta_k}(\beta)$, where $\eta$ is a learning rate and $\widehat{\nabla}_{\theta_k}(\beta)$ is the gradient estimator of Eq.~(\ref{eq:one-sided-estimator}).

\bigskip

The training scheme described above is natural because the free phase and the nudged phase can be related to the standard training procedure for neural networks (the backpropagation algorithm), in which there is an inference phase (forward pass) followed by a gradient computation phase (backward pass). However, due to the approximation of derivatives by finite differences, the gradient estimator prescribed by the above training scheme is biased. As detailed in Appendix \ref{chapter:appendix}, the mismatch between this gradient estimator ($\widehat{\nabla}_{\theta_k}(\beta)$) and the true gradient ($\frac{\partial \mathcal{L}}{\partial \theta_k}$) is of the order $O(\beta)$. As proposed in \citet{laborieux2020scaling}, this bias can be reduced by means of a symmetric gradient estimator:
\begin{equation}
\label{eq:two-sided-estimator}
\widehat{\nabla}_{\theta_k}^{\rm sym}(\beta) = \frac{1}{2 \beta} \left( \frac{\partial E_k}{\partial \theta_k} \left( \theta_k, \{x, s_\star^\beta\}_k \right) -  \frac{\partial E_k}{\partial \theta_k} \left( \theta_k, \{x, s_\star^{-\beta}\}_k \right) \right).
\end{equation}
To achieve this, we can modify the above training procedure to include two nudged phases: one with positive nudging ($+\beta$) and one with negative nudging ($-\beta$). The update rule for parameter $\theta_k$ is then $\Delta \theta_k = - \eta \widehat{\nabla}_{\theta_k}^{\rm sym}(\beta)$. The mismatch between this symmetric gradient estimator and the true gradient is only of the order $O(\beta^2)$.
We note that higher order methods which use more point values of $\beta$ are also possible, to further reduce the gradient estimator bias (e.g. with $+2\beta$, $+\beta$, $-\beta$ and $-2\beta$).

We call such training procedures \textit{equilibrium propagation} (EqProp), with the intuition that the equilibrium state $s_\star^\beta$ `propagates' across the system as $\beta$ is varied.

\section{Examples of Sum-Separable Energy-Based Models}
\label{sec:ebms-examples}

We give here a few examples of sum-separable energy-based models. As a first example, the Hopfield model is useful to develop intuitions, as it is a well-known and well-studied model in both the machine learning literature and the neuroscience literature ; the case of continuous Hopfield networks will be developed in Chapter \ref{chapter:hopfield}. We then briefly present resistive networks, which will be developed in Chapter \ref{chapter:neuromorphic}. Nonlinear resistive networks are potentially promising for the development of neuromorphic hardware, towards the goal of building fast and energy efficient neural networks. We also briefly present another couple of instances of energy-based physical systems, such as flow networks, and elastic networks. All these systems can be trained with EqProp.

\paragraph{Hopfield networks.}
In the Hopfield model \citep{hopfield1982neural} and its continuous version \citep{cohen1983absolute,hopfield1984neurons}, neurons are interconnected via bi-directional synapses. Each neuron $i$ is characterised by a scalar $s_i$, and each synapse connecting neurons $i$ and $j$ is characterised by a number $W_{ij}$ representing the synaptic strength. The energy function of the model, called \textit{Hopfield energy}, is of the form
\begin{equation}
    \label{eq:hopfield-energy-generic}
    E(\theta, s) = - \sum_{i, j} W_{ij} s_i s_j
\end{equation}
(or a variant of Eq.~\ref{eq:hopfield-energy-generic}). In this expression, the vector of neural states $s = (s_1, s_2, \ldots, s_N)$ represents the state variable of the system, and the vector of synaptic strengths $\theta = \{ W_{ij} \}_{i, j}$ represents the parameter variable (the set of adjustable parameters). At inference, neurons stabilize to a minimum of the energy function, where the condition $\frac{\partial E}{\partial s} = 0$ is met. Furthermore, the Hopfield energy is sum-separable with each factor of the form $E_{ij}(W_{ij}, s_i, s_j) = - W_{ij} s_i s_j$. Since the energy gradients are equal to $\frac{\partial E_{ij}}{\partial W_{ij}} = - s_i s_j$, the Hopfield model can be trained with EqProp using a sort of contrastive Hebbian learning rule (Chapter \ref{chapter:hopfield}).

\paragraph{Resistive networks.}
A linear resistance network is an electrical circuit composed of nodes interconnected by linear resistors. Let $N$ be the number of nodes in the circuit, and denote $V = (V_1, \ldots, V_N)$ the vector of node voltages. Since the power dissipated in a resistor of conductance $g_{ij}$ is $\mathcal{P}_{ij} = g_{ij} \left(V_j-V_i \right)^2$, where $V_i$ and $V_j$ are the terminal voltages of the resistor, the total power dissipated in the circuit is
\begin{equation}
    \mathcal{P}(\theta, V) = \sum_{i, j} g_{ij} \left(V_j-V_i \right)^2,
\end{equation}
where $\theta = \{ g_{ij} \}_{i, j}$ is the set of conductances of the circuit, which plays the role of `adjustable parameters'. Notably, linear resistance networks satisfy the so-called \textit{principle of minimum dissipated power}: if the voltages are imposed at a set of input nodes, then the circuit chooses the voltages at other nodes so as to minimize the total power dissipated ($\mathcal{P}$). This implies in particular that $\frac{\partial \mathcal{P}}{\partial V_i} = 0$ for any floating node voltage $V_i$. Thus, linear resistance networks are energy-based models, with $\mathcal{P}$ playing the role of `energy function'. Furthermore, the function $\mathcal{P}$ has the sum-separability property, with each factor of the form $\mathcal{P}_{ij}(g_{ij}, V_i, V_j) = g_{ij} (V_i - V_j)^2$, and each gradient equal to $\frac{\partial \mathcal{P}_{ij}}{\partial g_{ij}} = (V_i - V_j)^2$. Crucially, as we will see in Chapter \ref{chapter:neuromorphic}, in circuits consisting of arbitrary resistive devices, there exists a generalization of the notion of power function $\mathcal{P}$ called \textit{co-content} \citep{millar1951cxvi}. Such circuits, called \textit{nonlinear resistive networks}, can implement analog neural networks, using memristors (to implement the synaptic weights), diodes (to play the role of nonlinearities), voltage sources (to set the voltages of input nodes) and current sources (to inject loss gradients as currents during training).

\paragraph{Flow networks.}
The EqProp framework may also have implications in other areas of engineering, beyond neuromorphic computing. For example, \citet{stern2020supervised} study the case of \textit{flow networks}, e.g. networks of nodes interconnected by pipes. This setting is analogous to the case of resistive networks described above. In a flow network, each node $i$ is described by its pressure $p_i$, and each pipe connecting node $i$ to node $j$ is characterized by its conductance $k_{ij}$. The total dissipated power in the network, which is minimized, is $\mathcal{P}(\theta, p) = \sum_{i, j} k_{ij} \left(p_j-p_i \right)^2$, where $\theta = \{ k_{ij} \}$ is the set of parameters to be adjusted, and $p = \{ p_{ij} \}$ plays the role of the state variable of the system.

\paragraph{Elastic networks.}
\citet{stern2020supervised} also study the case of \textit{central force spring networks}. In this setting, we have a set of $N$ nodes interconnected by linear springs. Each node $i$ is characterized by its 3D position $s_i$. The elastic energy stored in the spring connecting node $i$ to node $j$ is $E_{ij} = \frac{1}{2} k_{ij} \left( r_{ij} - \ell_{ij} \right)^2$, where $k_{ij}$ is the spring constant, $\ell_j$ is the spring's equilibrium length, and $r_{ij} = \| s_i - s_j \|$ is the Euclidean distance between nodes $i$ and $j$. Thus, the total elastic energy stored in the network, which is minimized, is given by
\begin{equation}
    E(\theta,r) = \sum_{i,j} \frac{1}{2} k_{ij} \left( r_{ij} - \ell_{ij} \right)^2,
\end{equation}
where $\theta = \{ k_{ij}, \ell_{ij} \}$ is the set of adjustable parameters, and $r = \{ r_{ij} \}$ plays the role of state variable. The energy gradients in this case are $\frac{\partial E_{ij}}{\partial k_{ij}} =  \frac{1}{2} \left( r_{ij} - \ell_{ij} \right)^2$ and $\frac{\partial E_{ij}}{\partial \ell_{ij}} =  k_{ij} \left( \ell_{ij} - r_{ij} \right)$.

\section{Fundamental Lemma}
\label{chapter:fundamental-lemma}

In this section, we present the fundamental lemma of the equilibrium propagation framework, from which Theorem \ref{thm:static-eqprop} derives.

\medskip

\begin{lma}[\citet{scellier2017equilibrium}]
\label{lma:main}
Let $F(\theta, \beta, s)$ be a twice differentiable function of the three variables $\theta$, $\beta$ and $s$. For fixed $\theta$ and $\beta$, let $s_\theta^\beta$ be a point that satisfies the stationarity condition
\begin{equation}
\label{eq:stationary}
\frac{\partial F}{\partial s}(\theta, \beta, s_\theta^\beta) = 0,
\end{equation}
and suppose that $\frac{\partial^2 F}{\partial s^2}(\theta, \beta, s_\theta^\beta)$ is invertible. Then, in the neighborhood of this point, we can define a continuously differentiable function $(\theta, \beta) \mapsto s_\theta^\beta$ such that Eq.~\ref{eq:stationary} holds for any $(\theta, \beta)$ in this neighborhood. Furthermore, we have the following identity:
\begin{equation}
\label{eq:fund-equation-main-lemma}
\frac{d}{d\theta} \frac{\partial F}{\partial \beta}(\theta, \beta, s_\theta^\beta) = \frac{d}{d\beta} \frac{\partial F}{\partial \theta}(\theta, \beta, s_\theta^\beta).
\end{equation}
\end{lma}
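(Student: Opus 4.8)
The plan is to reduce the claimed identity to the symmetry of mixed second partial derivatives (Schwarz's theorem), applied to the value of $F$ at its stationary point. First I would invoke the implicit function theorem: since $\frac{\partial^2 F}{\partial s^2}(\theta,\beta,s_\theta^\beta)$ is invertible at the base point and (interpreting ``twice differentiable'' as $C^2$) the map $\frac{\partial F}{\partial s}$ is continuously differentiable, there is a neighborhood of $(\theta,\beta)$ on which the stationarity equation $\frac{\partial F}{\partial s}(\theta,\beta,s_\theta^\beta)=0$ admits a unique, continuously differentiable solution $(\theta,\beta)\mapsto s_\theta^\beta$. This proves the first assertion of the lemma and legitimizes all the differentiations that follow.

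Next I would introduce the reduced (``equilibrium'') function $G(\theta,\beta):=F(\theta,\beta,s_\theta^\beta)$ and differentiate it by the chain rule. For the $\theta$-derivative this gives
\begin{equation}
\frac{\partial G}{\partial\theta}(\theta,\beta)=\frac{\partial F}{\partial\theta}(\theta,\beta,s_\theta^\beta)+\frac{\partial F}{\partial s}(\theta,\beta,s_\theta^\beta)\,\frac{\partial s_\theta^\beta}{\partial\theta},
\end{equation}
and analogously for $\frac{\partial G}{\partial\beta}$. The decisive step --- the envelope argument --- is that the stationarity condition $\frac{\partial F}{\partial s}(\theta,\beta,s_\theta^\beta)=0$ annihilates the terms carrying $\frac{\partial s_\theta^\beta}{\partial\theta}$ and $\frac{\partial s_\theta^\beta}{\partial\beta}$, so that
\begin{equation}
\frac{\partial G}{\partial\theta}(\theta,\beta)=\frac{\partial F}{\partial\theta}(\theta,\beta,s_\theta^\beta),\qquad\frac{\partial G}{\partial\beta}(\theta,\beta)=\frac{\partial F}{\partial\beta}(\theta,\beta,s_\theta^\beta).
\end{equation}

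To finish I would observe that $G$ is of class $C^2$: the functions $\frac{\partial F}{\partial\theta}$ and $\frac{\partial F}{\partial\beta}$ are $C^1$ because $F$ is $C^2$, and $(\theta,\beta)\mapsto s_\theta^\beta$ is $C^1$ by the implicit function theorem, so the right-hand sides above --- hence the first partials of $G$ --- are $C^1$. Schwarz's theorem then yields $\frac{\partial^2 G}{\partial\beta\,\partial\theta}=\frac{\partial^2 G}{\partial\theta\,\partial\beta}$, and substituting the two displayed expressions turns this equality into exactly Eq.~(\ref{eq:fund-equation-main-lemma}).

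The argument is short and I do not expect a genuine obstacle; the only points requiring care are (i) the regularity bookkeeping --- one really wants $F\in C^2$, not merely the bare existence of second partials, so that the implicit function theorem produces a $C^1$ branch and Schwarz's theorem applies --- and (ii) the vector-valued case, where $\theta$ and $s$ may be vectors: then the ``derivatives'' above are Jacobians and gradients of the matching shapes, the product $\frac{\partial F}{\partial s}\,\frac{\partial s_\theta^\beta}{\partial\theta}$ is a matrix product, and Eq.~(\ref{eq:fund-equation-main-lemma}) is read componentwise in $\theta$. Neither point affects the logic of the proof.
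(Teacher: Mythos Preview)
Your proof is correct and follows essentially the same approach as the paper: both introduce the reduced function $G(\theta,\beta)=F(\theta,\beta,s_\theta^\beta)$, use the stationarity condition to kill the $\frac{\partial F}{\partial s}$ terms in the chain rule (the envelope argument), and then invoke the symmetry of second derivatives of $G$. Your additional remarks on the regularity bookkeeping and the vector-valued interpretation are welcome clarifications that the paper's proof leaves implicit.
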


\begin{proof}[Proof of Lemma \ref{lma:main}]
The first statement follows from the implicit function theorem. It remains to prove Eq.~\ref{eq:fund-equation-main-lemma}. Let us consider $F \left(\theta, \beta, s_\theta^\beta \right)$ as a function of $(\theta, \beta)$ (not only through $F(\theta, \beta, \cdot)$ but also through $s_\theta^\beta$). Using the chain rule of differentiation and the stationary condition of Eq.~(\ref{eq:stationary}), we have
\begin{equation}
  \frac{d}{d\beta} F(\theta, \beta, s_\theta^\beta) = \frac{\partial F}{\partial \beta} \left( \theta, \beta, s_\theta^\beta \right)
  + \underbrace{\frac{\partial F}{\partial s} \left( \theta, \beta, s_\theta^\beta \right)}_{= \; 0} \cdot \frac{\partial s_\theta^\beta}{\partial \beta}.
\end{equation}
Similarly, we have
\begin{equation}
  \frac{d}{d\theta} F(\theta, \beta, s_\theta^\beta) = \frac{\partial F}{\partial \theta} \left( \theta, \beta, s_\theta^\beta \right)
  + \underbrace{\frac{\partial F}{\partial s} \left( \theta, \beta, s_\theta^\beta \right)}_{= \; 0} \cdot \frac{\partial s_\theta^\beta}{\partial \theta}.
\end{equation}
Combining these equations and using the symmetry of second-derivatives, we get:
\begin{equation}
\frac{d}{d\theta} \frac{\partial F}{\partial \beta}(\theta, \beta, s_\theta^\beta) = \frac{d}{d\theta} \frac{d}{d\beta} F(\theta, \beta, s_\theta^\beta) =  \frac{d}{d\beta} \frac{d}{d\theta} F(\theta, \beta, s_\theta^\beta) = \frac{d}{d\beta} \frac{\partial F}{\partial \theta}(\theta, \beta, s_\theta^\beta).
\end{equation}
\end{proof}

\section{Remarks}
\label{sec:remarks}

\paragraph{Stationary points.}
In the static setting presented in this chapter, EqProp applies to any system whose equilibrium states satisfy the stationary condition of Eq~\ref{eq:free-equilibrium-state} -- what we have called an \textit{energy-based model} (EBM). While early works \citep{scellier2017equilibrium,scellier2019equivalence} proposed to apply EqProp to EBMs whose equilibrium states are minima of the energy function (e.g. Hopfield networks), we stress here that the equilibrium states may more generally be any stationary points (saddle points or maxima) of the energy function. We note that the landscape of the energy function can contain in general exponentially many more stationary points than (local) minima. For instance, the recently introduced `modern Hopfield networks' \citep{ramsauer2020hopfield} are EBMs in the sense of Eq~\ref{eq:free-equilibrium-state}.

\paragraph{Infinite dimensions.}
In section \ref{sec:ebms-examples}, we have given examples of EBMs in which the state variable $s$ has finitely many dimensions. We note however that $s$ may also belong to an infinite dimensional space (mathematically, a Banach space). In this case, the expression $\frac{\partial F}{\partial s} \left( \theta, \beta, s_\theta^\beta \right) \cdot \frac{\partial s_\theta^\beta}{\partial \beta}$ in Lemma \ref{lma:main} must be thought of as the differential of the function $F \left( \theta, \beta, \cdot \right)$ at the point $s_\theta^\beta$, applied to the vector $\frac{\partial s_\theta^\beta}{\partial \beta}$.

\paragraph{Variational principles of physics.}
In physics, many systems can be described by a variational equation of the form of Eq.~\ref{eq:free-equilibrium-state} ; we have given examples in Section \ref{sec:ebms-examples}. In fact, the framework presented here transfers directly to time-varying physical systems (Chapter \ref{chapter:future}). In this case, $s$ must be thought of as the \textit{trajectory} of the system, $E$ as a functional called \textit{action functional}, and the stationary condition of Eq.~\ref{eq:free-equilibrium-state} as a \textit{principle of stationary action}.

\paragraph{Singularities.}
We have proved Theorem \ref{thm:static-eqprop} using Lemma \ref{lma:main}. Yet the formula of Lemma \ref{lma:main} assumes that the Hessian $\frac{\partial^2 E}{\partial s^2}(\theta, x, s(\theta, x))$ is invertible. While this assumption is likely to be valid at most iterations of training, it is also likely that, as $\theta$ evolves during training, $\theta$ goes through values where the Hessian of the energy is singular for some input $x$. At such points, it is not clear how the update rule of EqProp behaves. One branch of mathematics that studies these aspects is \textit{Catastrophe Theory}. Although these aspects raise interesting questions, diving into these questions would take us far from the main thrust of this manuscript. In this manuscript, we pretend everything is differentiable.

\paragraph{Beyond supervised learning.}
Although we have focused on supervised learning, the framework presented in this chapter can be adapted to other machine learning paradigms. For example, we note that the formula of Theorem \ref{thm:static-eqprop} can be directly transposed to compute the loss gradients with respect to input variables of the network:
\begin{equation}
    \frac{\partial \mathcal{L}}{\partial x}(\theta, x, y) = \left. \frac{d}{d\beta} \right|_{\beta=0} \frac{\partial E}{\partial x} \left( \theta, x, s_\star^\beta \right).
\end{equation}
This formula may be useful in applications where one wants to do gradient descent in the input space, e.g. image synthesis. This may also be useful in the setting of generative adversarial networks \citep{goodfellow2014generative}, in which we need to compute the loss gradients with respect to inputs of the discriminator network, to further propagate error signals in the generator network. The framework presented in this chapter may also be adapted to model-free reinforcement learning algorithms such as temporal difference (TD) learning (e.g. Q-learning). Finally, the EqProp training procedure has also been used in the meta-learning setting to train the meta-parameters of a model, a method called \textit{contrastive meta-learning} \citep{zucchet2021contrastive}. We briefly present this framework in section \ref{sec:contrastive-meta-learning}.
\chapter{Training Continuous Hopfield Networks with Equilibrium Propagation}
\label{chapter:hopfield}

In the previous chapter, we have presented equilibrium propagation (EqProp) as a general learning algorithm for energy-based models. In this chapter, we use EqProp to train a class of energy-based models called gradient systems. In particular we apply EqProp to continuous Hopfield networks, a class of of neural networks that has inspired neuroscience and neuromorphic computing since the 1980s. The present chapter is essentially a compilation of \citet{scellier2017equilibrium,scellier2019equivalence}, and is organized as follows.
\begin{itemize}
    \item In section \ref{sec:gradient-system}, we apply EqProp to a class of continuous-time dynamical systems called \textit{gradient systems}. In a gradient system, the state dynamics descend the gradient of a scalar function (the \textit{energy function}) and stabilise to a minimum of that energy function. Thus, in this setting, equilibrium states correspond to energy minima. We provide an analytical formula for the transient states of the system between the free state and the nudged state of the EqProp training process, and we link these transient states to the recurrent backpropagation algorithm of \citet{almeida1987learning} and \citet{pineda1987generalization}.
    \item In section \ref{sec:hopfield-model}, we apply EqProp to the continuous Hopfield model, an energy-based neural network model described by an energy function called the \textit{Hopfield energy}. The gradient dynamics associated with the Hopfield energy yields the neural dynamics in Hopfield networks: neurons are seen as leaky integrator neurons, with the constraint that synapses are bidirectional and symmetric. In addition, the update rule of EqProp for each synapse is local (more specifically Hebbian).
    \item In section \ref{sec:experiments-hopfield}, we present numerical experiments on deep Hopfield networks trained with EqProp on the MNIST digit classification task.
    \item In section \ref{sec:contrastive-hebbian-learning}, we study the relationship between EqProp and the contrastive Hebbian learning algorithm of \citet{movellan1991contrastive}.
\end{itemize}

\section{Gradient Systems}
\label{sec:gradient-system}

In this section, we present a theoretical result which holds for arbitrary energy functions and cost functions. This section, which deals with the concepts of energy and cost functions in the abstract, is largely independent of the rest of this chapter. The reader who is eager to see how Hopfield networks can be trained with EqProp may skip this section and go straight to the next one.

\subsection{Gradient Systems as Energy-Based Models}

We have seen in Chapter \ref{chapter:eqprop} that EqProp is an algorithm to train systems that possess \textit{equilibrium states}, i.e. states characterized by a variational equation of the form $\frac{\partial E}{\partial s} \left( \theta, x, s_\star \right) = 0$, where $E \left( \theta, x, s \right)$ is a scalar function called \textit{energy function}. Recall that $\theta$ is the set of adjustable parameters of the system, and $x$ is an input. We have called such systems \textit{energy-based models}. The class of energy-based models that we study here is that of systems whose dynamics spontaneously minimizes the energy function $E$ by following its gradient. In such a system, called \textit{gradient system}, the state follows the dynamics
\begin{equation}
\label{eq:continuous-time-free-phase}
\frac{d s_t}{dt} = - \frac{\partial E}{\partial s} \left( \theta, x, s_t \right).
\end{equation}
Here $s_t$ denotes the state of the system at time $t$. The energy of the system decreases until $\frac{d s_t}{dt} = 0$, and the equilibrium state $s_\star$ reached after convergence of the dynamics is an energy minimum (either local or global). The function $E$ is also sometimes called a \textit{Lyapunov function} for the dynamics of $s_t$.

In this setting, equilibrium states are \textit{stable}: if the state is slightly perturbed around equilibrium, the dynamics will tend to bring the system back to equilibrium. For this reason, such equilibrium states are also called `attractors' or `retrieval states', because the system's dynamics can `retrieve' them if they are only partially known. Thus, gradient systems can recover incomplete data, by storing `memories' in their point attractors.

In this manuscript, we are more specifically interested in the supervised learning problem, where the loss to optimize is of the form
\begin{equation}
\mathcal{L} = C(s_\star, y).
\end{equation}
After training is complete, the model can be used to `retrieve' a label $y$ associated to a given input $x$.

\subsection{Training Gradient Systems with Equilibrium Propagation}

In a gradient system, EqProp takes the following form. In the first phase, or free phase, the state of the system follows the gradient of the energy (Eq.~\ref{eq:continuous-time-free-phase}). At the end of the first phase the system is at equilibrium ($s_\star$). In the second phase, or nudged phase, starting from the equilibrium state $s_\star$, a term $- \beta \; \frac{\partial C}{\partial s}$ (where $\beta > 0$ is a hyperparameter called \textit{nudging factor}) is added to the dynamics of the state and acts as an external force nudging the system dynamics towards decreasing the cost $C$. Denoting $s_t^\beta$ the state of the system at time $t$ in the second phase (which depends on the value of the nudging factor $\beta$), the dynamics is defined as\footnote{As discussed in the case of Eq.~\ref{eq:nudged-steady-state}, we can also define $\frac{d s_t^\beta}{dt}  = -\frac{\partial E}{\partial s} \left( \theta, x, s_t^\beta \right) - \beta \; \frac{\partial C}{\partial s} \left( s_\star, y \right)$ without changing the conclusions of the theoretical results.}
\begin{equation}
\label{eq:continuous-time-nudged-phase}
    s_0^\beta = s_\star \qquad \text{and} \qquad \forall t \geq 0, \quad \frac{d s_t^\beta}{dt}  = -\frac{\partial E}{\partial s} \left( \theta, x, s_t^\beta \right) - \beta \; \frac{\partial C}{\partial s} \left( s_t^\beta, y \right).
\end{equation}
The system eventually settles to a new equilibrium state $s_\star^\beta$. Recall from Theorem \ref{thm:static-eqprop} that the gradient of the loss $\mathcal{L}$ can be estimated based on the two equilibrium states $s_\star$ and $s_\star^\beta$. Specifically, in the limit $\beta \to 0$,
\begin{equation}
\label{eqprop-gradient-system}
\lim_{\beta \to 0} \frac{1}{\beta} \left( \frac{\partial E}{\partial \theta} \left( x, s_\star^\beta, \theta \right) -  \frac{\partial E}{\partial \theta} \left( \theta, x, s_\star \right) \right) = \frac{\partial \mathcal{L}}{\partial \theta}.
\end{equation}
Furthermore, if the energy function has the sum-separability property (as defined by Eq.~\ref{eq:sum-separability}), then the learning rule for each parameter $\theta_k$ is local:
\begin{equation}
\label{eqprop-gradient-system-local}
\lim_{\beta \to 0} \frac{1}{\beta} \left( \frac{\partial E_k}{\partial \theta_k} \left( \theta_k, \{ x, s_\star^\beta \}_k \right) -  \frac{\partial E_k}{\partial \theta_k} \left( \theta_k, \{ x, s_\star \}_k \right) \right) = \frac{\partial \mathcal{L}}{\partial \theta_k}.
\end{equation}

\subsection{Transient Dynamics}

Note that the learning rule of Eqs.~\ref{eqprop-gradient-system}-\ref{eqprop-gradient-system-local} only depends on the equilibrium states $s_\star$ and $s_\star^\beta$, not on the specific trajectory that the system follows to reach them. Indeed, as we have seen in the previous chapter, EqProp applies to any energy based model, not just gradient systems. But under the assumption of a gradient dynamics, we can say more about the transient states, when the system gradually moves from the free state ($s_\star$) towards the nudged state ($s_\star^\beta$): we show that the transient states ($s_t^\beta$ for $t \geq 0$) perform gradient computation with respect to a function called the \textit{projected cost function}.

Recall that in the free phase, the system follows the dynamics of Eq.~\ref{eq:continuous-time-free-phase}. In particular, the state $s_t$ at time $t \geq 0$ depends not just on $\theta$ and $x$, but also on the initial state $s_0$ at time $t=0$. Let us define the \textit{projected cost function}
\begin{equation}
	\label{eq:projected-cost-function}
	L_t(\theta, s_0) = C \left( s_t \right),
\end{equation}
where we omit $x$ and $y$ for brevity of notations. $L_t(\theta, s_0)$ is the cost of the state projected a duration $t$ in the future, when the system starts from $s_0$ and follows the dynamics of the free phase. Note that $L_t$ depends on $\theta$ and $s_0$ (as well as $x$) implicitly through $s_t$. For fixed $s_0$, the process $\left( L_t(\theta, s_0) \right)_{t \geq 0}$ represents the successive cost values taken by the state of the system along the free dynamics when it starts from the initial state $s_0$. In particular, for $t=0$, the projected cost is simply the cost of the initial state, i.e. $L_0(\theta, s_0) = C \left( s_0 \right)$. As $t \to \infty$, we have $s_t \to s_\star$ and therefore $L_t(\theta, s_0) \to C(s_\star) = \mathcal{L}(\theta)$, i.e. the projected cost converges to the loss at equilibrium.

The following result shows that the transient states of EqProp ($s_t^\beta$) can be expressed in terms of the projected cost function ($L_t$), when $s_0 = s_\star$. 

\medskip

\begin{thm}[\citet{scellier2019equivalence}]
\label{thm:truncated-eqprop}
The following identities hold for any $t \geq 0$:
\begin{gather}
	\label{eq:truncated-eqprop-parameter}
	\lim_{\beta \to 0} \frac{1}{\beta} \left(
	\frac{\partial E}{\partial \theta} \left( \theta, s_t^\beta \right) - \frac{\partial E}{\partial \theta} \left( \theta, s_\star \right)
	\right) = \frac{\partial L_t}{\partial \theta} \left( \theta, s_\star \right), \\
	\label{eq:truncated-eqprop-state}
	\lim_{\beta \to 0} \frac{1}{\beta} \; \frac{d s_t^\beta}{d t} = - \frac{\partial L_t}{\partial s} \left( \theta, s_\star \right).
\end{gather}
Furthermore, if the energy function has the sum-separability property (as defined by Eq.~\ref{eq:sum-separability}), then
\begin{equation}
	\label{eq:truncated-eqprop-local}
	\lim_{\beta \to 0} \frac{1}{\beta} \left(
	\frac{\partial E_k}{\partial \theta_k} \left( \theta_k, \{ s_t^\beta \}_k \right) - \frac{\partial E_k}{\partial \theta_k} \left( \theta_k, \{ s_\star \}_k \right)
	\right) = \frac{\partial L_t}{\partial \theta_k} \left( \theta, s_\star \right).
\end{equation}
\end{thm}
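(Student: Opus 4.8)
The plan is to linearize everything in the nudging factor around $\beta=0$ and exploit that, in this limit, the nudged trajectory collapses onto the constant equilibrium trajectory. First I would note that for $\beta=0$ the nudged dynamics of Eq.~(\ref{eq:continuous-time-nudged-phase}) starts at $s_\star$ and reads $\dot s_t^0 = -\frac{\partial E}{\partial s}(\theta,x,s_t^0)$; since $\frac{\partial E}{\partial s}(\theta,x,s_\star)=0$, the constant path $s_t^0\equiv s_\star$ is its solution. By smooth dependence of ODE solutions on a parameter — the regularity we agree to assume — $\beta\mapsto s_t^\beta$ is $C^1$ near $\beta=0$ jointly in $(t,\beta)$ on any finite time interval, so the three limits in the statement are genuine derivatives at $\beta=0$; writing $v_t := \left.\frac{\partial s_t^\beta}{\partial\beta}\right|_{\beta=0}$ and using $\dot s_t^0\equiv 0$ together with the interchange of the $\beta$- and $t$-derivatives, we get $\lim_{\beta\to0}\frac1\beta(s_t^\beta-s_\star)=v_t$ and $\lim_{\beta\to0}\frac1\beta\dot s_t^\beta=\dot v_t$.

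Next I would derive the linear ODE governing $v_t$. Differentiating Eq.~(\ref{eq:continuous-time-nudged-phase}) in $\beta$ at $\beta=0$, using $s_t^0\equiv s_\star$ and setting $H:=\frac{\partial^2 E}{\partial s^2}(\theta,x,s_\star)$, yields $\dot v_t = -Hv_t - \frac{\partial C}{\partial s}(s_\star,y)$ with $v_0=0$, since the initial condition $s_0^\beta=s_\star$ is $\beta$-independent. On the other side, write $s_t=\varphi_t(\theta,s_0)$ for the time-$t$ flow of the free dynamics (Eq.~(\ref{eq:continuous-time-free-phase})), so that $L_t(\theta,s_0)=C(\varphi_t(\theta,s_0),y)$ and hence $\frac{\partial L_t}{\partial s}(\theta,s_\star)=M_t^\top\frac{\partial C}{\partial s}(s_\star,y)$, $\frac{\partial L_t}{\partial\theta}(\theta,s_\star)=P_t^\top\frac{\partial C}{\partial s}(s_\star,y)$, where along the equilibrium trajectory the Jacobians $M_t:=\frac{\partial\varphi_t}{\partial s_0}$ and $P_t:=\frac{\partial\varphi_t}{\partial\theta}$ solve the variational equations $\dot M_t=-HM_t$, $M_0=I$, and $\dot P_t=-HP_t-G$, $P_0=0$, with $G:=\frac{\partial^2 E}{\partial\theta\,\partial s}(\theta,x,s_\star)$ (the Jacobian in $\theta$ of $\frac{\partial E}{\partial s}$).

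Then I would close the argument by solving these constant-coefficient systems with Duhamel's formula: $M_t=e^{-Ht}$, $v_t=-\int_0^t e^{-H(t-u)}\frac{\partial C}{\partial s}(s_\star,y)\,du$ and $P_t=-\int_0^t e^{-H(t-u)}G\,du$. The one structural fact the proof uses is that $H$ is symmetric because it is a Hessian, so $e^{-Ht}$ is self-adjoint; this converts $M_t^\top$ and $P_t^\top$ into the same semigroup that generates $v_t$. Using $\int_0^t He^{-Hu}\,du = I-e^{-Ht}$ we get $\frac{\partial L_t}{\partial s}(\theta,s_\star)=e^{-Ht}\frac{\partial C}{\partial s}(s_\star,y)=Hv_t+\frac{\partial C}{\partial s}(s_\star,y)=-\dot v_t$, which is Eq.~(\ref{eq:truncated-eqprop-state}); and $\frac{\partial L_t}{\partial\theta}(\theta,s_\star)=-\int_0^t G^\top e^{-H(t-u)}\frac{\partial C}{\partial s}(s_\star,y)\,du=G^\top v_t$, while the chain rule together with $s_t^0\equiv s_\star$ gives $\lim_{\beta\to0}\frac1\beta\big(\frac{\partial E}{\partial\theta}(\theta,s_t^\beta)-\frac{\partial E}{\partial\theta}(\theta,s_\star)\big)=\frac{\partial^2 E}{\partial s\,\partial\theta}(\theta,s_\star)\,v_t=G^\top v_t$, which is Eq.~(\ref{eq:truncated-eqprop-parameter}). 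Finally, under sum-separability $\frac{\partial E}{\partial\theta_k}=\frac{\partial E_k}{\partial\theta_k}(\theta_k,\{x,s\}_k)$, so Eq.~(\ref{eq:truncated-eqprop-local}) is just the $\theta_k$-block of Eq.~(\ref{eq:truncated-eqprop-parameter}).

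The main obstacle is not the algebra but the regularity bookkeeping: one must justify that $s_t^\beta$ exists and is $C^1$ in $\beta$ on all of $[0,t]$, that the equilibrium trajectory stays in the basin under small nudging, and that limits, time-derivatives and $\beta$-derivatives commute. For fixed finite $t$ this is standard smooth-dependence theory for a smooth vector field, so it is mild and no stability or invertibility of $H$ is actually needed (those only enter if one lets $t\to\infty$ to recover Theorem~\ref{thm:static-eqprop}). The genuinely load-bearing input is the symmetry of the Hessian $H$ — exactly the feature distinguishing gradient systems — which is what makes the transposed flow Jacobians in $\nabla L_t$ coincide with the semigroup generating the rescaled EqProp transients $v_t$; incidentally, the rearranged form $\dot v_t=-Hv_t-\frac{\partial C}{\partial s}(s_\star,y)$ is precisely the linearized adjoint (recurrent backpropagation) dynamics, which is the link alluded to in the chapter introduction. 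In an infinite-dimensional state space one would additionally need $H$ to generate a well-behaved semigroup, but we pretend everything is differentiable.
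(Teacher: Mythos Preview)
Your argument is correct. The route is essentially the same as the paper's, just organized differently: the paper packages the proof as Lemma~\ref{lma:rec-backprop}, showing that the two processes $(\overline{S}_t,\overline{\Theta}_t)=\big(\frac{\partial L_t}{\partial s}(\theta,s_\star),\frac{\partial L_t}{\partial\theta}(\theta,s_\star)\big)$ and $(\widetilde{S}_t,\widetilde{\Theta}_t)=\big(-\lim_{\beta\to0}\frac1\beta\dot s_t^\beta,\lim_{\beta\to0}\frac1\beta(\frac{\partial E}{\partial\theta}(\theta,s_t^\beta)-\frac{\partial E}{\partial\theta}(\theta,s_\star))\big)$ both solve the linear system $\dot S_t=-HS_t$, $\dot\Theta_t=-G^\top S_t$ with $S_0=\frac{\partial C}{\partial s}(s_\star)$, $\Theta_0=0$, and then invokes uniqueness. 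You instead work with $v_t=\partial_\beta s_t^\beta|_{\beta=0}$ (so $S_t=-\dot v_t$), solve everything explicitly via Duhamel, and match. The paper's uniqueness route is a bit cleaner since it avoids integrating and needs no inverse or commutation of $H$ with $e^{-Ht}$; your explicit route has the virtue of making the closed forms visible (the paper records them separately after the lemma). Both arguments hinge on exactly the structural point you isolate: $H=\frac{\partial^2E}{\partial s^2}(\theta,s_\star)$ is symmetric because it is a Hessian, which is what makes the transposed flow Jacobians in $\nabla L_t$ line up with the EqProp linearization. One small remark: the ``recurrent backpropagation'' ODE the paper writes is for $S_t=-\dot v_t$ (homogeneous, $\dot S_t=-HS_t$), not for $v_t$ itself; your inhomogeneous equation for $v_t$ is one integration away from it.
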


\begin{proof}
    Eq.~\ref{eq:truncated-eqprop-parameter}-\ref{eq:truncated-eqprop-state} follow directly from Lemma \ref{lma:rec-backprop} (next subsection). Eq.~\ref{eq:truncated-eqprop-local} follows from Eq.~\ref{eq:truncated-eqprop-parameter} and the definition of sum-separability.
\end{proof}

The left-hand-side of Eq.~\ref{eq:truncated-eqprop-parameter} represents the gradient provided by EqProp if we substitute $s_t^\beta$ to $s_\star^\beta$ in the gradient formula (Eq.~\ref{eqprop-gradient-system}). This corresponds to a \textit{truncated} version of EqProp, where the second phase (nudged phase) is halted before convergence to the nudged equilibrium state. Eq.~\ref{eq:truncated-eqprop-parameter} provides an analytical formula for this truncated gradient in terms of the projected cost function, when $s_0 = s_\star$

The left-hand side of Eq.~\ref{eq:truncated-eqprop-state} is the temporal derivative of $s_t^\beta$ rescaled by a factor $\frac{1}{\beta}$. In essence, Eq.~\ref{eq:truncated-eqprop-state} shows that, in the second phase of EqProp (nudged phase), the temporal derivative of the state \textit{codes} for gradient information (namely the gradients of the projected cost function, when $s_0 = s_\star$).

\begin{figure*}[ht!]
\begin{center}
    \includegraphics[width=\linewidth]{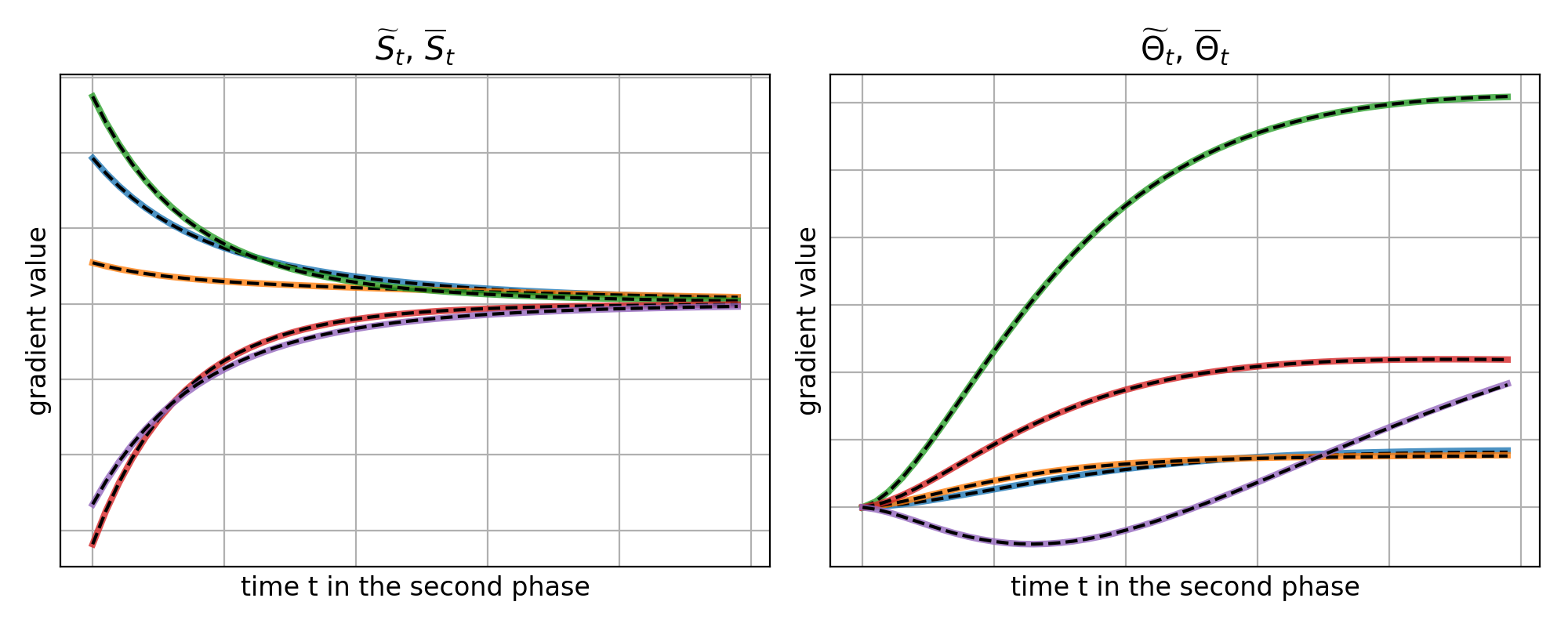}
    \caption[Illustration of Theorem \ref{thm:truncated-eqprop} on a toy example]{\textbf{Illustration of Theorem \ref{thm:truncated-eqprop} on a toy example.} Dashed lines (in black) represent five randomly chosen coordinates of $\widetilde{S}_t$ (left) and five randomly chosen coordinates of $\widetilde{\Theta}_t$ (right). Solid colored lines represent the corresponding coordinates in $\overline{S}_t$ (left) and in $\overline{\Theta}_t$ (right). The processes $\overline{S}_t$, $\widetilde{S}_t$, $\overline{\Theta}_t$ and $\widetilde{\Theta}_t$ are defined by Eqs.~\ref{eq:process-bar}-\ref{eq:process-tilde}. The figure was produced by modifying the code\footnotemark \; of \citet{ernoult2019updates}.}
    \label{fig:thm-gradient-system}
\end{center}
\end{figure*}

\subsection{Recurrent Backpropagation}
\label{sec:proof-temporal-derivatives}

In this section, we prove Theorem \ref{thm:truncated-eqprop}. In doing so, we also establish a link between EqProp and the recurrent backpropagation algorithm of \citet{almeida1987learning} and \citet{pineda1987generalization}, which we briefly present below.

First, let us introduce the temporal processes $(\overline{S}_t, \overline{\Theta}_t)$ and $(\widetilde{S}_t, \widetilde{\Theta}_t)$ defined by
\begin{equation}
    \label{eq:process-bar}
	\forall t \geq 0, \qquad \overline{S}_t = \frac{\partial L_t}{\partial s} \left( \theta, s_\star \right), \qquad \overline{\Theta}_t = \frac{\partial L_t}{\partial \theta} \left( \theta, s_\star \right),
\end{equation}
and
\begin{equation}
    \label{eq:process-tilde}
	\forall t \geq 0, \qquad \widetilde{S}_t = -\lim_{\beta \to 0} \frac{1}{\beta} \; \frac{d s_t^\beta}{d t}, \qquad
	\widetilde{\Theta}_t = \lim_{\beta \to 0} \frac{1}{\beta} \left( \frac{\partial E}{\partial \theta} \left( \theta, s_t^\beta \right) - \frac{\partial E}{\partial \theta} \left( \theta, s_\star \right) \right).
\end{equation}
The processes $\overline{S}_t$ and $\widetilde{S}_t$ take values in the state space (space of the state variable $s$). The processes $\overline{\Theta}_t$ and $\widetilde{\Theta}_t$ take values in the parameter space (space of the parameter variable $\theta$). Using these notations, Theorem \ref{thm:truncated-eqprop} states that $\overline{S}_t = \widetilde{S}_t$ and $\overline{\Theta}_t = \widetilde{\Theta}_t$ for every $t \geq 0$. These identities are a direct consequence of the following lemma.

\footnotetext{https://github.com/ernoult/updatesEPgradientsBPTT}

\medskip

\begin{restatable}{lma}{lmarbp}
	\label{lma:rec-backprop}
	Both the processes $(\overline{S}_t, \overline{\Theta}_t)$ and $(\widetilde{S}_t, \widetilde{\Theta}_t)$ are solutions of the same (linear) differential equation:
	\begin{align}
		\label{eq:Cauchy-1}
		S_0 & = \frac{\partial C}{\partial s} \left( s_\star \right),      \\
		\label{eq:Cauchy-2}
		\Theta_0 & = 0, \\
		\label{eq:Cauchy-3}
		\frac{d}{dt} S_t & = - \frac{\partial^2 E}{\partial s^2} \left( \theta, s_\star \right) \cdot S_t, \\
		\label{eq:Cauchy-4}
		\frac{d}{dt} \Theta_t & = - \frac{\partial^2 E}{\partial \theta \partial s} \left( \theta, s_\star \right) \cdot S_t.
	\end{align}
	By uniqueness of the solution, the processes $(\overline{S}_t, \overline{\Theta}_t)$ and $(\widetilde{S}_t, \widetilde{\Theta}_t)$ are equal.
\end{restatable}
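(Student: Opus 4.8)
The plan is to verify, separately for each of the two pairs of processes, that it solves the linear Cauchy problem of Eqs.~(\ref{eq:Cauchy-1})--(\ref{eq:Cauchy-4}), and then to conclude by uniqueness of solutions of linear ODEs with constant-in-$t$ coefficient matrices $\frac{\partial^2 E}{\partial s^2}(\theta, s_\star)$ and $\frac{\partial^2 E}{\partial\theta\,\partial s}(\theta, s_\star)$. Throughout, existence of the $\beta \to 0$ limits and the interchanges of $\frac{d}{dt}$ with $\frac{\partial}{\partial\beta}$, $\frac{\partial}{\partial s}$, $\frac{\partial}{\partial\theta}$ rest on smooth dependence of solutions of ODEs on parameters and initial data, consistent with the manuscript's standing differentiability assumptions.

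For the nudged-phase pair $(\widetilde{S}_t, \widetilde{\Theta}_t)$, I would start from the observation that at $\beta = 0$ the nudged dynamics~(\ref{eq:continuous-time-nudged-phase}) collapses to the free dynamics~(\ref{eq:continuous-time-free-phase}) launched from the equilibrium $s_\star$, so that $s_t^0 = s_\star$ for all $t \geq 0$. Introduce the first-order sensitivity $\phi_t := \left.\frac{\partial}{\partial\beta}\right|_{\beta=0} s_t^\beta$, so that $s_t^\beta = s_\star + \beta\,\phi_t + o(\beta)$ with $\phi_0 = 0$. Differentiating~(\ref{eq:continuous-time-nudged-phase}) with respect to $\beta$ at $\beta = 0$ and using $\frac{\partial E}{\partial s}(\theta, s_\star) = 0$ yields the linear ODE $\dot\phi_t = -\frac{\partial^2 E}{\partial s^2}(\theta, s_\star)\cdot\phi_t - \frac{\partial C}{\partial s}(s_\star)$. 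First-order expansions then give $\widetilde{S}_t = -\dot\phi_t$ and $\widetilde{\Theta}_t = \frac{\partial^2 E}{\partial\theta\,\partial s}(\theta, s_\star)\cdot\phi_t$; feeding the ODE for $\phi_t$ (and differentiating it once more) into these expressions produces the initial conditions $\widetilde{S}_0 = \frac{\partial C}{\partial s}(s_\star)$, $\widetilde{\Theta}_0 = 0$ and the evolution equations~(\ref{eq:Cauchy-3})--(\ref{eq:Cauchy-4}).

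For the recurrent-backpropagation pair $(\overline{S}_t, \overline{\Theta}_t)$, the initial conditions come straight from $L_0(\theta, s_0) = C(s_0)$, which is independent of $\theta$: $\overline{S}_0 = \frac{\partial C}{\partial s}(s_\star)$ and $\overline{\Theta}_0 = 0$. For the evolution equations I would use the transport identity $\frac{\partial L_t}{\partial t}(\theta, s_0) = -\left\langle \frac{\partial L_t}{\partial s}(\theta, s_0),\, \frac{\partial E}{\partial s}(\theta, s_0)\right\rangle$, which follows from the semigroup property $s_t(\theta, s_0) = s_{t-h}(\theta, s_h(\theta, s_0))$ of the free flow (differentiate in $h$ at $h = 0$). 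Differentiating this identity with respect to the state argument, resp.\ with respect to $\theta$, and \emph{then} setting $s_0 = s_\star$ makes the factor $\frac{\partial E}{\partial s}(\theta, s_\star) = 0$ annihilate the cross terms carrying $\frac{\partial^2 L_t}{\partial s^2}$, resp.\ $\frac{\partial^2 L_t}{\partial\theta\,\partial s}$; what remains is exactly $\frac{d}{dt}\overline{S}_t = -\frac{\partial^2 E}{\partial s^2}(\theta, s_\star)\cdot\overline{S}_t$ and $\frac{d}{dt}\overline{\Theta}_t = -\frac{\partial^2 E}{\partial\theta\,\partial s}(\theta, s_\star)\cdot\overline{S}_t$, using symmetry of the Hessian $\frac{\partial^2 E}{\partial s^2}$.

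The step I expect to be the main obstacle is getting this last transport identity in the right shape. The naive chain rule $\frac{d}{dt} C(s_t) = -\langle \frac{\partial C}{\partial s}(s_t),\, \frac{\partial E}{\partial s}(\theta, s_t)\rangle$ evaluates the energy gradient \emph{along the trajectory} $s_t$, and differentiating that in $\theta$ leaves a residual $\frac{\partial^2 E}{\partial s^2}(\theta, s_\star)\cdot\frac{\partial s_t}{\partial\theta}$ term that must then be removed by hand (e.g.\ via an integrating-factor identity relating $\frac{\partial s_t}{\partial\theta}$ along the constant trajectory $s_t \equiv s_\star$ to $\frac{\partial s_t}{\partial s}$). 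Using the semigroup form instead keeps the energy gradient pinned at the fixed point $s_\star$, where it vanishes, which is what makes both evolution equations fall out cleanly; beyond that, the proof is essentially bookkeeping of which variables are held fixed in each partial derivative.
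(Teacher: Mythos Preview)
Your argument is correct. The paper does not actually prove Lemma~\ref{lma:rec-backprop} in the manuscript; it defers the proof to \citet{scellier2019equivalence}. So there is no in-text proof to compare against, but your two computations are exactly the standard ones: linearising the nudged dynamics~(\ref{eq:continuous-time-nudged-phase}) in $\beta$ at $\beta=0$ around the constant trajectory $s_t^0\equiv s_\star$ for the $(\widetilde S_t,\widetilde\Theta_t)$ side, and differentiating the transport identity for $L_t$ at $s_0=s_\star$ for the $(\overline S_t,\overline\Theta_t)$ side, then invoking uniqueness for the linear constant-coefficient Cauchy problem.

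One remark on your discussion of the ``main obstacle'': both the naive chain-rule form $\frac{\partial L_t}{\partial t}(\theta,s_0)=-\langle \frac{\partial C}{\partial s}(s_t),\frac{\partial E}{\partial s}(\theta,s_t)\rangle$ and your semigroup form $\frac{\partial L_t}{\partial t}(\theta,s_0)=-\langle \frac{\partial L_t}{\partial s}(\theta,s_0),\frac{\partial E}{\partial s}(\theta,s_0)\rangle$ are valid identities (they are equal as functions of $(\theta,s_0)$). The advantage of the semigroup form is precisely the one you name: the energy gradient sits at the \emph{initial} point $s_0$, so differentiating in $\theta$ produces $\frac{\partial^2 E}{\partial\theta\,\partial s}(\theta,s_0)$ directly with no $\frac{\partial s_t}{\partial\theta}$ chain-rule contribution, and setting $s_0=s_\star$ kills the $\frac{\partial^2 L_t}{\partial\theta\,\partial s}$ cross term. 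With the naive form one would instead have to control $\frac{\partial s_t}{\partial\theta}$ along the constant trajectory $s_t\equiv s_\star$ (which is nonzero, since perturbing $\theta$ while holding $s_0=s_\star$ fixed makes $s_\star$ cease to be an equilibrium), so your choice genuinely avoids a detour rather than merely rephrasing it.
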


We refer to \citet{scellier2019equivalence} for a proof of this result. Since the differential equation of Lemma \ref{lma:rec-backprop} is linear with constant coefficients, we can express $S_t$ and $\Theta_t$ using closed-form formulas. Specifically $S_t = \exp \left( - t \frac{\partial^2 E}{\partial s^2} \left( \theta, s_\star \right) \right) \cdot \frac{\partial C}{\partial s} \left( s_\star \right)$ and $\Theta_t = \frac{\partial^2 E}{\partial \theta \partial s} \left( \theta, s_\star \right) \cdot \left( \frac{\partial^2 E}{\partial s^2} \left( \theta, s_\star \right) \right)^{-1} \cdot \left[ \textrm{Id} - \exp \left( - t \frac{\partial^2 E}{\partial s^2} \left( \theta, s_\star \right) \right) \right] \cdot \frac{\partial C}{\partial s} \left( s_\star \right)$.

Lemma \ref{lma:rec-backprop} also suggests an alternative procedure to compute the parameter gradients of the loss $\mathcal{L}$ numerically. This procedure, known as \textit{Recurrent Backpropagation} (RBP), was introduced independently by \citet{almeida1987learning} and \citet{pineda1987generalization}. Specifically, RBP consists of the following two phases. The first phase is the same as the free phase of EqProp: $s_t$ follows the free dynamics (Eq.~\ref{eq:continuous-time-free-phase}) and relaxes to the equilibrium state $s_\star$. The state $s_\star$ is necessary for evaluating $\frac{\partial^2 E}{\partial s^2} \left( \theta, s_\star \right)$ and $\frac{\partial^2 E}{\partial \theta \partial s} \left( \theta, s_\star \right)$, which the second phase requires. In the second phase, $S_t$ and $\Theta_t$ are computed iteratively for increasing values of $t$ using Eq.~\ref{eq:Cauchy-1}-\ref{eq:Cauchy-4}. Finally, $\Theta_t$ provides the desired loss gradient in the limit $t \to \infty$. To see this, we first note that Lemma \ref{lma:rec-backprop} tells us that the vector $\Theta_t$ computed by this procedure is equal to $\widetilde{\Theta}_t$ for any $t \geq 0$. Then by definition, $\widetilde{\Theta}_t = \lim_{\beta \to 0} \frac{1}{\beta} \left( \frac{\partial E}{\partial \theta} \left( \theta, s_t^\beta \right) - \frac{\partial E}{\partial \theta} \left( \theta, s_\star \right) \right)$. It follows that, as $t \to \infty$, we have $\widetilde{\Theta}_t \to \lim_{\beta \to 0} \frac{1}{\beta} \left( \frac{\partial E}{\partial \theta} \left( \theta, s_\star^\beta \right) - \frac{\partial E}{\partial \theta} \left( \theta, s_\star \right) \right) = \frac{\partial \mathcal{L}}{\partial \theta}$.

An important benefit of EqProp over RBP it that EqProp requires only one kind of dynamics for both phases of training. RBP requires a special computational circuit in the second phase for computing the gradients.

The original RBP algorithm was described
for a general state-to-state dynamics. Here, we have presented RBP
in the particular case of gradient dynamics. We refer to \citet{lecun1988theoretical} for a more general derivation of RBP based on the adjoint method.

\section{Continuous Hopfield Networks}
\label{sec:hopfield-model}

In the previous section we have presented a theoretical result which is generic and involves the energy function $E$ and cost function $C$ in their abstract form. In this section, we study EqProp in the context of a neural network model called the continuous Hopfield model \citep{hopfield1984neurons}.

\subsection{Hopfield Energy}

A Hopfield network is a neural network with the following characteristics. The state of a neuron $i$ is described by a scalar $s_i$, loosely representing its membrane voltage. The state of a synapse connecting neuron $i$ to neuron $j$ is described by a real number $W_{ij}$ representing its efficacy (or `strength'). The notation $\sigma(s_i)$ is further used to denote the firing rate of neuron $i$. The function $\sigma$ is called \textit{activation function} ; it takes a real number as input, and returns a real number as output. Using the formalism of the previous section, the state of the system is the vector $s= \left( s_1, s_2, \ldots, s_N \right)$ where $N$ is the number of neurons in the network, and the set of parameters to be adjusted is $\theta = \{ W_{ij} \}_{ij}$. As we will see shortly, one biologically unrealistic requirement of the Hopfield model is that synapses are assumed to be bidirectional and symmetric: the synapse connecting $i$ to $j$ shares the same weight value as the synapse connecting $j$ to $i$, i.e. $W_{ij} = W_{ji}$.

\paragraph{Hopfield Energy.}
\citet{hopfield1984neurons} introduced the following energy function\footnote{The energy function of Eq.~\ref{eq:hopfield-energy} is in fact the one proposed by \citet{bengio2017stdp}. The energy function introduced by Hopfield is slightly different, but this technical detail is not essential for our purpose.}:
\begin{equation}
    \label{eq:hopfield-energy}
    E(\theta, s) = \frac{1}{2} \sum_i s_i^2 - \sum_{i < j} W_{ij} \sigma(s_i) \sigma(s_j),
\end{equation}
which we will call the \textit{Hopfield energy}. We calculate
\begin{equation}
    \label{eq:leaky-integrator-hopfield}
    -\frac{\partial E}{\partial s_i} = \sigma'(s_i) \left( \sum_{j \neq i} W_{ij} \sigma(s_j) \right) - s_i.
\end{equation}
Thus, the gradient dynamics for neuron $s_i$ with respect to the Hopfield energy is given by the formula $\frac{d s_i}{dt} = \sigma'(s_i) \left( \sum_{j \neq i} W_{ij} \sigma(s_j) \right) - s_i$. This dynamics is reminiscent of the leaky integrator neuron model, a simplified neuron model commonly used in neuroscience. The main difference with the standard leaky-integrator neuron model is the fact that synaptic weights are constrained to be bidirectional and symmetric, a biologically unrealistic constraint often referred to as the \textit{weight transport problem}. Another difference is the presence of the term $\sigma'(s_i)$ which modulates the total input to neuron $i$.

\paragraph{Squared Error.}
In the supervised setting that we study here, a set of neurons are input neurons, denoted $x$, and are always clamped to their input values. Among the `free' neurons ($s$), a subset of them are \textit{output neurons} (denoted $o$), meaning that they represent the network's output. The network's prediction is the state of output neurons at equilibrium (denoted $o_\star$). We call all other neurons the \textit{hidden neurons} and denote them $h$. Thus, the state of the network is $s = \left( h, o \right)$. The cost function considered here is the squared error
\begin{equation}
    \label{eq:sq-cost-function}
    C(s, y) = \frac{1}{2} \left\lVert o-y \right \rVert^2,
\end{equation}
which measures the discrepancy between the state of output neurons ($o$) and their target values ($y$).

\paragraph{Total Energy.}
One of the novelties of EqProp with respect to prior learning algorithms for energy-based models is the \textit{total energy function} $F$, which takes the form $F = E + \beta \; C$, where $\beta$ is a real-valued scalar (the \textit{nudging factor}). The function $C$ not only represents the cost to minimize, but also contributes to the total energy of the system by acting like an external potential for the output neurons ($o$). Thus, the total energy $F$ is the sum of two potential energies: an `internal potential' ($E$) that models the interactions within the network, and an `external potential' ($\beta \; C$) that models how the targets influence the output neurons. The resulting gradient dynamics $\frac{d s_t}{dt} = - \frac{\partial E}{\partial s} - \beta \frac{\partial C}{\partial s}$ consists of two 'forces' which act on the temporal derivative of $s_t$. The 'internal force' (induced by $E$) is that of a leaky integrator neuron (Eq.~\ref{eq:leaky-integrator-hopfield}). The 'external force' (induced by $\beta \; C$) on $s=(h,o)$ takes the form:
\begin{equation}
	\label{eq:external-force}
	- \beta \frac{\partial C}{\partial h} = 0 \qquad \text{and} \qquad
	- \beta \frac{\partial C}{\partial o} = \beta ( y-o ).
\end{equation}
This external force acts on output neurons only: it can pull them (if $\beta \geq 0$) towards their target values ($y$), or repel them (if $\beta \leq 0$). The nudging factor $\beta$ controls the strength of this interaction between output neurons and targets. In particular, when $\beta=0$, the output neurons are not sensitive to the targets.

\begin{figure*}[ht!]
\begin{center}
    \includegraphics[width=0.3\linewidth]{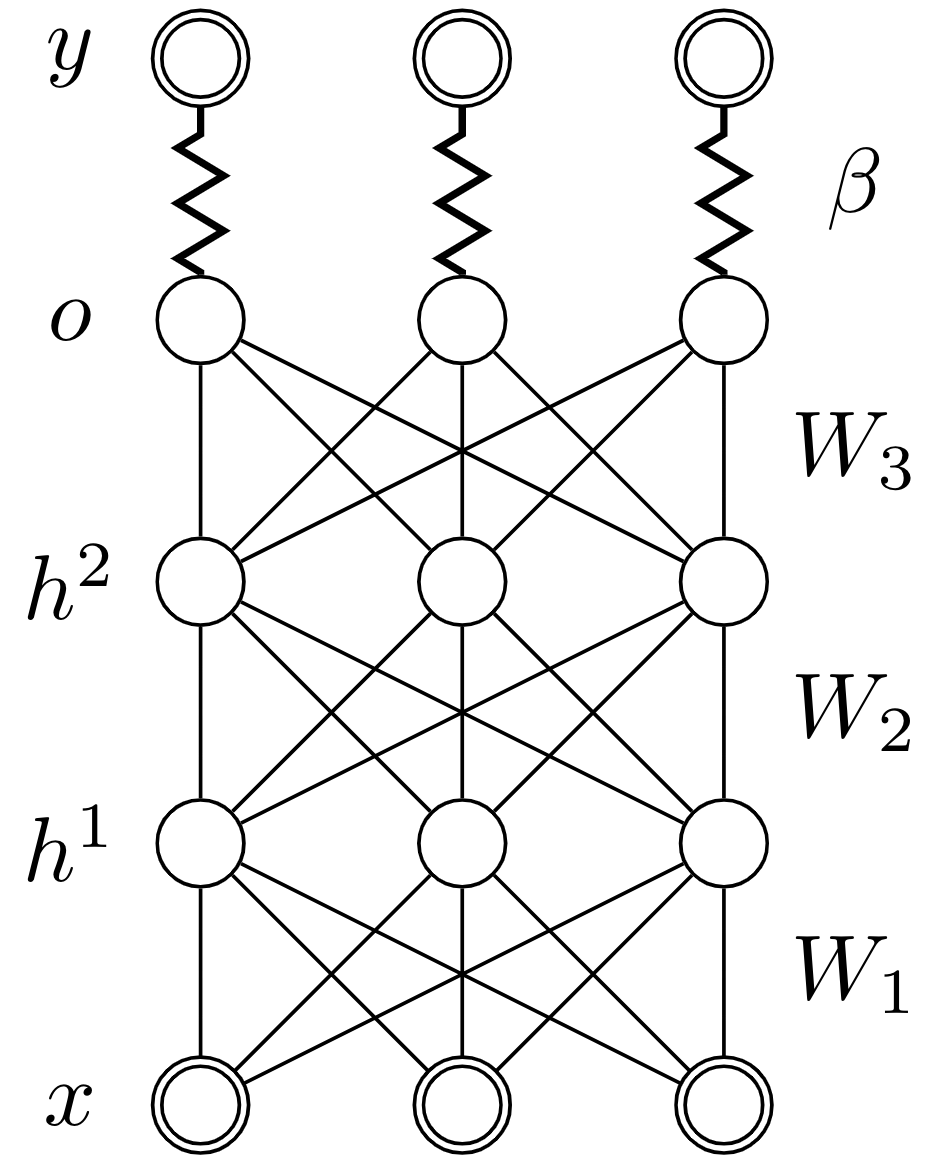}
    \caption[A deep Hopfield network (DHN)]{\textbf{A deep Hopfield network (DHN).} Input $x$ is clamped. Neurons $s$ include ``hidden layers'' $h_1$ and $h_2$, and ``output layer'' $o$ (the layer where the prediction is read). Target $y$ has the same dimension as output $o$. Connections between neurons are bidirectional and have symmetric weights. Such a network can be trained with EqProp. In the nudged phase (second phase of training), the nudging factor $\beta$ scales the ``external force'' $\beta (y-o)$ that attracts output neurons ($o$) towards their target values ($y$).}
    \label{fig:network_undirected}
\end{center}
\end{figure*}

\subsection{Training Continuous Hopfield Networks with Equilibrium Propagation}
\label{sec:eqprop-hopfield}

Consider a deep Hopfield network (DHN) of the kind depicted in Figure \ref{fig:network_undirected}. For each training example $(x, y)$ in the dataset, EqProp training proceeds as follows.

\paragraph{Free Phase.}
At inference, inputs $x$ are clamped, and both the hidden neurons ($h^1$ and $h^2$) and output neurons ($o$) evolve freely, following the gradient of the energy. The hidden and output neurons subsequently stabilize to an energy minimum, called \textit{free state} and denoted $s_\star = \left( h_\star^1, h_\star^2, o_\star \right)$. The state of output neurons at equilibrium ($o_\star$) plays the role of prediction for the model.

\paragraph{Nudged Phase.}
After relaxation to the free state $s_\star$, the target $y$ is observed, and the nudging factor $\beta$ takes on a positive value, gradually driving the state of output neurons ($o$) towards $y$. Since the external force only acts on the output neurons, the hidden layers ($h^1$ and $h^2$) are initially at equilibrium at the beginning of the nudged phase. The perturbation introduced at output neurons gradually propagates backwards along the layers of the network, until the system settles to a new equilibrium state ($s_\star^\beta$).

\bigskip

\begin{prop}[\citet{scellier2017equilibrium}]
\label{prop:eqprop-hopfield}
Denote $s_i^0$ and $s_i^\beta$ the free state and nudged state of neuron $i$, respectively. Then, we have the following formula to estimate the gradient of the loss $\mathcal{L} = \frac{1}{2} \| o_\star - y \|^2$ :
\begin{equation}
\lim_{\beta \to 0} \frac{1}{\beta} \left( \sigma \left( s_i^\beta \right) \sigma \left( s_j^\beta \right) - \sigma \left( s_i^0 \right) \sigma \left( s_j^0 \right) \right) = -\frac{\partial \mathcal{L}}{\partial W_{ij}}.
\end{equation}
\end{prop}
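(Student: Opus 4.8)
The plan is to derive Proposition~\ref{prop:eqprop-hopfield} as an immediate specialization of Theorem~\ref{thm:static-eqprop}, using the Hopfield energy of Eq.~(\ref{eq:hopfield-energy}) and the squared-error cost of Eq.~(\ref{eq:sq-cost-function}). First I would record that the Hopfield energy is sum-separable in the sense of Eq.~(\ref{eq:sum-separability}): indexing the adjustable parameters by the unordered pairs $\{i,j\}$ (the symmetry constraint $W_{ij}=W_{ji}$ means there is a single weight per pair, and the $i<j$ convention in Eq.~(\ref{eq:hopfield-energy}) already picks out one term per pair), we may write $E = E_0 + \sum_{i<j} E_{ij}$ with the parameter-independent part $E_0 = \frac{1}{2}\sum_i s_i^2$ and the factors $E_{ij}(W_{ij},s_i,s_j) = -W_{ij}\,\sigma(s_i)\,\sigma(s_j)$. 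The information locally available to the parameter $W_{ij}$ is therefore the pair of states $(s_i,s_j)$, and the input $x$ is held fixed (clamped) throughout, so it plays no role in the differentiation.

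Next I would compute the relevant energy gradient: since $\partial E_{ij}/\partial W_{ij} = -\sigma(s_i)\sigma(s_j)$ and no other factor of $E$ depends on $W_{ij}$, we have $\partial E/\partial W_{ij}(\theta,s) = -\sigma(s_i)\sigma(s_j)$. The regularity hypotheses of Theorem~\ref{thm:static-eqprop} hold because $\sigma$ --- and hence $E$ --- and $C$ are continuously differentiable, so the nudged equilibrium $\beta\mapsto s_\star^\beta$ of Eq.~(\ref{eq:nudged-steady-state}) is a well-defined continuous branch through the free state $s_\star^0$. Applying the local formula~(\ref{eq:static-eqprop-local}) to the factor $E_{ij}$ then gives
\[
\frac{\partial \mathcal{L}}{\partial W_{ij}} = \left.\frac{d}{d\beta}\right|_{\beta=0} \frac{\partial E_{ij}}{\partial W_{ij}}\bigl(W_{ij},\{x,s_\star^\beta\}_{ij}\bigr) = \left.\frac{d}{d\beta}\right|_{\beta=0}\Bigl(-\sigma(s_i^\beta)\,\sigma(s_j^\beta)\Bigr),
\]
where $s_i^\beta = (s_\star^\beta)_i$ (so $s_i^0$ is the free state of neuron $i$). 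Rewriting the derivative at $\beta=0$ as the limit of a difference quotient and moving the sign across yields exactly
\[
\lim_{\beta\to 0}\frac{1}{\beta}\Bigl(\sigma(s_i^\beta)\,\sigma(s_j^\beta) - \sigma(s_i^0)\,\sigma(s_j^0)\Bigr) = -\frac{\partial \mathcal{L}}{\partial W_{ij}},
\]
which is the claimed identity.

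Since everything reduces to plugging the Hopfield energy into Theorem~\ref{thm:static-eqprop}, there is no genuine obstacle; the points that need a little care are purely bookkeeping ones: (i) treating the symmetric pair of connections between $i$ and $j$ as a single parameter, so that one differentiates with respect to the correct quantity and does not double-count; (ii) aligning the component notation $s_i^\beta=(s_\star^\beta)_i$ of the proposition with the full-state notation of the theorem; and (iii) confirming the differentiability assumptions, which amount to asking that the activation function $\sigma$ be continuously differentiable. As an optional remark, reading the same computation through the cost of Eq.~(\ref{eq:sq-cost-function}) makes explicit the nudged-phase ``external force'' $\beta(y-o)$ of Eq.~(\ref{eq:external-force}), which is why EqProp on a Hopfield network takes the form of a contrastive Hebbian update --- the difference of the products $\sigma(s_i)\sigma(s_j)$ between the two phases.
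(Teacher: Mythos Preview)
Your proof is correct and follows essentially the same approach as the paper: both derive the proposition as a direct application of Theorem~\ref{thm:static-eqprop} to the Hopfield energy of Eq.~(\ref{eq:hopfield-energy}) and the squared-error cost of Eq.~(\ref{eq:sq-cost-function}), after observing that the energy is sum-separable with factors $E_{ij}(W_{ij},s_i,s_j)=-W_{ij}\sigma(s_i)\sigma(s_j)$. Your version is simply more explicit about the bookkeeping (the symmetry convention, the componentwise notation, and the differentiability hypotheses) than the paper's terse statement.
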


\begin{proof}
This is a direct consequence of the main Theorem \ref{thm:static-eqprop}, applied to the Hopfield energy function (Eq.~\ref{eq:hopfield-energy}) and the squared error cost function (Eq.~\ref{eq:sq-cost-function}). Notice that the Hopfield energy has the sum-separability property (as defined by Eq.~\ref{eq:sum-separability}), with each factor of the form $E_{ij}(W_{ij}, s_i, s_j) = - W_{ij} \sigma(s_i) \sigma(s_j)$.
\end{proof}

Proposition \ref{prop:eqprop-hopfield} suggests for each synapse $W_{ij}$ the update rule
\begin{equation}
\label{eq:global-update-hopfield}
\Delta W_{ij} = \frac{\eta}{\beta} \left( \sigma \left( s_i^\beta \right) \sigma \left( s_j^\beta \right) - \sigma \left( s_i^0 \right) \sigma \left( s_j^0 \right) \right),
\end{equation}
where $\eta$ is a learning rate. This learning rule is a form of contrastive Hebbian learning (CHL), with a Hebbian term at one equilibrium state, and an anti-Hebbian term at the other equilibrium state. We will discuss in Section \ref{sec:contrastive-hebbian-learning} the relationship between EqProp and the CHL algorithm of \citet{movellan1991contrastive}.

\subsection{`Backpropagation' of Error Signals}

It is interesting to note that EqProp is similar in spirit to the backpropagation algorithm \citep{rumelhart1988learning}. The free phase of EqProp, which corresponds to inference, plays the role of the forward pass in a feedforward net. The nudged phase of EqProp is similar to the backward pass of backpropagation, in that the target output is revealed and it involves the propagation of loss gradient signals. This analogy is even more apparent in a layered network like the one depicted in Fig.~\ref{fig:network_undirected}: in the nudged phase of EqProp, error signals (back-)propagate across the layers of the network, from output neurons to input neurons. Theorem \ref{thm:truncated-eqprop} gives a more quantitative description of how gradient computation is performed in the nudged phase, with the temporal derivatives of neural activity carrying gradient signals. Thus, like backprop, the learning process in EqProp is driven by an error signal; but unlike backprop, neural computation in EqProp corresponds to both inference and error back-propagation. The idea that error signals in neural networks can be encoded in the temporal derivatives of neural activity was also explored by \citet{hinton1988learning,movellan1991contrastive,o1996biologically}, and has been recently formulated as a hypothesis for neuroscience \citep{lillicrap2020backpropagation}.

Because error signals are propagated in the network via the neural dynamics, synaptic plasticity can be driven directly by the dynamics of the neurons. Indeed, the global update of EqProp (Eq.~\ref{eq:global-update-hopfield}) is equal to the temporal integration of infinitesimal updates
\begin{equation}
    \label{eq:real-time-update}
    dW_{ij} = \frac{\eta}{\beta} d \left( \sigma(s_i) \sigma(s_j) \right) 
\end{equation}
over the nudged phase, when the neurons gradually move from their free state ($s_\star$) to their nudged state ($s_\star^\beta$). This suggests an alternative method to implement the global weight update: in the first phase, when the neurons relax to the free state, no synaptic update occurs ($\Delta W_{ij} = 0$) ; in the second phase, the real-time update of Eq.~\ref{eq:real-time-update} is performed when the neurons evolve from their free state to their nudged state. This idea is formalized and tested numerically in \citet{ernoult2020equilibrium}.

From a biological perspective, perhaps the most unrealistic assumption in this model of credit assignment is the requirement of symmetric weights. 
This
constraint can be relaxed at the cost of computing a biased gradient \citep{scellier2018generalization,ernoult2020equilibrium,laborieux2020scaling,tristany2020equilibrium}.

\section{Numerical Experiments on MNIST}
\label{sec:experiments-hopfield}

In this section, we present the experimental results of \citet{scellier2017equilibrium}. In these simulations, we train deep Hopfield networks of the kind depicted in Fig.~\ref{fig:network_undirected}. Our networks have no skip-layer connections and no lateral connections\footnote{We stress that the models trainable by EqProp are not limited to the chain-like architecture of Fig.~\ref{fig:network_undirected}. Other works have studied the effect of adding skip-layer connections \citep{gammell2020layer} and introducing sparsity \citep{tristany2020equilibrium}.}. We recall that these Hopfield networks, unlike feedforward networks, are recurrently connected, with bidirectional and symmetric connections (i.e. the synapse from neuron $i$ to neuron $j$ shares the same weight value as the synapse from neuron $j$ to neuron $i$).

We train these Hopfield networks on the MNIST digits classification task \citep{lecun1998gradient}. The MNIST dataset (the `modified' version of the National Institute of Standards and Technology dataset) of handwritten digits is composed of 60,000 training examples and 10,000 test examples. Each example $x$ in the dataset is a $28 \times 28$ gray-scaled image and comes with a label $y \in \left\{ 0, 1, \ldots, 9 \right\}$ indicating the digit that the image represents. Given an input $x$, the network's prediction $\widehat{y}$ is the index of the output neuron (among the $10$ output neurons) whose activity at equilibrium is maximal, that is
\begin{equation}
	\widehat{y} = \underset{i \in \{ 0, 1, \ldots, 9 \}}{\arg \max} \; o_{\star, i}.
\end{equation}
The network is optimized by stochastic gradient descent (SGD). The process to perform one training iteration on a sample of the training set (i.e. to compute the corresponding gradient and to take one step of SGD) is the one described in section \ref{sec:eqprop-hopfield}. For efficiency of the experiments, we use minibatches of $20$ training examples.

\subsection{Implementation Details}

The hyperparameters chosen for each model are shown in Table \ref{table:hopfield-results}. The code is available\footnote{https://github.com/bscellier/Towards-a-Biologically-Plausible-Backprop}.

\paragraph{Architecture.}
We train deep Hopfield networks with $1$, $2$ and $3$ hidden layers. The input layer consists of $28 \times 28 = 784$ neurons. The hidden layers consist of $500$ hidden neurons each. The output layer consists of $10$ output neurons.

\paragraph{Weight initialization.}
The weights of the network are initialized\footnote{Little is known about how to initialise the weights of recurrent neural networks with static input. More exploration is needed to find appropriate initialisation schemes for such networks.} according to the Glorot-Bengio initialization scheme \citep{glorot2010understanding}, i.e. each weight matrix is initialized by drawing i.i.d. samples uniformly at random in the range $[L, U]$, where $L=-\frac{\sqrt{6}}{\sqrt{n_i + n_{i+1}}}$ and $U=\frac{\sqrt{6}}{\sqrt{n_i + n_{i+1}}}$, with $n_i$ the fan-in and $n_{i+1}$ the fan-out of the weight matrix.

\paragraph{Implementation of the neural dynamics.}
Recall that, for a fixed input-target pair $(x, y)$, the total energy is $F(\theta, \beta, s) = E(\theta, x, s) + \beta \; C(s, y)$. We implement the gradient dynamics $\frac{ds_t}{dt} = -\frac{\partial F}{\partial s} \left( \theta, \beta, s_t \right)$ using the Euler scheme, meaning that we discretize time into short time lapses of duration $\epsilon$ and iteratively update the state of the network (hidden and output neurons) according to
\begin{equation}
	\label{eq:gradient-descent}
	s_{t+1} = s_t - \epsilon \frac{\partial F}{\partial s} \left( \theta, \beta, s_t \right).
\end{equation}
This process can be thought of as one step of gradient descent (in the state space) on the total energy $F$, with learning rate $\epsilon$. In practice we find that it is necessary to restrict the space for each state variable (i.e. each neuron) to a bounded interval ; we choose the interval $[0, 1]$. This amounts to use the modified version of the Euler scheme:
\begin{equation}
	\label{eq:clipped-gradient-descent}
	s_{t+1} = \min \left( \max \left( 0, s_t - \epsilon \frac{\partial F}{\partial s} \left( \theta, \beta, s_t \right) \right), 1 \right).
\end{equation}
We choose $\epsilon = 0.5$ in the simulations. The number of iterations in the free phase is denoted $T$. The number of iterations in the nudged phase is denoted $K$.

\paragraph{Number of iterations in the free phase ($T$).}
We find experimentally that for the network to be successfully trained, it is necessary that the equilibrium state be reached with very high precision in the free phase (otherwise the gradient estimate of EqProp is unreliable). As a consequence, we require a large number of iterations (denoted $T$) to reach this equilibrium state. Moreover we find that $T$ grows fast as the number of layers increases (see Table \ref{table:hopfield-results}). Nevertheless, we will see in Chapter \ref{chapter:discrete-time} that we can experimentally cut down the number of iterations by a factor five by rewriting the free phase dynamics differently. Importantly, we stress that the large number of time steps required in the free phase is only a concern for computer simulations ; we will see in Chapter \ref{chapter:neuromorphic} that inference can potentially be extremely fast if performed appropriately on analog hardware (by using the physics of the circuit, rather than numerical optimization on conventional computers).

\paragraph{Number of iterations in the nudged phase ($K$).}
During the second phase of training, we find experimentally that full relaxation to the nudged equilibrium state is not necessary. This observation is also partly justified by Theorem \ref{thm:truncated-eqprop}, which gives an explicit formula for the `truncated gradient' provided by EqProp when the nudged phase is halted before convergence. As a heuristic, we choose $K$ (the number of iterations in the nudged phase) proportional to the number of layers, so that the `error signals' are able to propagate from output neurons back to input neurons.

\paragraph{Nudging factor ($\beta$).}
In spite of its intrinsic bias (Lemma \ref{lma:gradient-estimators}), we find that the one-sided gradient estimator performs well on MNIST (as also observed by \citet{ernoult2019updates}). We choose $\beta=1$ in the experiments. Although it is not crucial, we find that the test accuracy is slightly improved by choosing the sign of $\beta$ at random in the nudged phase of each training iteration (with probability $p(\beta=1)=1/2$ and $p(\beta=-1)=1/2$). Randomizing $\beta$ indeed has the effect of cancelling on average the $O(\beta)$-error term of the one-sided gradient estimator.

While this is not necessary on MNIST, we will see in Chapter \ref{chapter:discrete-time} that on a more complex task such as CIFAR-10, unbiasing the gradient estimator is necessary, and that the symmetric nudging estimator (Eq.~\ref{eq:two-sided-estimator}) further helps stabilize training and improve test accuracy.

\paragraph{Learning rates.}
We find experimentally that we need different learning rates for the weight matrices of different layers. We choose these learning rates heuristically as follows. Denote by $h^0, h^1, \cdots, h^N$ the layers of the network (where $h^0 = x$ and $h^N = o$) and by $W_k$ the weight matrix between the layers $h^{k-1}$ and $h^k$. We choose the learning rate $\alpha_k$ for $W_k$ proportionally to $\frac{\left\lVert W_k \right \rVert}{\mathbb{E} \left[ \left\lVert \nabla_{W_k} \right \rVert \right]}$, where $\mathbb{E} \left[ \left\lVert \nabla_{W_k} \right \rVert \right]$ represents the norm of the EqProp gradient for layer $W_k$, averaged over training examples.

\subsection{Experimental Results}

Table \ref{table:hopfield-results} (top) presents the experimental results of \citet{scellier2017equilibrium}. These experiments aim at demonstrating that the EqProp training scheme is able to perfectly (over)fit the training dataset, i.e. to get the error rate on the training set down to $0.00\%$. To achieve this, we use the following trick to reach the equilibrium state of the first phase more easily: at each epoch of training, for each example in the training set, we store the corresponding equilibrium state (i.e. the state of the hidden and output neurons at the end of the free phase), and we use this configuration as a starting point for the next free phase relaxation on that example. This method, which is similar to the PCD (Persistent Contrastive Divergence) algorithm for sampling from the equilibrium distribution of the Boltzmann machine \citep{tieleman2008training}, enables to speed up the first phase and reach the equilibrium state with higher precision.

However, this technique hurts generalization performance. Table~\ref{table:hopfield-results} (bottom) shows the experimental results of \citet{ernoult2019updates}, which do not use this technique: during training, for each training example in the dataset, the state of the network is initialized to zero at the beginning of each free phase relaxation. The resulting test error rate is lower, though the number of iterations required in the free phase to converge to equilibrium is larger.

\begin{table}[ht!]
\centering
$\begin{array}{|c|c|cc|ccccc|cccc|}
\hline
	\hbox{Model} & {\rm cached} & {\rm Test \; er.} & {\rm Train \; er.} & T   & K & \epsilon & \beta & {\rm Epochs} & \alpha_1 & \alpha_2 & \alpha_3 & \alpha_4 \\
\hline
	\hbox{DHN-1h} & Y & \sim 2.5 \; \%                   & 0.00 \; \%                     & 20  & 4 & 0.5 & 1.0 & 25           & 0.1      & 0.05     &          &          \\
	\hbox{DHN-2h} & Y & \sim 2.3 \; \%                    & 0.00 \; \%                     & 100 & 6 & 0.5 & 1.0 & 60           & 0.4      & 0.1      & 0.01     &          \\
	\hbox{DHN-3h} & Y & \sim 2.7 \; \%                    & 0.00 \; \%                    & 500 & 8 & 0.5 & 1.0 & 150          & 0.128    & 0.032    & 0.008    & 0.002    \\
\hline
	\hbox{DHN-1h} & N & 2.06 \; \%                   & 0.13 \; \%                     & 100  & 12 & 0.2 & 0.5 & 30           & 0.1      & 0.05     &          &          \\
	\hbox{DHN-2h} & N & 2.01 \; \%                    & 0.11 \; \%                     & 500 & 40 & 0.2 & 0.8 & 50           & 0.4      & 0.1      & 0.01     &          \\
\hline
\end{array}$
\caption[Experimental results of \citet{scellier2017equilibrium} on deep Hopfield networks trained on MNIST.]{
"DHN-$\#$h" stands for Deep Hopfield Network with $\#$ hidden layers. `cached' refers to whether or not the equilibrium states are cached and reused as a starting point at the next free phase relaxation. $T$ is the number of iterations in the free phase. $K$ is the number of iterations in the nudged phase. $\epsilon$ is the step size for the dynamics of the state variable $s$. $\beta$ is the value of the nudging factor in the nudged phase. $\alpha_k$ is the learning rate for updating the parameters in layer $k$. \textbf{Top.} Experimental results of \citet{scellier2017equilibrium} with the caching trick. Test error rates and train error rates are reported on single trials. \textbf{Bottom.} Experimental results of \citet{ernoult2019updates} without the caching trick. Test error rates and train error rates are averaged over five trials.
}
\label{table:hopfield-results}
\end{table}

Since these early experiments, thanks to new insights, new ideas and more perseverance, new results have been obtained which improve in terms of simulation speed, test accuracy, and complexity of the task solved. We present these more recent experimental results in Chapter \ref{chapter:discrete-time}. In addition, we stress that the real potential of EqProp is more likely to shine on neuromorphic substrates (Chapter \ref{chapter:neuromorphic}), rather than on digital computers.

\section{Contrastive Hebbian Learning (CHL)}
\label{sec:contrastive-hebbian-learning}

In the setting of continuous Hopfield networks studied in this Chapter, EqProp is similar to the generalized recirculation algorithm (GeneRec) \citep{o1996biologically}. The main novelty of EqProp with respect to GeneRec is the formalism based on the concepts of nudging factor ($\beta$) and total energy function ($F$), which enables to formulate a general framework for training energy-based models (Chapter \ref{chapter:eqprop}) and Lagrangian-based models (Chapter \ref{chapter:future}), applicable not just to the continuous Hopfield model, but also many more network models, including nonlinear resistive networks (Chapter \ref{chapter:neuromorphic}) and convolutional networks (Chapter \ref{chapter:discrete-time}).

EqProp is also similar in spirit to the contrastive Hebbian learning algorithm (CHL), which we present in this section. The CHL algorithm was originally introduced in the case of the Boltzmann machine \citep{ackley1985learning} and then extended to the case of the continuous Hopfield network \citep{movellan1991contrastive,baldi1991contrastive}.
We note that Boltzmann machines may be trained with EqProp, via the stochastic version presented in Section \ref{sec:stochastic-setting}.

\subsection{Contrastive Hebbian Learning in the Continuous Hopfield Model}

Like EqProp, the CHL algorithm proceeds in two phases and uses a free phase. But unlike EqProp, it uses a \textit{clamped phase} as a second phase for training, instead of a \textit{nudged phase}. Bringing this modification to the EqProp training procedure described in section \ref{sec:eqprop-hopfield}, we arrive at the following algorithm, proposed by \citet{movellan1991contrastive}.

\paragraph{Free phase.}
As in EqProp, the first phase is a free phase (also called `negative phase'): inputs $x$ are clamped, and both the hidden and output neurons evolve freely, following the gradient of the energy function. The hidden and output neurons stabilize to an energy minimum called free state and denoted $\left( h_\star^-, o_\star^- \right)$. We write $s^- = \left( x, h_\star^-, o_\star^- \right)$. At the free state, every synapse undergoes an anti-Hebbian update. That is, for any synapse $W_{ij}$ (connecting neuron $i$ to neuron $j$), we perform the weight update $\Delta W_{ij} = - \eta \; \sigma \left( s_i^- \right) \sigma \left( s_j^- \right)$.

\paragraph{Clamped phase.}
The second phase is a `clamped phase' (also called `positive phase'): not only inputs are clamped, but also outputs are now clamped to their target value $y$. The hidden neurons evolve freely and stabilize to another energy minimum $h_\star^+$. We write $s^+ = \left( x, h_\star^+, y \right)$ and call this configuration the \textit{clamped state}. At the clamped state, every synapse undergoes a Hebbian update. That is, for any synapse $W_{ij}$ (connecting neuron $i$ to neuron $j$), we perform the weight update $\Delta W_{ij} = + \eta \; \sigma \left( s_i^+ \right) \sigma \left( s_j^+ \right)$.

\paragraph{Global update.}
Putting the weight updates of the free phase and clamped phase together, we get the global update of the CHL algorithm:
\begin{equation}
    \label{eq:global-chl-update}
    \Delta W_{ij} = \eta \left( \sigma \left( s_i^+ \right) \sigma \left( s_j^+ \right) - \sigma \left( s_i^- \right) \sigma \left( s_j^- \right) \right).
\end{equation}

\subsection{An Intuition Behind Contrastive Hebbian Learning}

Both CHL and EqProp have the desirable property that learning stops when the network correctly predicts the target. Specifically, in CHL, when the equilibrium state of the free phase (the free state) matches the equilibrium state of the clamped phase (the clamped state), the two terms of the weight update (Eq.~\ref{eq:global-chl-update}) cancel out, thus yielding an effective weight update of zero. In other words, if the network already provides the correct output, then no learning occurs.

It is instructive to verify that EqProp preserves this property, even in its general formulation (Chapter \ref{chapter:eqprop}). Suppose that the equilibrium state ($s_\star^0$) corresponding to an input $x$ provides the correct answer ($y$), i.e. suppose that $s_\star^0$ is a minimum of the function $s \mapsto C(s, y)$. This implies that $\frac{\partial C}{\partial s}(s_\star^0, y)=0$. Using the fact that $\frac{\partial E}{\partial s}(\theta, x, s_\star^0) = 0$ by definition of $s_\star^0$, we get $\frac{\partial E}{\partial s}(\theta, x, s_\star^0) + \beta \; \frac{\partial C}{\partial s}(s_\star^0, y)=0$ for any value of $\beta$. This implies that $s_\star^\beta = s_\star^0$ for any $\beta$, by definition of $s_\star^\beta$. As a consequence, the two terms in the learning rule of EqProp cancel out. We note that this property remains true in the case of the symmetric difference estimator (Eq.~\ref{eq:two-sided-estimator}).

\subsection{A Loss Function for Contrastive Hebbian Learning}

The global learning rule of the CHL algorithm rewrites in terms of the energy function (the Hopfield energy) as
\begin{equation}
    \label{eq:chl-learning-rule}
    \Delta \theta = \eta \left( - \frac{\partial E}{\partial \theta} \left( \theta, x, h_\star^+, y \right) + \frac{\partial E}{\partial \theta} \left( \theta, x, h_\star^-, o_\star^- \right) \right).
\end{equation}
Here $\left( x, h_\star^+, y \right)$ is the clamped state, and $\left( x, h_\star^-, o_\star^- \right)$ is the free state. In this form, the CHL update rule stipulates to decrease the energy value of the clamped state and to increase the energy value of the free state. Since low-energy configurations correspond to preferred states of the model under the gradient dynamics, the CHL update rule thus increases the likelihood that the model produces the correct output ($y$), and decreases the likelihood that it generates again the same output ($o_\star$).

\medskip

\begin{prop}[\citet{movellan1991contrastive}]
\label{prop:loss-chl}
The CHL update rule (Eq.~\ref{eq:chl-learning-rule}) is equal to
\begin{equation}
\Delta \theta = - \eta \frac{\partial \mathcal{L}^{\rm CHL}}{\partial \theta}(\theta, x, y),
\end{equation}
where $\mathcal{L}^{\rm CHL}$ is the loss defined by
\begin{equation}
    \mathcal{L}^{\rm CHL}(\theta, x, y) = E \left( \theta, x, h_\star^+, y \right) - E \left( \theta, x, h_\star^-, o_\star^- \right).
\end{equation}
\end{prop}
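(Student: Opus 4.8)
The statement is an instance of the envelope theorem, and I would prove it by a direct chain-rule computation, exploiting the fact that both the free state and the clamped state are \emph{stationary} points of the energy with respect to the free variables. The plan is as follows.

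First I would recall the definitions precisely. The clamped state $\left( x, h_\star^+, y \right)$ is obtained by letting the hidden neurons relax (with inputs clamped to $x$ and outputs clamped to $y$) to a minimum of the map $h \mapsto E(\theta, x, h, y)$; hence $\frac{\partial E}{\partial h}(\theta, x, h_\star^+, y) = 0$. The free state $\left( x, h_\star^-, o_\star^- \right)$ is the equilibrium state of the free phase, so, writing $s = (h, o)$, we have $\frac{\partial E}{\partial s}(\theta, x, h_\star^-, o_\star^-) = 0$, i.e. both $\frac{\partial E}{\partial h}$ and $\frac{\partial E}{\partial o}$ vanish there. Under the standing differentiability assumptions of the chapter (invertibility of the relevant Hessians, so that the implicit function theorem applies as in Lemma~\ref{lma:main}), $h_\star^+$, $h_\star^-$ and $o_\star^-$ are smooth functions of $\theta$ near the point of interest.

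Next I would differentiate $\mathcal{L}^{\rm CHL}(\theta, x, y) = E \left( \theta, x, h_\star^+, y \right) - E \left( \theta, x, h_\star^-, o_\star^- \right)$ with respect to $\theta$, viewing each term as a composite function of $\theta$ (both explicitly and through the equilibrium variables). For the first term the chain rule gives
\begin{equation}
\frac{d}{d\theta} E(\theta, x, h_\star^+, y) = \frac{\partial E}{\partial \theta}(\theta, x, h_\star^+, y) + \underbrace{\frac{\partial E}{\partial h}(\theta, x, h_\star^+, y)}_{=\,0} \cdot \frac{\partial h_\star^+}{\partial \theta} = \frac{\partial E}{\partial \theta}(\theta, x, h_\star^+, y),
\end{equation}
and similarly, using that both $\frac{\partial E}{\partial h}$ and $\frac{\partial E}{\partial o}$ vanish at the free state,
\begin{equation}
\frac{d}{d\theta} E(\theta, x, h_\star^-, o_\star^-) = \frac{\partial E}{\partial \theta}(\theta, x, h_\star^-, o_\star^-).
\end{equation}
Subtracting gives $\frac{\partial \mathcal{L}^{\rm CHL}}{\partial \theta}(\theta, x, y) = \frac{\partial E}{\partial \theta}(\theta, x, h_\star^+, y) - \frac{\partial E}{\partial \theta}(\theta, x, h_\star^-, o_\star^-)$, so that $-\eta \frac{\partial \mathcal{L}^{\rm CHL}}{\partial \theta}$ coincides term by term with the CHL update rule of Eq.~\ref{eq:chl-learning-rule}. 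Finally I would note that this matches the per-synapse form Eq.~\ref{eq:global-chl-update} by specializing to the Hopfield energy, whose sum-separable factors $E_{ij} = -W_{ij}\,\sigma(s_i)\sigma(s_j)$ have $\frac{\partial E_{ij}}{\partial W_{ij}} = -\sigma(s_i)\sigma(s_j)$.

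\textbf{Main obstacle.} The computation itself is immediate; the only delicate point is the regularity hypothesis that legitimizes differentiating through the equilibrium variables — i.e. that $h_\star^\pm$ and $o_\star^-$ are well-defined and differentiable functions of $\theta$, which requires the relevant Hessian blocks ($\frac{\partial^2 E}{\partial h^2}$ at the clamped state, $\frac{\partial^2 E}{\partial s^2}$ at the free state) to be invertible. As elsewhere in the manuscript, I would invoke the implicit function theorem and otherwise treat this as a non-issue (``we pretend everything is differentiable''), since addressing singular Hessians would be a digression of the kind discussed in the Remarks section.
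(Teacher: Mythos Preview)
Your proof is correct, and it is exactly the envelope-theorem argument the paper uses elsewhere (compare the proof of Lemma~\ref{lma:main}). Note, however, that the paper does not actually supply a proof of Proposition~\ref{prop:loss-chl}: it is stated with attribution to \citet{movellan1991contrastive} and left unproved. Your derivation fills that gap in precisely the spirit of the manuscript.
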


The loss $\mathcal{L}^{\rm CHL}$ has the problem that the two phases of the CHL algorithm may stabilize in different modes of the energy function. \citet{movellan1991contrastive} points out that when this happens, the weight update is inconsistent and learning usually deteriorates. Similarly, \citet{baldi1991contrastive} note abrupt discontinuities due to basin hopping phenomena.

EqProp solves this problem by optimizing the loss $\mathcal{L} = C(s_\star, y)$, whose gradient can be estimated using nudged states ($s_\star^\beta$) that are infinitesimal continuous deformations of the free state ($s_\star$), and are thus in the same `mode' of the energy landscape.
\chapter{Training Nonlinear Resistive Networks with Equilibrium Propagation}
\label{chapter:neuromorphic}

In the previous chapter, we have discussed the bio-realism of EqProp in the setting of Hopfield networks. Learning in this context is achieved using solely leaky integrator neurons (in both phases of training) and a local (Hebbian) weight update. These bio-realistic features are of interest not only for neuroscience, but also for neuromorphic computing, towards the goal of building fully analog neural networks supporting on-chip learning. Recently, several works have proposed analog implementations of EqProp in the context of Hopfield networks \citep{zoppo2020equilibrium,foroushani2020analog,ji2020towards} and spiking variants \citep{o2019training,martin2020eqspike}.

Here we investigate a different approach to implement EqProp on neuromorphic chips. We emphasize that EqProp is not limited to the Hopfield model and the gradient systems of Chapter \ref{chapter:hopfield}, but more broadly applies to any system whose equilibrium state $s_\star$ is a solution of a variational equation $\frac{\partial E}{\partial s}(s_\star) = 0$, where $E(s)$ is a scalar function -- what we have called an \textit{energy-based model} (EBM) in Chapter \ref{chapter:eqprop}. Importantly, many physical systems can be described by variational principles, as a reformulation of the physical laws characterizing their state. This suggests a path to build highly efficient energy-based models grounded in physics, with EqProp as a learning algorithm for training.

In this chapter, we exploit the fact that a broad class of analog neural networks called \textit{nonlinear resistive networks} can be described by such a variational principle. Nonlinear resistive network are electrical circuits consisting of nodes interconnected by (linear or nonlinear) resistive elements. These circuits can serve as analog neural networks, in which the weights to be adjusted are implemented by the conductances of programmable resistive devices such as memristors \citep{chua1971memristor}, and the nonlinear transfer functions (or `activation functions') are implemented by nonlinear components such as diodes. The `energy function' in these nonlinear resistive networks is a quantity called the \textit{co-content} \citep{millar1951cxvi} or \textit{total pseudo-power} \citep{johnson2010nonlinear} of the circuit, and its existence can be derived directly from Kirchhoff's laws. Moreover, this energy function has the sum-separability property: the total pseudo-power of the circuit is the sum of the pseudo-powers of its individual elements. As a consequence, we can train these analog networks with EqProp, and the update rule for each conductance, which follows the gradient of the loss, is local. Specifically, we show mathematically that the gradient with respect to a conductance can be estimated using solely the voltage drop across the corresponding resistor. This theoretical result provides a principled method to train end-to-end analog neural networks by stochastic gradient descent, thus suggesting a path towards the development of ultra-fast, compact and low-power learning-capable neural networks.

The present chapter, which is essentially a rewriting of \citet{kendall2020training}, is articulated as follows.
\begin{itemize}
\item In section \ref{sec:analog-neural-network}, we briefly present a class of analog neural networks called \textit{nonlinear resistive networks}, as well as the concept of \textit{programmable resistors} that play the role of synapses.
\item In section \ref{sec:nonlinear-resistive-network-ebm}, we show that these nonlinear resistive networks are energy-based models: at inference, the configuration of node voltages chosen by the circuit corresponds to the minimum of a mathematical function (the \textit{energy function}) called the \textit{co-content} (or \textit{total pseudo-power}) of the circuit, as a consequence of Kirchhoff's laws (Lemma \ref{lma:power}). This suggests an implementation of energy-based neural networks grounded in electrical circuit theory, which also bridges the conceptual gap between energy functions (at a mathematical level\footnote{In an energy-based model, the \textit{energy function} is a mathematical abstraction of the model, not a physical energy.}), and physical energies\footnote{Specifically the power dissipated in resistive devices.} (at a hardware level).
\item In section \ref{sec:nonlinear-resistive-network-eqprop}, we show how these nonlinear resistive networks can be trained with EqProp, and we derive the formula for updating the conductances (the synaptic weights) in proportion to their loss gradients, using solely the voltage drops across the corresponding resistive devices (Theorem \ref{lma:gradients}).
\item In section \ref{sec:analog-network-model}, as a proof of concept of what is possible with this neuromorphic hardware methodology, we propose an analog network architecture inspired by the deep Hopfield network, which alternates linear and nonlinear processing stages (Fig.~\ref{fig:network}).
\item In section \ref{sec:numerical-simulations}, we present numerical simulations on the MNIST dataset, using a SPICE-based framework to simulate the circuit's dynamics.
\end{itemize}

By explicitly decoupling the training procedure (EqProp in Section \ref{sec:nonlinear-resistive-network-eqprop}) from the specific neural network architecture presented (Section \ref{sec:analog-network-model}), we stress that this optimization method is applicable to any resistive network architecture, not just the one of Section \ref{sec:analog-network-model}. This modular approach thus offers the possibility to explore the design space of analog network architectures trainable with EqProp, in essentially the same way as deep learning researchers explore the design space of differentiable neural networks trainable with backpropagation.

\section{Nonlinear Resistive Networks as Analog Neural Networks}
\label{sec:analog-neural-network}

Nonlinear resistive networks are electrical circuits consisting of arbitrary two-terminal resistive elements -- see \citet[Chapter~3]{muthuswamy2018introduction} for an introduction. We can use such circuits to build neural networks. In the supervised learning scenario, we use a subset of the nodes of the circuit as input nodes, and another subset of the nodes as output nodes. We use voltage sources to impose the voltages at input nodes: after the circuit has settled to steady state, the voltages of output nodes indicate the `prediction'. The circuit thus implements an input-to-output mapping function, with the node voltages representing the state of the network. This mapping function can be nonlinear if we include nonlinear resistive elements such as diodes in the circuit, and the conductance values of resistors can be thought of as parameterizing this mapping function.

A \textit{programmable resistor} is a resistor whose conductance can be changed (or `programmed'), and thus can play the role of a `weight' to be adjusted. Programmable resistors can thus implement the synapses of a neural network. In the last decade, many technologies have emerged, and have been proposed and studied as programmable resistors. We refer to \citet{burr2017neuromorphic} and \citet{xia2019memristive} for reviews on existing technologies, their working mechanisms, and how they are used for neuromorphic computing. For convenience, in most of this chapter we will think of programmable resistors as ideally tunable, which is a convenient concept to formalize mathematically the goal of learning in nonlinear resistive networks. However, this is an ideal and unrealistic assumption: in practice, far from being ideally tunable, these programmable resistive devices currently present important challenges for the coming decade of research to solve. We refer to \citet{chang2017mitigating} for an analysis of these challenges to be overcome. In this manuscript, we will not discuss how the programming of a conductance can be done and implemented in hardware.

We note that nonlinear resistive networks have been studied as neural network models since the 1980s \citep{hutchinson1988computing,harris1989resistive}.

\section{Nonlinear Resistive Networks are Energy-Based Models}
\label{sec:nonlinear-resistive-network-ebm}

In this section we show that, in a nonlinear resistive network, the steady state of the circuit imposed by Kirchhoff's laws is a stationary point of a function called the \textit{co-content}, or \textit{total pseudo-power} (Lemma \ref{lma:power}). Thus, nonlinear resistive networks are energy-based models whose energy function is the total pseudo-power. Furthermore, the total pseudo-power has the sum-separability property, being by definition the sum of the pseudo-powers of its individual components.

We first present in section \ref{sec:linear-resistance-network} the case of linear resistance networks. Although this model is functionally not very useful (as a neural network model), studying it is helpful to gain understanding of the working mechanisms of analog neural networks: it helps understand the limits of linear resistances and the need to introduce nonlinear elements (section \ref{sec:resistive-elements}). In section \ref{sec:nonlinear-resistive}, we derive the general result for nonlinear resistive networks.

\subsection{Linear Resistance Networks}
\label{sec:linear-resistance-network}

A \textit{linear resistance network} is an electrical circuit whose nodes are linked pairwise by \textit{linear resistors}, i.e. resistors that satisfy Ohm's law. We recall that, in a linear resistor, Ohm's law states that $I_{ij} = g_{ij} (V_i-V_j)$, where $I_{ij}$ is the current through the resistor, $g_{ij}$ is its conductance ($g_{ij} = \frac{1}{R_{ij}}$ where $R_{ij}$ is the resistance), and $V_i$ and $V_j$ are its terminal voltages.

Consider the following question: we impose the voltages at a set of input nodes, and we want to know what are the voltages at other nodes of the circuit. We can answer this question by writing Ohm's law in every branch, Kirchhoff's current law at every node, and by solving the set of equations obtained for all node voltages and all branch currents. But there is a more elegant way to characterize the steady state of the circuit. Kirchhoff's current law gives $\sum_j I_{ij} = 0$ for every node $i$. Combined with Ohm's law, we get $\sum_j g_{ij} (V_i-V_j) = 0$. Now note that the left-hand side of this expression is equal to $\frac{1}{2} \frac{\partial \mathcal{P}}{\partial V_i}$, where $\mathcal{P}(V_1, V_2, \ldots, V_N)$ is the functional defined by
\begin{equation}
    \mathcal{P}(V_1, V_2, \ldots, V_N) = \sum_{i<j} g_{ij} \left(V_j-V_i \right)^2.
    \label{eq:power-linear-resistance-network}
\end{equation}
This means that, among all \textit{conceivable} configurations of node voltages, the configuration that is physically realized is a stationary point of the functional $\mathcal{P}(V_1, V_2, \ldots, V_N)$. Therefore, linear resistance networks are energy-based models, with the configuration of node voltages $V = (V_1, V_2, \ldots, V_N)$ playing the role of state variable, and the functional $\mathcal{P}(V_1, V_2, \ldots, V_N)$ playing the role of energy function.

The functional $\mathcal{P}(V_1, V_2, \ldots, V_N)$ is called the \textit{power functional}, because it represents the total power dissipated in the circuit, with $\frac{1}{2} g_{ij} \left(V_j-V_i \right)^2$ being the power dissipated in the resistor connecting node $i$ to node $j$. Since $\mathcal{P}$ is convex, the steady state of the circuit is not just a stationary point of $\mathcal{P}$, but also the global minimum. This well-known result of circuit theory is called the \textit{principle of minimum dissipated power}: if we impose the voltages at a set of input nodes, the circuit will choose the voltages at other nodes so as to minimize the total power dissipated in the resistors (Fig.~\ref{fig:minimum-power}).

However, linear resistance networks are not very useful as neural network models since they cannot implement nonlinear operations. Rewriting Kirchhoff's current law at node $i$, we get $V_i = \frac{\sum_j g_{ij} V_j}{\sum_j g_{ij}}$. This operation resembles the usual multiply-accumulate operation of artificial neurons in conventional deep learning, but with the notable difference that there is no nonlinear activation function. Another difference is the presence of the factor $G_i = \sum_j g_{ij}$ at the denominator, which replaces the usual weighted sum by a weighted mean: each floating node voltage $V_i$ is a weighted mean of its neighbors.

From this analysis, it appears that nonlinear elements such as diodes are necessary to perform nonlinear operations. In the rest of this section, we generalize the result of this subsection to the setting of nonlinear resistive networks.

\begin{figure*}[ht!]
\begin{center}
\includegraphics[width=0.5\textwidth]{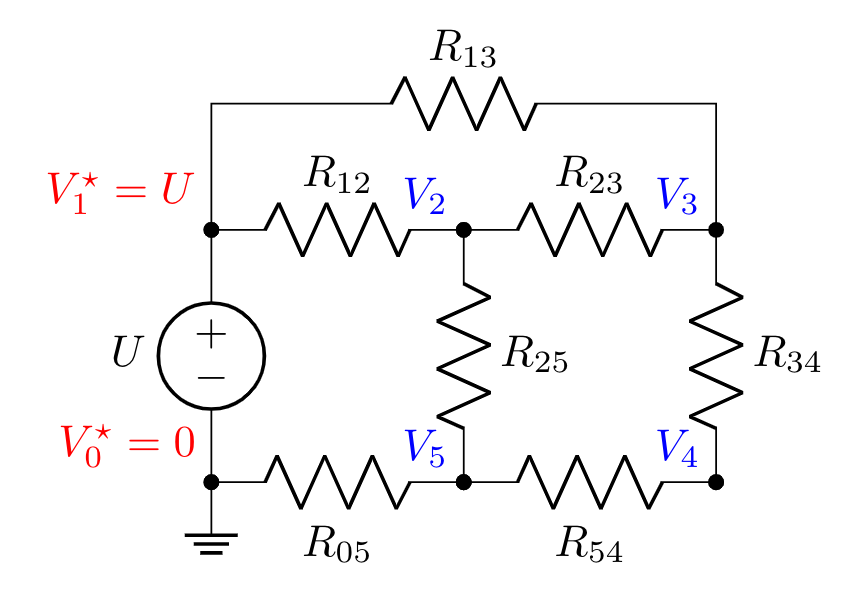}
\end{center}
\caption[Illustration of the principle of minimum dissipated power in a linear resistance network]{
\textbf{Principle of minimum dissipated power.} In a linear resistance network, if we impose the voltages at a set of input nodes ($V_0 = 0$ and $V_1 = U$ here), the voltages at other nodes ($V_2$, $V_3$, $V_4$ and $V_5$ here) is such that the total power dissipated in the resistors is minimized. A generalization of this result to nonlinear resistive networks exists (Lemma \ref{lma:power}).
}
\label{fig:minimum-power}
\end{figure*}

\subsection{Two-Terminal Resistive Elements}
\label{sec:resistive-elements}

In this subsection, we follow the method of \citet{johnson2010nonlinear} to generalize the notion of `power dissipated in a linear resistor' to arbitrary two-terminal resistive elements.

\paragraph{Current-voltage characteristic.}
Consider a two-terminal resistive element with terminals $i$ and $j$, characterised by a well-defined and continuous current-voltage characteristic $\gamma_{ij}$. The function $\gamma_{ij}$ takes as input the voltage drop $\Delta V_{ij} = V_i - V_j$ across the component and returns the current $I_{ij} = \gamma_{ij} \left( \Delta V_{ij} \right)$ moving from node $i$ to node $j$ in response to $\Delta V_{ij}$. Since the current flowing from $i$ to $j$ is the negative of the current flowing from $j$ to $i$, we have by definition:
\begin{equation}
    \label{eq:antisymmetry}
    \forall i, j, \qquad \gamma_{ij} \left( \Delta V_{ij} \right) = - \gamma_{ji} \left( \Delta V_{ji} \right)
\end{equation}
where $\Delta V_{ji} = - \Delta V_{ij}$.

For example, the current-voltage characteristic of a linear resistor of conductance $g_{ij}$ linking node $i$ to node $j$ is, by Ohm's law, $I_{ij} = g_{ij} \Delta V_{ij}$. By definition of $\gamma_{ij}$, this implies that
\begin{equation}
    \label{eq:IVresistor}
    \gamma_{ij} \left( \Delta V_{ij} \right) = g_{ij} \Delta V_{ij}.
\end{equation}

\paragraph{Pseudo-power.}
For each two-terminal element with current-voltage characteristic $I_{ij} = \gamma_{ij}(\Delta V_{ij})$, we define $p_{ij}(\Delta V_{ij})$ as the primitive function of $\gamma_{ij}(\Delta V_{ij})$ that vanishes at $0$, i.e.
\begin{equation}
    \label{eq:pseudo-power}
    p_{ij}(\Delta V_{ij}) = \int_0^{\Delta V_{ij}} \gamma_{ij}(v) dv.
\end{equation}
The quantity $p_{ij} \left( \Delta V_{ij} \right)$ has the physical dimensions of power, being a product of a voltage and a current. We call $p_{ij} \left( \Delta V_{ij} \right)$ the \textit{pseudo-power} along the branch from $i$ to $j$, following the terminology of \citet{johnson2010nonlinear}. Note that as a consequence of Eq.~\ref{eq:antisymmetry} we have
\begin{equation}
    \label{eq:symmetry}
    \forall i, j, \qquad p_{ij}(\Delta V_{ij}) = p_{ji}(\Delta V_{ji}),
\end{equation}
i.e. the pseudo-power from $i$ to $j$ is equal to the pseudo-power from $j$ to $i$. We call this property the \textit{pseudo-power symmetry}.

For example, in the case of a linear resistor of conductance $g_{ij}$ linking node $i$ to node $j$, the pseudo-power corresponding to the current-voltage characteristic of Eq.~\ref{eq:IVresistor} is:
\begin{equation}
\label{eq:pseudo-resistor}
p_{ij}(\Delta V_{ij}) = \frac{1}{2} g_{ij} \Delta V_{ij}^2.
\end{equation}
In this case, the pseudo-power is half the physical power dissipated in the resistor.

\subsection{Nonlinear Resistive Networks}
\label{sec:nonlinear-resistive}

A \textit{nonlinear resistive network} is a circuit consisting of interconnected two-terminal resistive elements. We number the nodes of the circuit $i=1, 2, \ldots, N$.

\paragraph{Configuration.}
We call a vector of voltage values $V = \left( V_1, V_2, \ldots, V_N \right)$ a \textit{configuration}. Importantly, a configuration can be any vector of voltage values, even those that are not compatible with Kirchhoff's current law (KCL).

\paragraph{Total pseudo-power (also called co-content).}
Recall the definition of the pseudo-power of a two-terminal element (Eq.~\ref{eq:pseudo-power}). We define the \textit{total pseudo-power} of a configuration $V = \left( V_1, V_2, \ldots, V_N \right)$ as the sum of pseudo-powers along all branches:
\begin{equation}
    \label{eq:total-pseudo-power}
    \mathcal{P}(V_1, \cdots, V_N) = \sum_{i<j} p_{ij}(V_i - V_j).
\end{equation}
We note that the pseudo-power symmetry (Eq.~\ref{eq:symmetry}) guarantees that this definition does not depend on node ordering. In the case of a linear resistance network, the total pseudo-power of the circuit is half the power functional of Eq.~\ref{eq:power-linear-resistance-network}.

We stress that $\mathcal{P}$ is a mathematical function defined on any configuration $V_1, V_2, \ldots, V_N$, even those that are not compatible with KCL.

\paragraph{Steady state.}
We denote $V_1^\star$, $V_2^\star$, $\ldots$, $V_N^\star$ the configuration of node voltages imposed by Kirchhoff's current law (KCL), and we call $V^\star = \left( V_1^\star, V_2^\star, \ldots, V_N^\star \right)$ the \textit{steady state} of the circuit. Specifically, for every (internal or output) floating node $i$, KCL implies $\sum_{j=1}^N I_{ij} = 0$, which rewrites
\begin{equation}
    \label{eq:KCL}
    \sum_{j=1}^N \gamma_{ij} \left( V_i^\star-V_j^\star \right) = 0.
\end{equation}

The following result, known since \citet{millar1951cxvi}, shows that the circuit is an energy-based model, whose energy function is the total pseudo-power.

\medskip

\begin{lma}
\label{lma:power}
The steady state of the circuit, denoted $\left( V_1^\star, V_2^\star, \ldots, V_N^\star \right)$, is a stationary point\footnote{With further assumptions on the current-voltage characteristics $\gamma_{ij}$, \citet{christianson2007dirichlet}, as well as \citet{johnson2010nonlinear}, show that the function $\mathcal{P}$ is convex, so that the steady state is the global minimum of $\mathcal{P}$. However, in the context of EqProp, all one needs is the first order condition, i.e. the fact that the steady state is a stationary point of $\mathcal{P}$, not necessarily a minimum.}
of the total pseudo-power: for every floating node $i$, we have
\begin{equation}
    \frac{\partial \mathcal{P}}{\partial V_i} \left( V_1^\star, V_2^\star, \ldots, V_N^\star \right) = 0.
\end{equation}
\end{lma}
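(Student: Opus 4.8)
The plan is to differentiate the total pseudo-power $\mathcal{P}(V_1, \ldots, V_N) = \sum_{i<j} p_{ij}(V_i - V_j)$ directly with respect to a floating node voltage and to recognize the result as the left-hand side of Kirchhoff's current law (Eq.~\ref{eq:KCL}), which vanishes at the steady state by hypothesis. First I would fix a floating node, call it $m$ to avoid clashing with the summation indices, and isolate the summands of $\mathcal{P}$ that depend on $V_m$: those of the form $p_{mj}(V_m - V_j)$ with $j > m$, and those of the form $p_{im}(V_i - V_m)$ with $i < m$. Since $p_{ij}$ is by construction the primitive of $\gamma_{ij}$ vanishing at the origin (Eq.~\ref{eq:pseudo-power}), the chain rule gives contributions $\gamma_{mj}(V_m - V_j)$ from the first group and $-\gamma_{im}(V_i - V_m)$ from the second.

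Next I would use the antisymmetry relation $\gamma_{ij}(\Delta V_{ij}) = -\gamma_{ji}(\Delta V_{ji})$ (Eq.~\ref{eq:antisymmetry}) to rewrite $-\gamma_{im}(V_i - V_m) = \gamma_{mi}(V_m - V_i)$, so that the two groups take the same form and collapse into a single sum over all other nodes:
\begin{equation}
\frac{\partial \mathcal{P}}{\partial V_m}(V_1, \ldots, V_N) = \sum_{j \neq m} \gamma_{mj}(V_m - V_j),
\end{equation}
where pairs $\{m, j\}$ not joined by any resistive element contribute $\gamma_{mj} \equiv 0$ and may be included harmlessly. Evaluating at the steady state $V^\star = (V_1^\star, \ldots, V_N^\star)$ and invoking Eq.~\ref{eq:KCL} for the floating node $m$ gives $\frac{\partial \mathcal{P}}{\partial V_m}(V^\star) = 0$, which is the claim.

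There is no genuine obstacle here; the computation is elementary once the pseudo-power has been set up as in Section~\ref{sec:resistive-elements}. The one point that demands care is the bookkeeping forced by writing $\mathcal{P}$ as a sum over ordered pairs $i < j$: a given node voltage appears as the \emph{second} argument in half of the relevant $p_{ij}$'s, and the minus sign this produces under differentiation is exactly what the antisymmetry of the current--voltage characteristics absorbs. It is also worth stating explicitly that these partial derivatives are taken only with respect to floating node voltages --- the input nodes are held at imposed values by the voltage sources and are not free variables in this variational characterization --- and noting that the pseudo-power symmetry (Eq.~\ref{eq:symmetry}), although needed to make $\mathcal{P}$ well-defined independently of node ordering, plays no role in the derivative computation itself.
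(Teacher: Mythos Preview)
Your proof is correct and follows essentially the same approach as the paper: differentiate $\mathcal{P}$ with respect to a floating node voltage, recognize the result as the sum of currents leaving that node, and invoke KCL. The only cosmetic difference is where the symmetry is applied: the paper uses the pseudo-power symmetry (Eq.~\ref{eq:symmetry}) to rewrite every term involving node $i$ as $p_{ij}(V_i-V_j)$ \emph{before} differentiating, arriving at $\sum_j \gamma_{ij}(V_i^\star-V_j^\star)$ in one line, whereas you differentiate first and then use the antisymmetry of $\gamma$ (Eq.~\ref{eq:antisymmetry}) to merge the two groups --- so your remark that the pseudo-power symmetry ``plays no role'' is true of your route but not of the paper's, though the two are of course equivalent.
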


\begin{proof}[Proof of Lemma \ref{lma:power}]
We use the definition of the total pseudo-power (Eq.~\ref{eq:total-pseudo-power}), the pseudo-power symmetry (Eq.~\ref{eq:symmetry}), the definition of the pseudo-power (Eq.~\ref{eq:pseudo-power}) and the fact that the steady state of the circuit satisfies Kirchhoff's current law (Eq.~\ref{eq:KCL}). For every floating node $i$ we have:
\begin{equation}
\frac{\partial \mathcal{P}}{\partial V_i} \left( V_1^\star, V_2^\star, \ldots, V_N^\star \right) = \sum_j \frac{\partial p_{ij}}{\partial V_i}(V_i^\star-V_j^\star) = \sum_j \gamma_{ij}(V_i^\star-V_j^\star) = 0.
\end{equation}
\end{proof}

Equipped with this result, we can now derive a procedure to train nonlinear resistive networks with EqProp.

\section{Training Nonlinear Resistive Networks with Equilibrium Propagation}
\label{sec:nonlinear-resistive-network-eqprop}

\subsection{Supervised Learning Setting}

In the supervised learning setting, a subset of the nodes of the circuit are \textit{input nodes}, at which input voltages (denoted $X$) are sourced. All other nodes -- the \textit{internal nodes} and \textit{output nodes} -- are left floating: after the voltages of input nodes have been set, the voltages of internal and output nodes settle to their steady state. The output nodes, denoted $\widehat{Y}$, represent the readout of the system, i.e. the model prediction. The architecture and the components of the circuit determine the $X \mapsto \widehat{Y}$ mapping function. Specifically, the conductances of the programmable resistors, denoted $\theta$, parameterize this mapping function. That is, $\widehat{Y}$ can be written as a function of $X$ and $\theta$ in the form $\widehat{Y}(\theta, X)$. Training such a circuit consists in adjusting the values of the conductances ($\theta$) so that the voltages of output nodes ($\widehat{Y}$) approach the target voltages ($Y$). Formally, we cast the goal of training as an optimization problem in which the loss to be optimized (corresponding to an input-target pair $(X, Y)$) is of the form:
\begin{equation}
    \label{eq:loss}
    \mathcal{L}(\theta, X, Y) = C \left( \widehat{Y}(\theta, X), Y \right).
\end{equation}

We have seen that nonlinear resistive networks are energy-based models (Lemma \ref{lma:power}) and that the energy function (the total pseudo-power) is sum-separable, by definition (Eq.!\ref{eq:total-pseudo-power}). This enables us to use EqProp in such analog neural networks to compute the gradient of the loss. Theorem \ref{lma:gradients} below provides a formula for computing the loss gradient with respect to a conductance using solely the voltage drop across the corresponding resistor.

\subsection{Training Procedure}
\label{sec:resistive-networks-algo}

Given an input $X$ and associated target $Y$, EqProp proceeds in the following two phases.

\paragraph{Free phase.}
At inference, input voltages are sourced at input nodes ($X$), while all other nodes of the circuit (the internal nodes and output nodes) are left floating. All internal and output node voltages are stored\footnote{On practical neuromorphic hardware, this can be achieved using a capacitor or sample-and-hold amplifier (SHA) circuit, for instance. We note that we only need one SHA per node (neuron), not per synapse. We will not discuss these aspects of implementation here.
}. In particular, the voltages of output nodes ($\widehat{Y}$) corresponding to prediction are compared with the target ($Y$) to compute the loss $\mathcal {L} = C ( \widehat{Y}, Y)$.

\paragraph{Nudged phase.}
For each output node $\widehat{Y}_k$, a current $I_k = - \beta \frac{\partial C}{\partial \widehat{Y}_k}$ is sourced at $\widehat{Y}_k$, where $\beta$ is a positive or negative scaling factor (the \textit{nudging factor}). All internal node voltages and output node voltages are measured anew.

\medskip

\begin{thm}[\citet{kendall2020training}]
\label{lma:gradients}
Consider a two-terminal component whose terminals are $i$ and $j$. Denote $\Delta V_{ij}^0$ the voltage drop across this two-terminal component in the free phase (when no current is sourced at output nodes), and $\Delta V_{ij}^\beta$ the voltage drop in the nudged phase (when a current $I_k = - \beta \frac{\partial C}{\partial \widehat{Y}_k}$ is sourced at each output node $\widehat{Y}_k$). Let $w_{ij}$ denote an adjustable parameter of this component, and $p_{ij}$ its pseudo-power (which depends on $w_{ij}$). Then, the gradient of the loss ${\mathcal L}  = C \left( \widehat{Y}, Y \right)$ with respect to $w_{ij}$ can be estimated as
\begin{equation}
\label{eq:device-gradient}
\frac{\partial {\mathcal L}}{\partial w_{ij}} =
\lim_{\beta \to 0} \frac{1}{\beta} \left( \frac{\partial p_{ij} \left( \Delta V^\beta_{ij} \right)}{\partial w_{ij}} -  \frac{\partial p_{ij} \left( \Delta V^0_{ij} \right)}{\partial w_{ij}} \right).
\end{equation}
In particular, if the component is a linear resistor of conductance $g_{ij}$, then the loss gradient with respect to $g_{ij}$ can be estimated as
\begin{equation}
\label{eq:conductance-gradient}
\frac{\partial {\mathcal L}}{\partial g_{ij}} = \lim_{\beta \to 0} \frac{1}{2 \beta} \left( \left( \Delta V^\beta_{ij} \right)^2 -  \left( \Delta V^0_{ij} \right)^2 \right).
\end{equation}
\end{thm}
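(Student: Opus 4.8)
The plan is to recognize the nonlinear resistive network as a sum-separable energy-based model and then invoke Theorem~\ref{thm:static-eqprop} essentially verbatim. Concretely, I would take the state variable $s$ to be the vector of voltages at the internal and output nodes (the input node voltages $X$ being held fixed by the voltage sources, hence playing the role of the input $x$), and I would take the energy function to be the total pseudo-power $E(\theta, X, s) = \mathcal{P}(V_1, \ldots, V_N) = \sum_{i<j} p_{ij}(V_i - V_j)$ of Eq.~(\ref{eq:total-pseudo-power}), with the adjustable parameters $\theta = \{ w_{ij} \}$ entering through the individual pseudo-powers $p_{ij}$. Lemma~\ref{lma:power} already tells us that the free-phase steady state $V^0$ satisfies $\partial \mathcal{P}/\partial V_i = 0$ at every floating node, i.e. that it is an equilibrium state in the sense of Eq.~(\ref{eq:free-equilibrium-state}).

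Next I would verify that the nudged phase realizes the nudged equilibrium condition of the abstract framework. Sourcing a current $I_k$ at output node $k$ turns Kirchhoff's current law at that node into $\sum_j \gamma_{kj}(V_k - V_j) = I_k$, i.e. $\partial \mathcal{P}/\partial V_k = I_k$ (using the computation in the proof of Lemma~\ref{lma:power}); at every other floating node KCL is unchanged, so $\partial \mathcal{P}/\partial V_i = 0$ there. Since the cost $C$ depends on the node voltages only through the output voltages $\widehat{Y}$, choosing $I_k = -\beta\,\partial C/\partial \widehat{Y}_k$ makes the nudged steady state $V^\beta$ satisfy $\partial \mathcal{P}/\partial V_i + \beta\,\partial C/\partial V_i = 0$ at every floating node — exactly Eq.~(\ref{eq:nudged-steady-state}) with $E = \mathcal{P}$. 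If instead the current is held fixed at its free-phase value $-\beta\,\partial C/\partial \widehat{Y}_k^0$ rather than tracking the instantaneous output voltage, $V^\beta$ matches the modified definition of $s_\star^\beta$ discussed in the footnote to Eq.~(\ref{eq:nudged-steady-state}), for which Theorem~\ref{thm:static-eqprop} is equally valid. Either way, $V^\beta = s_\star^\beta$.

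With these identifications, Eq.~(\ref{eq:device-gradient}) is immediate. The decomposition $\mathcal{P} = \sum_{i<j} p_{ij}(V_i - V_j)$ is precisely a sum-separable form in the sense of Eq.~(\ref{eq:sum-separability}), the summand attached to the parameter $w_{ij}$ being $p_{ij}$, which depends only on $w_{ij}$ and on the locally available quantity $\Delta V_{ij} = V_i - V_j$. Applying the local gradient formula Eq.~(\ref{eq:static-eqprop-local}) and rewriting the derivative at $\beta = 0$ as a one-sided finite difference, $\frac{d}{d\beta}\big|_{\beta=0} f(\beta) = \lim_{\beta\to 0}\tfrac1\beta\left(f(\beta) - f(0)\right)$, yields Eq.~(\ref{eq:device-gradient}). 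The linear-resistor case Eq.~(\ref{eq:conductance-gradient}) then follows by substituting $p_{ij}(\Delta V_{ij}) = \tfrac12 g_{ij}\Delta V_{ij}^2$ from Eq.~(\ref{eq:pseudo-resistor}), so that $\partial p_{ij}/\partial g_{ij} = \tfrac12 \Delta V_{ij}^2$.

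I expect the main obstacle to be the second step: making the correspondence between the physical nudging (injecting currents at the output nodes) and the mathematical nudging (augmenting the energy by $\beta C$) fully airtight, with the correct sign conventions and with proper care that $C$ is a function of the output voltages alone, so that the perturbation $\beta\,\partial C/\partial s$ vanishes identically at all internal floating nodes. Everything else — the sum-separability of $\mathcal{P}$, the identification of $\Delta V_{ij}$ as the information local to $w_{ij}$, and the finite-difference rewriting — is routine once Theorem~\ref{thm:static-eqprop} is available.
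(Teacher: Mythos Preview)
Your proposal is correct and follows essentially the same route as the paper: identify $\mathcal{P}$ as the energy, use Lemma~\ref{lma:power} for the free phase, verify via Kirchhoff's current law that current injection realizes the nudged equilibrium condition (the paper uses the fixed-current version $\partial C/\partial V(V_\star,Y)$, which you correctly flag as the footnoted variant), and then invoke the sum-separable form of Theorem~\ref{thm:static-eqprop}. Your treatment of the nudging step is in fact more explicit than the paper's, spelling out why $\partial C/\partial V_i$ vanishes at non-output floating nodes.
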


\begin{proof}
For simplicity, we have stated Theorem \ref{lma:gradients} in the case where the cost function $C(\widehat{Y}, Y)$ depends only on output node voltages ($\widehat{Y}$). But this result can be directly generalized to the case of a cost function $C(V, Y)$ that depends on any node voltages ($V$), not just output node voltages. In this case, in the nudged phase of EqProp, currents $I_k = - \beta \frac{\partial C}{\partial V_k}$ must be sourced at every node $V_k$ (not just at output nodes).

Let $\theta$ denote the vector of adjustable parameters (e.g. the conductances), $X$ the voltages of input nodes, and $V$ the voltages of floating nodes (which includes the internal nodes and output nodes). Further let $\mathcal{P}(\theta, X, V)$ denote the total pseudo-power of the circuit in the free phase. By Lemma \ref{lma:power}, the steady state $V_\star$ of the free phase is such that $\frac{\partial \mathcal{P}}{\partial V}(\theta, X, V_\star) = 0$. In the nudged phase, when a current $I_k = -\beta  \; \frac{\partial C}{\partial V_k}(V_\star, Y)$ is sourced at every floating node $V_k$, Kirchhoff's current law at the steady state $V_\star^\beta$ implies that $\frac{\partial \mathcal{P}}{\partial V}(\theta, X, V_\star^\beta) + \beta \; \frac{\partial C}{\partial V}(V_\star, Y) = 0$. Furthermore, the total pseudo-power (Eq.~\ref{eq:total-pseudo-power}) has the sum-separability property: an adjustable parameter $w_{ij}$ of a component whose terminals are $i$ and $j$ contributes to $\mathcal{P}(\theta, X, V)$ only through the pseudo-power $p_{ij}(V_i - V_j)$ of that component. Therefore, Eq.~\ref{eq:device-gradient} follows from the main Theorem \ref{thm:static-eqprop}.

In the case of a linear resistor, the adjustable parameter is $w_{ij} = g_{ij}$ and the pseudo-power is given by Eq.~\ref{eq:pseudo-resistor}. Thus, Eq.~\ref{eq:conductance-gradient} follows from Eq.~\ref{eq:device-gradient} and the fact that $\frac{\partial p_{ij} \left( \Delta V_{ij} \right)}{\partial g_{ij}} = \frac{1}{2} \left( \Delta V_{ij} \right)^2$.
\end{proof}

As explained in the general setting (Section \ref{sec:equilibrium-propagation}), it is possible to reduce the bias and the variance of the gradient estimator by performing two nudged phases: one with a positive nudging ($+\beta$) and one with a negative nudging ($-\beta$).

Although the framework we have presented here is deterministic, we note that analog circuits in practice are affected by noise. In section \ref{sec:stochastic-setting} we present a stochastic version of EqProp which can model such forms of noise and incorporate effects of thermodynamics.

\subsection{On the Loss Gradient Estimates}

\paragraph{Computing the sign of the gradients.}
Theorem \ref{lma:gradients} provides a formula for computing the gradient of a given device, assuming that the pseudo-power gradient ($\frac{\partial p_{ij}}{\partial w_{ij}}$) of this device is known, and that its terminal voltages can be measured, stored\footnote{The node voltages must be measured and stored at the end of the first phase, since they are no longer physically available after the second phase, at the moment of the weight update. We can achieve this with a sample and hold amplifier circuit.} and retrieved with arbitrary precision. In practice however, these conditions are too stringent.

A piece of good news is that there is empirical evidence that training neural networks by stochastic gradient descent (SGD) works well, even if for each weight, only the sign of the weight gradient is known. Variants of SGD which use the sign of the gradient rather than its exact value work well in practice. At each step of this training procedure, the weight update for $\theta_k$ takes the form $\Delta \theta_k = - \eta \; \text{sign} \left( \frac{\partial {\mathcal L}}{\partial \theta_k} \right)$. The effectiveness of this optmization method has been shown empirically in the context of differentiable neural networks trained with backpropagation \citep{bernstein2018signsgd}.

In the context of nonlinear resistive networks trained with EqProp, if we aim to get the correct sign (rather than its exact value) of the gradient for a given resistor, precise knowledge of the voltage values at the terminals is not necessary. As a corollary of Theorem \ref{lma:gradients}, the sign of the gradient can be obtained by comparing $\left| \Delta V_{ij}^0 \right|$ and $\left| \Delta V_{ij}^\beta \right|$, i.e. the absolute values of the voltages across the resistor in the free phase and the nudged phase\footnote{Equivalently, we can compare $\left| I_{ij}^0 \right|$ and $\left| I_{ij}^\beta \right|$, i.e. the currents through the resistor in the free phase and the nudged phase.}. This means that, if we aim to compute the sign of the gradient, we only need to perform a `compare' operation reliably.

\paragraph{Robustness to characteristics variability.}
A large body of works aims at implementing the backpropagation algorithm in analog \citep{burr2017neuromorphic,xia2019memristive}. However, the weight gradients computed by backpropagation are sensitive to characteristics variability of analog devices. This is because the mathematical derivation of the backpropagation algorithm relies on a \textit{global coordination of elementary operations}: if any of the elementary operations of the algorithm is inaccurate, then the gradients computed are inaccurate (i.e. biased). 

Although there is no experimental evidence for this fact yet, there are reasons to believe that the gradient estimates of EqProp are more robust to device mismatches than the gradients of Backprop. The reason is that, in EqProp, the same circuit is used in both phases of training. Intuitively, any device mismatch will affect the steady states of both phases (free phase and nudged phase), and since the gradient estimate depends on the difference between the measurements of the two phases, the effects of the mismatch will cancel out. More precisely, Theorem \ref{lma:gradients} tells us that the quality of the gradient estimate for a given device does not depend on the characteristics of other devices in the circuit.

We note that this argument holds only for the computation/estimation of the weight gradients. In EqProp like in Backprop, the challenge of weight update asymmetry of programmable resistors remains.

\section{Example of a Deep Analog Neural Network Architecture}
\label{sec:analog-network-model}

The theory of Section \ref{sec:nonlinear-resistive-network-eqprop} applies to any nonlinear resistive network. In this section, as an example of what is possible with this general method, we present the neural network architecture proposed by \citet{kendall2020training}, inspired by the deep Hopfield network model (Figure~\ref{fig:network}). It is composed of multiple layers, alternating linear and non-linear processing stages. The linear transformations are performed by crossbar arrays of programmable resistors, that play the role of weight matrices that parameterize the transformations. The nonlinear transfer function is implemented using a pair of diodes, followed by a linear amplifier. These crossbar arrays of programmable resistors and these nonlinear transfer functions are alternated to form a deep network.

\begin{figure*}[ht!]
\begin{center}
\includegraphics[width=\textwidth]{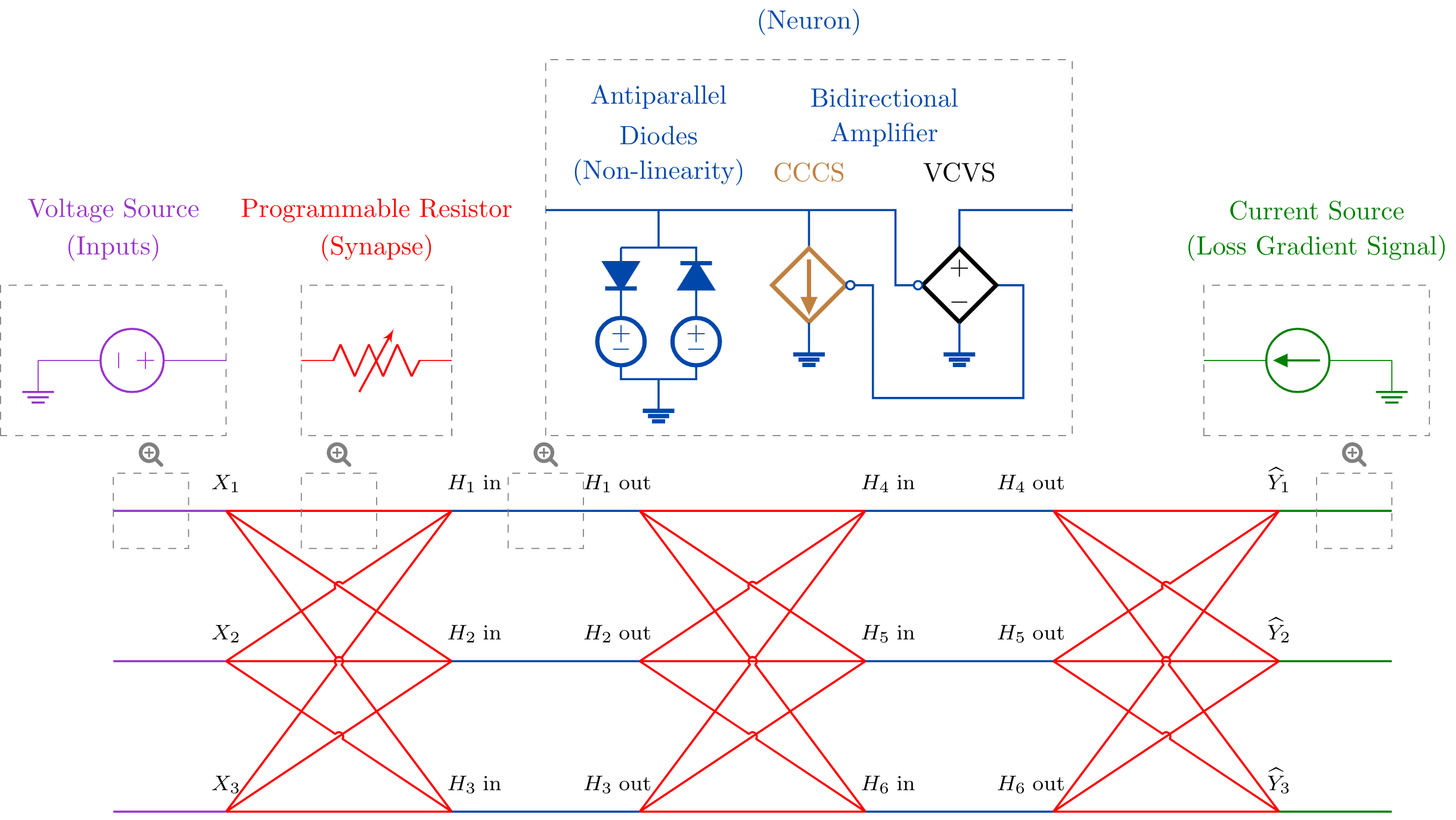}
\end{center}
\caption[Example of a deep analog neural network]{
\textbf{Deep analog neural network} with three input nodes ($X_1$, $X_2$ and $X_3$), two layers of three hidden neurons each ($H_1$, $H_2$, $H_3$, and $H_4$, $H_5$, $H_6$) and three output nodes ($\widehat{Y}_1$, $\widehat{Y}_2$ and $\widehat{Y}_3$). Blue branches and red branches represent neurons and synapses, respectively. Each synapse is a programmable resistor, whose conductance represents a parameter to be adjusted. Each neuron is formed of a nonlinear transfer function and a bidirectional amplifier. The nonlinear transfer function is implemented by a pair of antiparallel diodes (in series with voltage sources), which forms a sigmoidal function in its voltage response (section \ref{sec:diodes}). The bidirectional amplifier consists of a current-controlled current source (CCCS, shown in brown) and a voltage-controlled voltage source (VCVS, shown in black), allowing signals to propagate in both directions without a decay in amplitude (section \ref{sec:amplifiers}). Output nodes are linked to current sources (shown in green) which serve to inject loss gradient signals during training (section \ref{sec:current-sources}). \textbf{Equilibrium Propagation} (EqProp) allows to compute the gradient of a loss $\mathcal{L} = C(\widehat{Y}, Y)$, where $Y$ is the desired target (section \ref{sec:nonlinear-resistive-network-eqprop}). In the free phase (inference), input voltages are sourced at input nodes and the current sources are set to zero current. In the nudged phase (training), for each output node $\widehat{Y}_k$ the corresponding current source is set to $I_k = - \beta \frac{\partial C}{\partial \widehat{Y}_k}$, where $\beta$ is a scaling factor called nudging factor (a hyperparameter). The update rule to adjust the conductances of programmable resistors is local (Theorem \ref{lma:gradients}).
}
\label{fig:network}
\end{figure*}

\subsection{Antiparallel Diodes}
\label{sec:diodes}

We propose to implement the neuron nonlinearities (or `activation functions') as shunt conductances. To do this, we place two diodes antiparallel between the neuron's node and ground. Each diode is placed in series with a voltage source, used to shift the bounds of the activation function. The diodes ensure that the neuron's voltage remains bounded even as its input current grows large, because for any additional input current, one of the diodes turns on and sinks the extra current to ground.

\subsection{Bidirectional Amplifiers}
\label{sec:amplifiers}

In a circuit composed only of resistors and diodes, voltages decay through the resistive layers. This \textit{vanishing signals effect} can be explained by the fact that currents always flow from high electric potential to low electric potential. Thus, extremal voltage values are necessarily reached at input nodes, whose voltages are set.

To counter signal decay, one option is to use voltage-controlled voltage sources (VCVS) to amplify the voltages of hidden neurons in the forward direction. Current-controlled current sources (CCCS) can also be used to amplify currents in the backward direction, to better propagate error signals in the nudged phase. We call such a combination of a forward-directed VCVS and a backward-directed CCCS a `bidirectional amplifier'.

\subsection{Positive Weights}
\label{sec:positive-weights}

Unlike conventional neural networks trained in software whose weights are free to take either positive or negative values, one constraint of analog neural networks is that the conductances of programmable resistors (which represent the weights) are positive. Several approaches are proposed in the literature to overcome this structural constraint. One approach consists in decomposing each weight as the difference of two (positive) conductances \citep{wang2019reinforcement}. Another approach is to shift the mean of the weight matrix by a constant factor \citep{hu2016dot}.

A third approach proposed here consists in doubling the number of input nodes, and to duplicate input values by inverting one set. We also double the number of output nodes so that, in a classification task with $K$ classes, the network has two output nodes for each class $k$, denoted $\widehat{Y}_k^+$ and $\widehat{Y}_k^-$, with $\widehat{Y}_k^+ - \widehat{Y}_k^-$ representing a score assigned to class $k$. The prediction of the model is then
\begin{equation}
    \widehat{Y}_{\rm pred} = \underset{0 \leq k \leq K}{\arg \max} \left( \widehat{Y}_k^+ - \widehat{Y}_k^- \right).
\end{equation}
We optimize the loss associated to the squared error cost function, i.e. $\mathcal{L} = C(V_\star, Y)$, where the target vector $Y = (Y_1, Y_2, \ldots, Y_K)$ is the one-hot code of the class label, and
\begin{equation}
\label{eq:loss3}
C(\widehat{Y}, Y) = \frac{1}{2} \sum_{k=1}^K \left( \widehat{Y}_k^+-\widehat{Y}_k^--Y_k \right)^2.
\end{equation}

\subsection{Current Sources}
\label{sec:current-sources}

The nudged phase requires to inject currents $I_k^+$ and $I_k^-$ at output nodes $\widehat{Y}_k^+$ and $\widehat{Y}_k^-$. These currents must be proportional to the gradients of output node voltages $\widehat{Y}_k^+$ and $\widehat{Y}_k^-$, i.e.
\begin{equation}
    I_k^+ = - \beta \frac{\partial C}{\partial \widehat{Y}_k^+} = \beta \left( Y_k+\widehat{Y}_k^--\widehat{Y}_k^+ \right), \qquad I_k^- = - \beta \frac{\partial C}{\partial \widehat{Y}_k^-} = \beta \left( \widehat{Y}_k^+-\widehat{Y}_k^- - Y_k \right),
\end{equation}
where the nudging factor $\beta$ has the physical dimensions of a conductance. We can inject these currents in the nudged phase using current sources. In the free phase, these current sources are set to zero current and do not influence the voltages of output nodes, acting like open circuits.

\section{Numerical Simulations on MNIST}
\label{sec:numerical-simulations}

\citet{kendall2020training} present simulations on the MNIST digits classification task, performed using the high-performance SPICE-class parallel circuit simulator \textit{Spectre} \citep{2020spectre}. SPICE (simulation program with integrated circuit emphasis) is a framework for realistic simulations of circuit dynamics \citep{vogt2020ngspice}. Specifically, SPICE is used in the simulations to perform the free phase and the nudged phase of the EqProp training process. The other operations are performed in Python: this includes weight initialization (before training starts), calculating loss and gradient currents (between the free phase and the nudged phase), weight gradient calculation (at the end of the nudged phase) and performing the weight updates (resistances are updated in software). We refer to \citet{kendall2020training} for full details of the implementation and simulation results.

Simulations are performed on a small network with a single hidden layer of $100$ neurons. Training is stopped after 10 epochs, when the SPICE network achieves a test error rate of $3.43\%$. For comparison, \citet{lecun1998gradient} report results with different kinds of linear classifiers and logistic regression models (corresponding to different pre-processing methods), all performing $> 7\%$ test error, which is significantly worse than the SPICE network. This demonstrates that the SPICE network benefits from the non-linearities offered by the diodes.
\chapter{Training Discrete-Time Neural Network Models with Equilibrium Propagation}
\label{chapter:discrete-time}

In the previous chapters, we have presented EqProp in its general formulation (Chapter \ref{chapter:eqprop}), and we have applied it to gradient systems (such as the continuous Hopfield model, Chapter \ref{chapter:hopfield}), and to physical systems that can be described by a variational principle (such as nonlinear resistive networks, Chapter \ref{chapter:neuromorphic}). Although EqProp is a potentially promising tool for training neuromorphic hardware, developing such hardware is still in the future. Whereas in the previous two chapters we have mostly focused on neuroscience and neuromorphic considerations, it is also essential to demonstrate the potential of EqProp to solve practical tasks. In this chapter, we focus on the scalability of EqProp in software, to demonstrate its usefulness as a learning strategy.

When simulated on digital computers, the models presented in the previous chapters are very slow and require long inference times to converge to equilibrium. More importantly, these models have thus far not been proved to scale to tasks harder than MNIST. In this chapter, we present a class of models trainable with EqProp, specifically aimed at accelerating simulations in software, and at scaling EqProp training to larger models and more challenging tasks. As a consequence of this change of perspective, some of the techniques introduced in this chapter can be viewed as a step backward from biorealism and neuromorphic considerations (e.g. the use of shared weights in the convolutional network models). However, the introduction of such techniques allows us to broaden the scope of EqProp and to benchmark it against more advanced models of deep learning.

The present chapter, which is essentially a compilation and a rewriting of \citet{ernoult2019updates} and \citet{laborieux2020scaling}, is organized as follows.
\begin{itemize}
  \item In Section \ref{sec:discrete-time-EqProp}, we present a discrete-time formulation of EqProp, which allows training neural network models closer to those used in conventional deep learning.
  \item In Section \ref{sec:discrete-time-models}, we present discrete-time neural network models trainable with EqProp, including a fully-connected model (close in spirit to the Hopfield model) and a convolutional one. In contrast with previous chapters where we have only considered the squared error as a cost function, we present here a method to optimize the cross-entropy loss commonly used for classification tasks.
  \item In Section \ref{sec:discrete-time-experiments}, we present the experimental results of \citet{ernoult2019updates} and \citet{laborieux2020scaling}. Compared to the experiments of Chapter \ref{chapter:hopfield}, discrete-time models allow one to reduce the computational cost of inference, and enable to scale EqProp to deeper architectures and more challenging tasks. In particular, a ConvNet model trained with EqProp achieves $11.68\%$ test error rate on CIFAR-10. Furthermore, these experiments highlight the importance of reducing the bias and variance of the loss gradient estimators on complex tasks. We also discuss some challenges to overcome in order to unlock the scaling of EqProp to larger models and harder tasks, as well as some promising avenues towards this goal.
  \item In Section \ref{sec:discrete-time-transient-dynamics}, we present a theoretical result linking the transient states in the second phase of EqProp to the partial derivatives of the loss to optimize (Theorem \ref{thm:gdd} and Fig.~\ref{fig:gdd}). This property, which we call the \textit{gradient descending dynamics} (GDD) property, is useful in practice as it prescribes a criterion to decide when the dynamics of the first phase of training has converged to equilibrium.
\end{itemize}

\section{Discrete-Time Dynamical Systems with Static Input}
\label{sec:discrete-time-EqProp}

In this section, we apply EqProp to a class of discrete-time dynamical systems, as proposed by \citet{ernoult2019updates}.

\subsection{Primitive Function}

Consider an energy function of the form
\begin{equation}
    \label{eq:primitive-function}
    E(\theta, x, s) = \frac{1}{2}\| s \|^2 - \Phi(\theta, x, s),
\end{equation}
where $\Phi$ is a scalar function that we will choose later. With this choice of energy function, the equilibrium condition $\frac{\partial E}{\partial s} \left( \theta, x, s_\star \right) = 0$ of Eq.~\ref{eq:free-equilibrium-state} rewrites as a fixed point condition:
\begin{equation}
\label{eq:free-fixed-point}
s_\star = \frac{\partial \Phi}{\partial s} \left( \theta, x, s_\star \right).
\end{equation}
Assuming that the function $s \mapsto \frac{\partial \Phi}{\partial s} \left( \theta, x, s \right)$ is contracting, by the contraction mapping theorem, the sequence of states $s_1$, $s_2$, $s_3$, $\ldots$ defined by
\begin{equation}
\label{eq:free-phase-discrete-time}
s_{t+1} = \frac{\partial \Phi}{\partial s} \left( \theta, x, s_t \right)
\end{equation}
converges to $s_\star$. This dynamical system can be viewed as a recurrent neural network (RNN) with static input $x$ (meaning that the same input $x$ is fed to the RNN at each time step) and transition function $F = \frac{\partial \Phi}{\partial s}$. Because $\Phi$ is a primitive function of the transition function $F$, we call $\Phi$ the \textit{primitive function} of the system. In light of Eq.~\ref{eq:free-fixed-point}, in this chapter we will call $s_\star$ a \textit{fixed point} (rather than an equilibrium state).

The question of necessary and sufficient conditions on $\Phi$ for the dynamics of Eq.~\ref{eq:free-phase-discrete-time} to converge to a fixed point is out of the scope of the present manuscript. We refer to \citet{scarselli2009graph} where conditions on the transition function are discussed.

\subsection{Training Discrete-Time Dynamical Systems with Equilibrium Propagation}

Recall that we want to optimize a loss of the form
\begin{equation}
\mathcal{L} = C \left( s_\star, y \right),
\end{equation}
where $C(s, y)$ is a scalar function called \textit{cost function}, defined for any state $s$. In the discrete-time setting, EqProp takes the following form.

\paragraph{Free Phase.}
In the free phase, the dynamics of Eq.~\ref{eq:free-phase-discrete-time} is run for $T$ time steps, until the sequence of states $s_1, s_2, s_3, \ldots, s_T$ has converged. At the end of the free phase, the network is at the \textit{free fixed point} $s_\star$ characterized by Eq.~\ref{eq:free-fixed-point}, i.e. $s_T = s_\star$.

\paragraph{Nudged Phase.}
In the nudged phase, starting from the free fixed point $s_\star$, an additional term $- \beta \; \frac{\partial C}{\partial s}$ is introduced in the dynamics of the neurons, where $\beta$ is a positive or negative scalar, called \textit{nudging factor}. This term acts as an external force nudging the system dynamics towards decreasing the cost function $C$. Denoting $s_0^\beta, s_1^\beta, s_2^\beta, \ldots$ the sequence of states in the second phase (which depends on the value of $\beta$), we have
\begin{equation}
  s_0^\beta = s_\star \qquad \text{and} \qquad \forall t \geq 0, \quad s_{t+1}^\beta  = \frac{\partial \Phi}{\partial s} \left( \theta, x, s_t^\beta \right) - \beta \; \frac{\partial C}{\partial s} \left( s_t^\beta, y \right).
  \label{eq:nudged-phase-discrete-time}
\end{equation}
The network eventually settles to a new fixed point $s_\star^\beta$, called \textit{nudged fixed point}.

\paragraph{Update Rule.}
In this context, the formula for estimating the loss gradients using the two fixed points $s_\star$ and $s_\star^\beta$ takes the form
\begin{equation}
\lim_{\beta \to 0} \frac{1}{\beta} \left( \frac{\partial \Phi}{\partial \theta} \left( \theta, x, s_\star^\beta \right) -  \frac{\partial \Phi}{\partial \theta} \left( \theta, x, s_\star \right) \right) = -\frac{\partial \mathcal{L}}{\partial \theta}.
\label{eq:eqprop-phi-grad}
\end{equation}
Furthermore, if the primitive function $\Phi$ has the sum-separability property, i.e. if it is of the form $\Phi(\theta, x, s) = \Phi_0(x, s) + \sum_{k=1}^N \Phi_k(\theta_k, x, s)$ where $\theta=(\theta_1, \theta_2, \ldots, \theta_N)$, then
\begin{equation}
\lim_{\beta \to 0} \frac{1}{\beta} \left( \frac{\partial \Phi_k}{\partial \theta_k} \left( \theta_k, x, s_\star^\beta \right) -  \frac{\partial \Phi_k}{\partial \theta_k} \left( \theta_k, x, s_\star \right) \right) = -\frac{\partial \mathcal{L}}{\partial \theta_k}.
\label{eq:eqprop-phi-grad-local}
\end{equation}
Eq.~\ref{eq:eqprop-phi-grad} follows directly from Theorem \ref{thm:static-eqprop} and the definition of $\Phi$ in terms of $E$ (Eq.~\ref{eq:primitive-function}). Eq.~\ref{eq:eqprop-phi-grad-local} follows from Eq.~\ref{eq:eqprop-phi-grad} and the definition of sum-separability.

In the discrete-time setting, as in the other settings, we can reduce the bias and the variance of the gradient estimate by using a symmetrized gradient estimator (see Eq.~\ref{eq:two-sided-estimator}). This requires two nudged phases: one with a positive nudging ($+\beta$) and one with a negative nudging ($-\beta$).

\subsection{Recovering Gradient Systems}

We note that if we choose a primitive function $\Phi$ of the form $\Phi(\theta, x, s) = \frac{1}{2} \| s \|^2 - \epsilon \; \widetilde{E}(\theta, x, s)$, where $\epsilon$ is a positive hyperparameter and $\widetilde{E}$ is a scalar function, then the dynamics of Eq.~\ref{eq:free-phase-discrete-time} rewrites $s_{t+1} = s_t - \epsilon \frac{\partial \widetilde{E}}{\partial s} \left(\theta, x, s_t\right)$. This is the Euler scheme with discretization step $\epsilon$ of the gradient dynamics $\frac{d}{dt} s_t = - \frac{\partial \widetilde{E}}{\partial s} \left( \theta, x, s_t \right)$, which was used in the simulations of Chapter \ref{chapter:hopfield}. In this sense, the setting of gradient systems of Chapter \ref{chapter:hopfield} can be seen as a particular case of the discrete-time formulation presented in this chapter.

\section{RNN Models with Static Input}
\label{sec:discrete-time-models}

The algorithm presented in the previous section is generic and holds for arbitrary primitive function $\Phi$ and cost function $C$. In this section, we present two models corresponding to different choices of primitive function. The first model is a vanilla RNN with static input and symmetric weights (a variant of the Hopfield model). The second model is a convolutional RNN model. We also propose different choices of cost function: whereas in Chapter \ref{chapter:hopfield} and Chapter \ref{chapter:neuromorphic} we have only considered the squared error between outputs and targets, here we also present an implementation of the cross-entropy cost function.

\subsection{Fully Connected Layers}

To implement a fully connected layer, we consider the following primitive function:
\begin{equation}
    \Phi_k^{\rm fc}(w_k, h^{k-1}, h^k) = (h^k)^\top \cdot w_k \cdot h^{k-1}.
\end{equation}
In this expression, $h^{k-1}$ and $h^k$ are two consecutive layers of neurons, and $w_k$ is a weight matrix of size $\dim(h^k) \times \dim(h^{k-1})$ connecting $h^{k-1}$ to $h^k$. We note that $\Phi_k^{\rm fc}$ is closely related to the Hopfield energy of Eq.~\ref{eq:hopfield-energy}.

By stacking several of these fully connected layers, we can form a network of multiple layers of the kind considered in the previous chapters (e.g. as depicted in Fig~\ref{fig:network_undirected}). The corresponding primitive function is obtained by summing together the primitive functions of individual pairs of layers. For example, consider $\Phi = \cdots + \Phi_k^{\rm fc} + \Phi_{k+1}^{\rm fc} + \cdots$, i.e.
\begin{equation}
\Phi = \cdots + (h^k)^\top \cdot w_k \cdot h^{k-1} + (h^{k+1})^\top \cdot w_{n+1} \cdot h^k + \cdots.
\label{eq:model-simple-ep-phi}
\end{equation}
For this choice of primitive function, we have $\frac{\partial \Phi}{\partial h^k} = w_k \cdot h^{k-1} + w_{n+1}^\top \cdot h^{k+1}$. In practice, we find that it is necessary that the values of the state variable be bounded. For this reason, we apply an activation function $\sigma$ and arrive at the following dynamics in the free phase, which is a discrete-time variant of the dynamics of the Hopfield network studied in Chapter \ref{chapter:hopfield}:
\begin{equation}
h^k_{t+1} = \sigma \left( w_k \cdot h^{k-1}_t + w_{n+1}^\top \cdot h^{k+1}_t \right).
\label{eq:EqProp-simple-first-phase}
\end{equation}

\subsection{Convolutional Layers}
\label{sec:convolutional-layer}

\citet{ernoult2019updates} propose to implement a convolutional layer with the following primitive function:
\begin{equation}
\label{eq:phiCNN}
    \Phi_k^{\rm conv}(w_k, h^{k-1}, h^k) = h^k\bullet\mathcal{P}\left(w_k\star h^{k-1}\right).
\end{equation}
In this expression, $w_k$ is the kernel (convolutional weights) for that layer, $\star$ is the convolution operator, $\mathcal{P}$ is a pooling operation, and $\bullet$ is the canonical scalar product for pairs of tensors with same dimension. In particular, in this expression, $h^k$ and $\mathcal{P}\left(w_k\star h^{k-1}\right)$ are tensors with same size. This implementation is similar to the one proposed by \citet{lee2009convolutional} in the context of restricted Boltzmann machines.

By stacking several of these convolutional layers we can form a deep ConvNet (specifically a recurrent convolutional network with static input and symmetric weights). Consider a primitive function of the form $\Phi = \cdots + \Phi_k^{\rm conv} + \Phi_{k+1}^{\rm conv} + \cdots$, i.e.
\begin{equation}
\Phi = \cdots + h^k\bullet\mathcal{P}\left(w_k\star h^{k-1}\right) + h^{k+1}\bullet\mathcal{P}\left(w_{n+1}\star h^k\right) + \cdots
\end{equation}
We have $\frac{\partial \Phi}{\partial h^k} = \mathcal{P}\left(w_k\star h^{k-1}\right) + \tilde{w}_{k+1}\star \mathcal{P}^{-1}\left(h^{k+1}\right)$, where $\mathcal{P}^{-1}$ is an `inverse pooling' operation, and $\tilde{w}_k$ is the flipped kernel, which forms the transpose convolution. We refer to \citet{ernoult2019updates} where these operations are defined in details. After restricting the space of the state variables by using the hardsigmoid activation function $\sigma$ to clip the states, we obtain the following dynamics for layer $h^k$:
\begin{equation}
\label{eq:conv-dynamics}
h^k_{t+1} = \sigma \left( \mathcal{P}\left(w_k\star h^{k-1}_t\right) + \tilde{w}_{n+1}\star \mathcal{P}^{-1}\left(h^{k+1}_{t}\right)\right).
\end{equation}

We can also combine convolutional layers, followed by fully connected layers, to form a more practical deep ConvNet. Denoting $N^{\rm conv}$ and $N^{\rm fc}$ the number of convolutional layers and fully connected layers, the total number of layers is $N^{\rm tot} = N^{\rm conv} + N^{\rm fc}$ and the primitive function is
\begin{equation}
    \Phi(\theta, x, s) = \sum_{k=1}^{N^{\rm conv}} \Phi_k^{\rm conv}(w_k, h^{k-1}, h^k) + \sum_{k = N^{\rm conv}+1}^{N^{\rm tot}} \Phi_k^{\rm fc}(w_k, h^{k-1}, h^k),
\end{equation}
where the set of parameters is $\theta = \{w_k\}_{1 \leq k \leq N^{\rm tot}}$, the input is $x=h^0$, and the state variable is $s = \{h^k\}_{1 \leq k \leq N^{\rm tot}}$.

\subsection{Squared Error}

We have already studied in Chapters \ref{chapter:hopfield} and \ref{chapter:neuromorphic} the case where we optimize the loss associated to the squared error cost function. In this setting, the state variable of the network is of the form $s=(h, o)$, where $h$ represents the \textit{hidden neurons} and $o$ the \textit{output neurons}, and the cost function is
\begin{equation}
    C(o, y) = \frac{1}{2} \left\| o - y \right\|^2.
\end{equation}
The nudged phase dynamics of the hidden neurons and output neurons read, in this context:
\begin{equation}
    h_{t+1}^{\beta} = \frac{\partial \Phi}{\partial h}(\theta, x, h_t^{\beta}, o_t^{\beta}), \qquad o_{t+1}^{\beta} = \frac{\partial \Phi}{\partial o}(\theta, x, h_t^{\beta}, o_t^{\beta}) + \beta \; (y - o_t^{\beta}).
\end{equation}

\subsection{Cross-Entropy}

\citet{laborieux2020scaling} present a method to implement the output layer of the neural network as a softmax output, which can be used in conjunction with the cross-entropy loss. 
In this setting, the state of output neurons ($o$) are not a part of the state variable ($s$), but are instead viewed as a readout, which is a function of $s$ and of a weight matrix $w_{\rm out}$ of size $\dim(y) \times \dim(s)$. Specifically, the state of output neurons at time step $t$ is defined by the formula:
\begin{equation}
    o_t = \mbox{softmax}(w_{\rm out}\cdot s_t).
\end{equation}
Denoting $M=\dim(y)$ the number of categories in the classification task of interest, the cross-entropy cost function associated with the softmax output is then:
\begin{equation}
C(s, y, w_{\rm out}) = - \sum_{i=1}^M y_i \log(\textrm{softmax}_i(w_{\rm out} \cdot s)).
\end{equation}
Using the fact that $\frac{\partial C}{\partial s}(s, y, w_{\rm out}) = w_{\rm out}^\top \cdot \left( \textrm{softmax}(w_{\rm out} \cdot s) - y \right)$, the nudged phase dynamics corresponding to the cross-entropy cost function read
\begin{equation}
s_{t+1}^{\beta} = \frac{\partial \Phi}{\partial s}(\theta, x, s_t^\beta) + \beta \; w_{\rm out}^\top \cdot \left( y - o_t^\beta \right),
\end{equation}
where $o_t^\beta = \textrm{softmax}(w_{\rm out} \cdot s_t^\beta)$. Note that in this context the loss $\mathcal{L} = C(s_\star, y, w_{\rm out})$ also depends on the parameter $w_{\rm out}$. The loss gradient with respect to $w_{\rm out}$ is given by
\begin{equation}
\frac{\partial \mathcal{L}}{\partial w_{\rm out}} = s_\star^\top \cdot \left( y - o_\star \right),
\end{equation}
where $o_\star = \textrm{softmax}(w_{\rm out} \cdot s_\star)$.

In practice, the state variable is of the form $s = (h^1, h^2, \ldots, h^N)$, where $h^1, h^2, \ldots, h^N$ are the hidden layers of the network, and $w_{\rm out}$ connects only the last hidden layer $h^N$ (not all hidden layers) to the output layer $o$. The weight matrix $w_{\rm out}$ has size $\dim(y) \times \dim(h^N)$ in this case.

\section{Experiments on MNIST and CIFAR-10}
\label{sec:discrete-time-experiments}

In this section, we present the experimental results of \citet{ernoult2019updates} and \citet{laborieux2020scaling}, on the MNIST (Table~\ref{table:mnist-convnet-results}) and the CIFAR-10 (Table~\ref{table:cifar-convnet-results}) classification tasks, respectively. The CIFAR-10 dataset \citep{krizhevsky2009learning} consists of $60,000$ colour images of $32 \times 32$ pixels. These images are split in $10$ classes (each corresponding to an object or animal), with $6,000$ images per class. The training set consists of $50,000$ images and the test set of $10,000$ images.

Experiments are performed on different network architectures (composed of multiple fully-connected and/or convolutional layers), using different cost functions (either the squared error or the cross-entropy loss) and different loss gradient estimators. Using the notations of this chapter, the \textit{one-sided} gradient estimator ($\widehat{\nabla}_\theta(\beta)$) and the \textit{symmetric} gradient estimator ($\widehat{\nabla}_\theta^{\rm sym}(\beta)$) presented in Chapter~\ref{chapter:eqprop} take the form
\begin{align}
\label{eq:one-sided-phi}
\widehat{\nabla}_\theta(\beta) & = \frac{1}{\beta} \left( \frac{\partial \Phi}{\partial \theta}(\theta, x, s_\star^\beta) - \frac{\partial \Phi}{\partial \theta}(\theta, x, s_\star) \right), \\
\label{eq:symmetric-phi}
\widehat{\nabla}_\theta^{\rm sym}(\beta) & = \frac{1}{2 \beta} \left( \frac{\partial \Phi}{\partial \theta} \left( \theta, x, s_\star^\beta \right) -  \frac{\partial \Phi}{\partial \theta} \left( \theta, x, s_\star^{-\beta} \right) \right).
\end{align}
We refer to \citet{ernoult2019updates} and \citet{laborieux2020scaling} for the implementation and simulation details. Finally, since the models considered here are RNNs, we can also train them with the more conventional \textit{backpropagation through time} (BPTT) algorithm\footnote{In this case, BPTT is used on RNNs of a very specific kind. The RNN models considered here have a transition function of the form $F = \frac{\partial \Phi}{\partial s}$, a static input $x$ at each time step, and a single target $y$ at the final time step.}, and use BPTT as a benchmark for EqProp.

\begin{table}[ht!]
\centering
$\begin{array}{|cc|ccc|ccc|cc|}
\hline
	 &  & \multicolumn{3}{c|}{\rm EqProp \; Error \; (\%)} & & & & \multicolumn{2}{c|}{\rm BPTT \; Error \; (\%)} \\
	\hbox{Model} & \hbox{Loss} & \hbox{Estimator} & {\rm Test} & {\rm Train} & T   & K & {\rm Epochs} & {\rm Test} & {\rm Train} \\
\hline
    \hbox{DHN-1h} & \multirow{2}{*}{Squared Error} & \multirow{2}{*}{One-sided} & 2.06 & 0.13  & 100 & 12  & 30 & 2.11 & 0.46 \\
    \hbox{DHN-2h} & & & 2.01 & 0.11 & 500 & 40  & 50 & 2.02 & 0.29 \\
\hline
    \hbox{FC-1h} & \multirow{4}{*}{Squared Error} & \multirow{4}{*}{One-sided} & 2.00 & 0.20 & 30 & 10  &  30 & 2.00 & 0.55 \\
    \hbox{FC-2h} & & & 1.95 & 0.14 & 100 & 20 &  50   & 2.09 & 0.37 \\
    \hbox{FC-3h} & & & 2.01 & 0.10 & 180 & 20  &  100 & 2.30 & 0.32 \\
    \hbox{ConvNet} & & & 1.02 & 0.54 & 200 & 10 &  40 & 0.88 & 0.12 \\
\hline
\end{array}$
\caption[Experimental results of \citet{ernoult2019updates} on discrete-time neural network models trained on MNIST.]{
Experimental results of \citet{ernoult2019updates} on MNIST. EqProp is benchmarked against BPTT.
`DHN' stands for the `deep Hopfield networks' of Chapter \ref{chapter:hopfield}.
`FC' means `fully connected', and `-$\#$h' stands for the number of hidden layers.
The test error rates and training error rates (in \%) are averaged over five trials. $T$ is the number of iterations in the first phase. $K$ is the number of iterations in the second phase. All these results are obtained with the squared error and the one-sided gradient estimator.
}
\label{table:mnist-convnet-results}
\end{table}

\begin{table}[ht!]
\centering
$\begin{array}{|cc|ccc|ccc|cc|}
\hline
	 &  & \multicolumn{3}{c|}{\rm EqProp \; Error \; (\%)} & & & & \multicolumn{2}{c|}{\rm BPTT \; Error \; (\%)} \\
	\hbox{Model} & \hbox{Loss} & \hbox{Estimator} & {\rm Test} & {\rm Train} & T   & K & {\rm Epochs} & {\rm Test} & {\rm Train} \\
\hline
	\multirow{3}{*}{\rm ConvNet} & \multirow{3}{*}{\rm Squared Error} & \hbox{One-sided} & 86.64  & 84.90 & 250           & 30      & 120 & \multirow{3}{*}{11.10}  & \multirow{3}{*}{3.69}   \\
	                             & & \hbox{Random Sign} & \color{blue}{12.61^\star} & \color{blue}{8.64^\star} & 250 & 30 & 120 & & \\
	                             & & \hbox{Symmetric}   & 12.45 & 7.83 & 250 & 30 & 120 & & \\
\hline
	\hbox{ConvNet} & \hbox{Cross-Ent.} & \hbox{Symmetric} & 11.68  & 4.98 & 250 & 25 & 120 & 11.12  & 2.19 \\
\hline
\end{array}$
\caption[Experimental results of \citet{laborieux2020scaling} on ConvNets trained on CIFAR-10.]{Experimental results of \citet{laborieux2020scaling} on CIFAR-10. EqProp is benchmarked against BPTT. The test error rates and training error rates (in \%) are averaged over five trials. $T$ is the number of iterations in the first phase. $K$ is the number of iterations in the second phase.
The `one-sided' and `symmetric' gradient estimators refer to Eq.~\ref{eq:one-sided-phi} and Eq.~\ref{eq:symmetric-phi}, respectively. `random sign' refers to the one-sided estimator with $\beta$ being positive or negative with even probability.\\
\color{blue}{$^\star$In the simulations with random $\beta$, the training process collapsed in one trial out of five, leading to a performance similar to the one-sided estimator. The test error mean and train error mean reported here include only the four trials that worked fine.}
}
\label{table:cifar-convnet-results}
\end{table}

Table~\ref{table:mnist-convnet-results} compares the performance on MNIST of the discrete-time models presented in this chapter (FC-\#h and ConvNet) with the continuous-time Hopfield networks of Chapter \ref{chapter:hopfield} (DHN-\#h). No degradation of accuracy is observed when using discrete-time rather than continuous-time networks, although the former require many less time steps in the first phase of training ($T$). The lowest test error rate ($\sim 1 \%$) is achieved with the ConvNet model. 

Table~\ref{table:cifar-convnet-results} shows the performance of a ConvNet model on CIFAR-10, for different gradient estimators and different loss functions. Unlike in the MNIST experiments, the one-sided gradient estimator with a nudging factor ($\beta$) of constant sign works poorly on CIFAR-10: training is unstable and the network is unable to fit the training data (84.90\% train error). The bias of the one-sided gradient estimator can be reduced on average by choosing the sign of $\beta$ at random in the second phase: with this technique, \citet{laborieux2020scaling} report that training proceeded well in four runs out of five, yielding a mean test error of 12.61\%, but training collapsed in the last run in a way similar to the one-sided gradient estimator with constant sign. The symmetric difference estimator allows to reduce not only the bias but also the variance, and to stabilize the training process consistently across runs (12.45\% test error). Finally, the best test error rate, $11.68\%$, is obtained with the cross-entropy loss, and approaches the performance of BPTT with less than 0.6 \% degradation in accuracy.


\subsection{Challenges with EqProp Training}
\label{eq:difficulties}

The theoretical guarantee that EqProp can approximate with arbitrary precision the gradient of arbitrary loss functions for a very broad class of models (energy-based models) suggests that EqProp could eventually train large networks on challenging tasks, as was proved feasible in the last decade with other deep learning training methods (e.g. backpropagation) relying on stochastic gradient descent. Nevertheless, EqProp training on current processors (GPUs) presents several challenges.

One difficulty encountered with EqProp training is that, although the gradient formula (Eq.~\ref{eq:eqprop-phi-grad}) requires that $\frac{\partial \Phi}{\partial \theta}$ be measured \textit{exactly} at the fixed points, in many situations however, these fixed points are only approached up to certain precision. Empirically, we observe that for learning to work, the fixed point of the first phase of training must be approximated with very high accuracy ; otherwise the gradient estimate is of poor quality and does not enable to optimize the loss function. This implies that the equations of the first phase of training need to be iterated a large number of time steps, until convergence to the fixed point. Table~\ref{table:mnist-convnet-results} and Table~\ref{table:cifar-convnet-results} show that hundreds of iterations are required for the networks to converge, even though these networks consist of just a few layers.

Various methods have been investigated to accelerate convergence, none of which has proved really satisfying so far. \citet{scellier2016towards} propose a method based on variational inference, in which the state variables are split in two groups (specifically the layers of odd indices and the layers of even indices): at each iteration, one group of state variables remains fixed, while the other group is updated by solving for the stationarity condition. \citet{bengio2016feedforward} give a sufficient condition so that initialization of the network with a forward pass provides sensible initial states for inference ; the condition is that any two successive layers must form a `good autoencoder'. \citet{o2018initialized} use a side network to learn these initial states for inference (in the main network).

One promising avenue to solve the problem of long inference times is offered by the recent work of \citet{ramsauer2020hopfield}, which shows that for a certain class of \textit{modern Hopfield networks}, equilibrium states are reached in exactly one step.
This idea
could considerably accelerate simulations in software and demonstrate EqProp training on more advanced architectures and harder tasks. We emphasize however that the difficulty of long inference times is specific to numerical simulations (i.e. simulations on digital computers), and may not be a problem for neuromorphic hardware, where energy minimization is performed by the physics of system (Chapter \ref{chapter:neuromorphic}).

A second difficulty with EqProp training is due to the saturation of neurons. All experiments so far have found that for EqProp training to be effective, the neurons' states need to be clipped to a closed interval, typically $[0, 1]$. This is achieved in most experiments by applying the hard-sigmoid activation function $\sigma(s) = \min(\max(0, s), 1)$ after each iteration during inference. Due to this technique however, many neurons `saturate', i.e. they have a value of exactly $0$ or $1$ at equilibrium. In the second phase of training, due to these saturated neurons, error signals have difficulty propagating from output neurons across the network, when the nudging factor $\beta$ is small. To mitigate this problem, in most experiments $\beta$ is chosen large enough so as to amplify and better propagate error signals along the layers, at the cost of degrading the quality of the gradient estimate. To counter this problem, \citet{o2018initialized} suggest to use a modified activation function which includes a leak term, namely $\sigma^{\rm mod}(s) = \sigma(s) + 0.01 s$. Another avenue to further reduce the saturation effect is to search weight initialization schemes specifically meant for the kind of network models trained with EqProp. The weight initialisation schemes that dominate deep learning today have been designed to fit feedforward nets \citep{he2015delving} and RNNs \citep{saxe2013exact} trained with automatic differentiation. Finding appropriate weight initialization schemes for the kind of bidirectional networks studied in our context is an area of research largely unexplored.

The third difficulty with EqProp training is hyperparameter tuning, due to the high sensitivity of the training process to some of the hyperparameters. Initial learning rates for example need to be tuned layer-wise. In addition to the usual hyperparameters (architecture, learning rates, ...), EqProp requires tuning some additional hyperparameters: the number of iterations in the free phase ($T$), the number of iterations in the nudged phase ($K$), the value of the nudging factor ($\beta$), ... In the next section, we present a theoretical result called the \textit{GDD property} that can help accelerate hyperparameter search. As we will see, the GDD property provides a criterion to decide whether the fixed point of the first phase has been reached or not.

\section{Gradient Descending Dynamics (GDD)}
\label{sec:discrete-time-transient-dynamics}

The gradient formula of Eq.~\ref{eq:eqprop-phi-grad} depends only on the fixed points $s_\star$ and $s_\star^\beta$, not on the specific trajectory that the network follows to reach them. But similarly to the real-time setting of Chapter \ref{chapter:hopfield}, assuming the dynamics of Eq.~\ref{eq:nudged-phase-discrete-time} when the neurons gradually move from their free fixed point values ($s_\star$) towards their nudged fixed point values ($s_\star^\beta$), we can show that the transient states of the network ($s_t^\beta$ for $t \geq 0$) perform step-by-step gradient computation.

\subsection{Transient Dynamics}

First, note that the gradient of EqProp (Eq.~\ref{eq:eqprop-phi-grad}), which is equal to the gradient of the loss in the limit $\beta \to 0$, can be decomposed as a telescoping sum:
\begin{equation}
    \label{eq:telescoping-sum}
    \frac{1}{\beta} \left( \frac{\partial \Phi}{\partial \theta} \left( \theta, x, s_\star^\beta \right) - \frac{\partial \Phi}{\partial \theta} \left( \theta, x, s_\star \right) \right) = \sum_{t=0}^\infty \frac{1}{\beta} \left( \frac{\partial \Phi}{\partial \theta} \left( \theta, x, s_{t+1}^\beta \right) - \frac{\partial \Phi}{\partial \theta} \left( \theta, x, s_t^\beta \right) \right).
\end{equation}

Second, we rewrite the dynamics of the free phase (Eq~\ref{eq:free-phase-discrete-time}) in the form
\begin{equation}
    s_{t+1} = \frac{\partial \Phi}{\partial s} \left( \theta_{t+1} = \theta, x, s_t \right),
\end{equation}
where $\theta_t$ denotes the parameter of the model at time step $t$, the value $\theta$ being shared across all time steps. We consider the loss after $T$ time steps:
\begin{equation}
    \mathcal{L}_T = C \left( s_T, y \right).
\end{equation}
$\mathcal{L}_T$ is what we have called the \textit{projected cost function} in the setting of real-time dynamics (Eq.~\ref{eq:projected-cost-function}). Rewriting the free phase dynamics this way allows us to define the partial derivative $\frac{\partial \mathcal{L}_T}{\partial \theta_t}$ as the sensitivity of the loss $\mathcal{L}_T$ with respect to $\theta_t$, when $\theta_1, \ldots \theta_{t-1}, \theta_{t+1}, \ldots \theta_T$ remain fixed (set to the value $\theta$). With these notations, the full gradient of the loss can be decomposed as
\begin{equation}
\label{eq:total-gradient}
\frac{\partial \mathcal{L}_T}{\partial \theta} = \frac{\partial \mathcal{L}_T}{\partial \theta_1} + \frac{\partial \mathcal{L}_T}{\partial \theta_2} + \cdots + \frac{\partial \mathcal{L}_T}{\partial \theta_T}.
\end{equation}

The following result links the right-hand sides of Eq.~\ref{eq:telescoping-sum} and Eq.~\ref{eq:total-gradient} term by term.

\medskip

\begin{thm}[\citet{ernoult2019updates}]
\label{thm:gdd}
Let $s_0, s_1, \ldots, s_T$ be the sequence of states in the free phase. Suppose that the sequence has converged to the fixed point $s_\star$ after $T-K$ time steps for some $K \geq 0$, i.e. that $s_\star = s_T = s_{T-1} = \ldots s_{T-K}$. Then, the following identities hold at any time $t = 0, 1, \ldots, K-1$ in the nudged phase:
\begin{gather}
\label{eq:gdd-theta}
\lim_{\beta \to 0} \frac{1}{\beta} \left( \frac{\partial \Phi}{\partial \theta} \left( \theta, x, s_{t+1}^\beta \right) - \frac{\partial \Phi}{\partial \theta} \left( \theta, x, s_t^\beta \right) \right) = - \frac{\partial \mathcal{L}_T}{\partial \theta_{T-t}},\\
\label{eq:gdd-s}
\lim_{\beta \to 0} \frac{1}{\beta} \left( s_{t+1}^\beta - s_t^\beta \right) = - \frac{\partial \mathcal{L}_T}{\partial s_{T-t}}.
\end{gather}
\end{thm}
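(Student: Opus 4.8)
The plan is to mirror the continuous-time argument behind Theorem~\ref{thm:truncated-eqprop} and Lemma~\ref{lma:rec-backprop}: I would exhibit the left-hand sides of (\ref{eq:gdd-theta})--(\ref{eq:gdd-s}) (the ``EqProp side'', coming from linearizing the nudged dynamics at $\beta=0$) and their right-hand sides (the ``BPTT side'', coming from reverse-differentiating $\mathcal{L}_T$ through the free-phase dynamics) as two sequences obeying \emph{the same} linear recursion with the same initial value, then conclude by uniqueness. Write $H := \frac{\partial^2 \Phi}{\partial s^2}(\theta, x, s_\star)$ for the Hessian of the primitive function at the fixed point; since $\Phi$ is scalar-valued, $H$ is symmetric, and this symmetry is exactly what makes the two recursions coincide without any ``transpose'' mismatch, just as in the Hopfield case.

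For the EqProp side, note first that for each fixed $t$ the map $\beta \mapsto s_t^\beta$ is a finite composition of the $C^2$ maps $\frac{\partial \Phi}{\partial s}(\theta, x, \cdot)$ and $v \mapsto v - \beta\,\frac{\partial C}{\partial s}(v, y)$ starting from the constant $s_\star$, hence smooth in $\beta$; moreover $s_t^0 = s_\star$ for all $t$ because $s_\star$ is a fixed point of the free dynamics (Eq.~\ref{eq:free-fixed-point}). Setting $u_t := \lim_{\beta \to 0} \frac{1}{\beta}(s_t^\beta - s_\star)$ and differentiating the nudged recursion (Eq.~\ref{eq:nudged-phase-discrete-time}) with respect to $\beta$ at $\beta=0$ gives
\begin{equation}
u_0 = 0, \qquad u_{t+1} = H \cdot u_t - \frac{\partial C}{\partial s}(s_\star, y).
\end{equation}
Hence the left-hand side of (\ref{eq:gdd-s}) equals $u_{t+1} - u_t$, and a first-order Taylor expansion of $\frac{\partial \Phi}{\partial \theta}(\theta, x, \cdot)$ at $s_\star$ (legitimate since $s_{t+1}^\beta, s_t^\beta \to s_\star$) shows the left-hand side of (\ref{eq:gdd-theta}) equals $\frac{\partial^2 \Phi}{\partial \theta \partial s}(\theta, x, s_\star) \cdot (u_{t+1} - u_t)$.

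For the BPTT side, unrolling $s_{t+1} = \frac{\partial \Phi}{\partial s}(\theta_{t+1} = \theta, x, s_t)$ and reverse-differentiating $\mathcal{L}_T = C(s_T, y)$ yields the standard backward recursion $\frac{\partial \mathcal{L}_T}{\partial s_T} = \frac{\partial C}{\partial s}(s_T, y)$, $\frac{\partial \mathcal{L}_T}{\partial s_{j-1}} = \frac{\partial^2 \Phi}{\partial s^2}(\theta, x, s_{j-1}) \cdot \frac{\partial \mathcal{L}_T}{\partial s_j}$, together with $\frac{\partial \mathcal{L}_T}{\partial \theta_j} = \frac{\partial^2 \Phi}{\partial \theta \partial s}(\theta, x, s_{j-1}) \cdot \frac{\partial \mathcal{L}_T}{\partial s_j}$. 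The convergence hypothesis enters here: for $t = 0, 1, \ldots, K-1$ the indices $j = T-t$ and $j-1 = T-t-1$ all lie in $\{T-K, \ldots, T\}$, on which $s_j = s_\star$, so every Hessian in the first $K$ backward steps is $H$ and $s_{T-t-1} = s_\star$ in the parameter formula. Thus $w_t := -\frac{\partial \mathcal{L}_T}{\partial s_{T-t}}$ satisfies $w_0 = -\frac{\partial C}{\partial s}(s_\star, y)$ and $w_{t+1} = H \cdot w_t$, and $-\frac{\partial \mathcal{L}_T}{\partial \theta_{T-t}} = \frac{\partial^2 \Phi}{\partial \theta \partial s}(\theta, x, s_\star) \cdot w_t$. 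Since $(u_{t+1} - u_t)_{t\ge 0}$ obeys $u_1 - u_0 = -\frac{\partial C}{\partial s}(s_\star, y)$ and $u_{t+2} - u_{t+1} = H \cdot (u_{t+1} - u_t)$, i.e. precisely the recursion defining $(w_t)$, induction gives $u_{t+1} - u_t = w_t$ for $t = 0, \ldots, K-1$, which is (\ref{eq:gdd-s}); applying $\frac{\partial^2 \Phi}{\partial \theta \partial s}(\theta, x, s_\star)$ to both sides yields (\ref{eq:gdd-theta}).

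The only genuine subtlety, and the step I expect to need the most care, is bookkeeping the arguments at which each second derivative of $\Phi$ is evaluated along the backward pass: the identities hold precisely for $t \le K-1$ because that is the range in which the relevant free-phase states have already relaxed to $s_\star$, so that the BPTT recursion has the \emph{constant} coefficient $H$ and therefore coincides with the linearization of the nudged dynamics. Everything else is the same chain-rule manipulation as in Lemma~\ref{lma:main} and Lemma~\ref{lma:rec-backprop}, now in discrete time, where differentiability in $\beta$ is immediate since $s_t^\beta$ is an explicit finite composition rather than an implicitly defined equilibrium.
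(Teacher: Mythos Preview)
The paper does not actually prove this theorem; it simply writes ``We refer to \citet{ernoult2019updates} for a proof'' and notes that the result is a discrete-time variant of Theorem~\ref{thm:truncated-eqprop}. Your argument is correct and is precisely the discrete-time transcription of the strategy the paper \emph{does} spell out for the continuous-time case in Lemma~\ref{lma:rec-backprop}: linearize the nudged dynamics at $\beta=0$ to get one recursion, write down the BPTT backward pass to get another, observe that both have constant coefficient $H=\frac{\partial^2\Phi}{\partial s^2}(\theta,x,s_\star)$ on the relevant index range thanks to the convergence hypothesis, and match them by uniqueness. Your identification of the role of the symmetry of $H$ (so that the Jacobian of the forward map equals its transpose appearing in the backward pass) and your careful tracking of the index window $\{T-K,\ldots,T\}$ on which $s_j=s_\star$ are exactly the points that make the discrete argument go through, and mirror what one finds in \citet{ernoult2019updates}.
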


We refer to \citet{ernoult2019updates} for a proof. Theorem \ref{thm:gdd} relates neural computation to gradient computation, and is as such a discrete-time variant of Theorem \ref{thm:truncated-eqprop}. In essence, Theorem \ref{thm:gdd} shows that in the nudged phase of EqProp, the temporal variations in neural activity and incremental weight updates represent loss gradients. Since the sequence of states in the nudged phase satisfies $s_{t+1}^\beta = s_t^\beta - \beta \; \frac{\partial \mathcal{L}_T}{\partial s_{T-t}} + o(\beta)$ as $\beta \to 0$, descending the gradients of the loss $\mathcal{L}_T$, we call this property the \textit{gradient descending dynamics} (GDD) property.

As mentioned in section \ref{eq:difficulties}, one of the challenges with EqProp training comes from the empirical observation that learning is successful only if we are \textit{exactly} at the fixed point at the end of the first phase, although in practice we use numerical methods to \textit{approximate} this fixed point. In particular, we need a criterion to `decide' when the fixed point has been reached with high enough accuracy. Theorem \ref{thm:gdd} provides such a criterion: a necessary condition for the fixed point of the first phase to be reached is that the identities of Eqs.~\ref{eq:gdd-theta}-\ref{eq:gdd-s} hold.

\subsection{Backpropagation Through Time}

On the one hand we can define the neural and weight increments of EqProp as follows, which we can compute in the second phase of EqProp:
\begin{align}
   \Delta_\theta^{\rm EP}(\beta, t) & = \frac{1}{\beta} \left( \frac{\partial \Phi}{\partial \theta} \left( \theta, x, s_{t+1}^\beta \right) -  \frac{\partial \Phi}{\partial \theta} \left( \theta, x, s_t^\beta \right) \right), \\
   \Delta_s^{\rm EP}(\beta, t) & = \frac{1}{\beta} \left( s_{t+1}^\beta-s_t^\beta \right).
\end{align}

On the other hand, the loss gradients $\frac{\partial \mathcal{L}_T}{\partial s_{T-t}}$ and $\frac{\partial \mathcal{L}_T}{\partial \theta_{T-t}}$ appearing on the right-hand sides of Eqs.~\ref{eq:gdd-theta}-\ref{eq:gdd-s} can be computed by automatic differentiation. Specifically, these loss gradients are the `partial derivatives' computed by the \textit{backpropagation through time} (BPTT) algorithm. Here BPTT is applied in the very specific setting of an RNN with transition function $F = \frac{\partial \Phi}{\partial s}$, with static input $x$ at each time step, and with target $y$ at the final time step. In particular, there is no time-dependence in the data. We denote these partial derivatives computed by BPTT:
\begin{align}
 \nabla^{\rm BPTT}_\theta(t) & = \frac{\partial \mathcal{L}_T}{\partial \theta_{T-t}}, \\
 \nabla^{\rm BPTT}_s(t) & = \frac{\partial \mathcal{L}_T}{\partial s_{T-t}}.
\end{align}

Using these notations, the GDD property (Theorem \ref{thm:gdd}) states that under the condition that $s_\star = s_T = s_{T-1} = \ldots s_{T-K}$, we have for every $t = 0, 1, \ldots K-1$ that $\lim_{\beta \to 0} \Delta_s^{\rm EP}(\beta, t) = - \nabla^{\rm BPTT}_s(t)$ and $\lim_{\beta \to 0} \Delta_\theta^{\rm EP}(\beta, t) = - \nabla^{\rm BPTT}_\theta(t)$. The GDD property is illustrated in Figure \ref{fig:gdd}.

We note that the GDD property also implies that the gradient computed by `truncated EqProp' (i.e. EqProp where the second phase is halted before convergence to the second fixed point) corresponds to truncated BPTT.

\begin{figure}[ht!]
\begin{center}
   \fbox{\includegraphics[width=\textwidth]{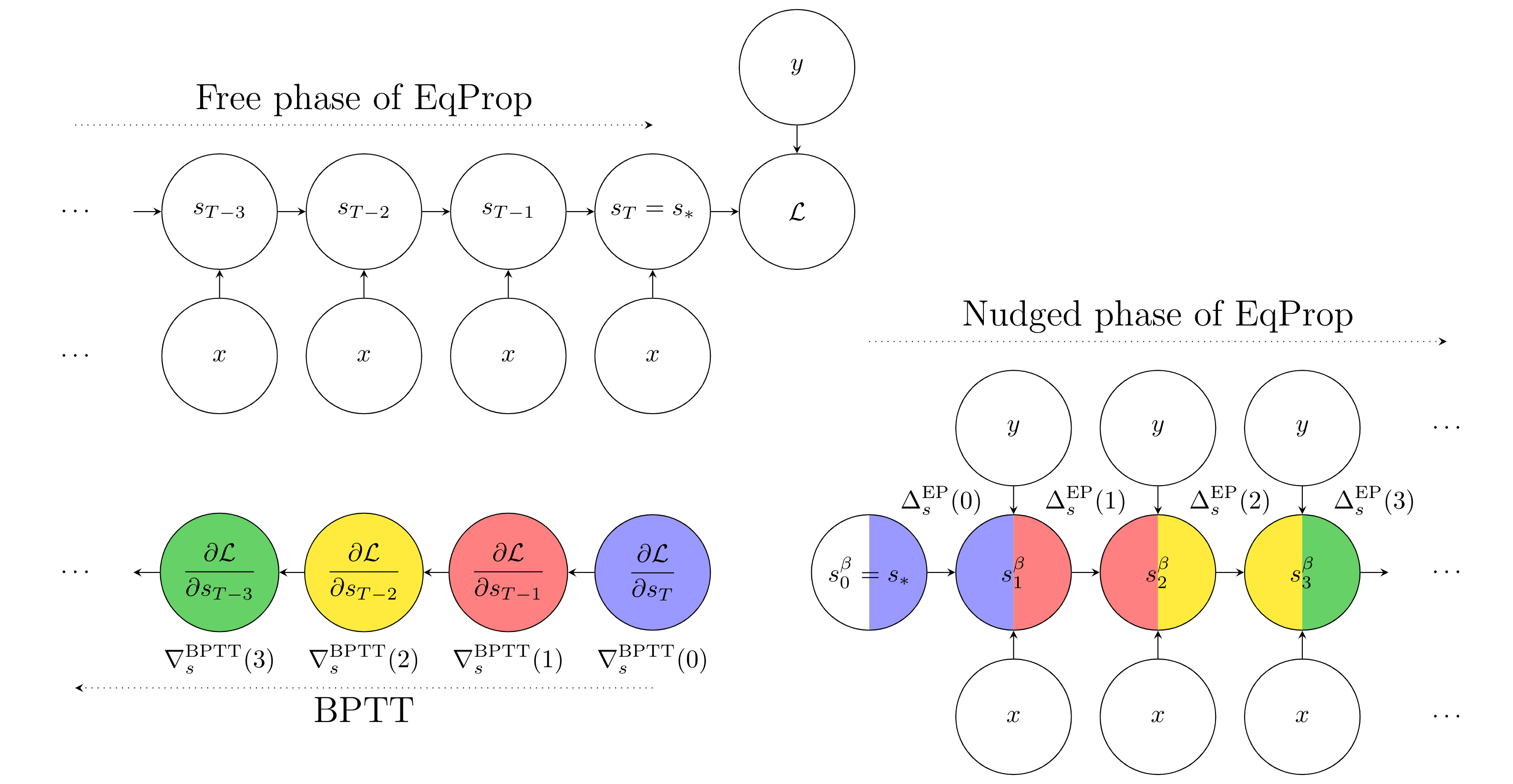}}
\end{center}
  \caption[Illustration of the gradient-descending dynamics (GDD) property]{\textbf{Illustration of the gradient-descending dynamics (GDD) property (Theorem \ref{thm:gdd}).}
  \textbf{Top left.} Free phase. The final state $s_T$ is the fixed point $s_\star$. \textbf{Bottom left.} Backpropagation through time (BPTT), in the very specific setting of an RNN with static input $x$. \textbf{Bottom right.} Nudged phase of EqProp. The starting state in the nudged phase is the final state of the free phase, i.e. the fixed point $s_\star$. \textbf{Theorem \ref{thm:gdd}.} Step by step correspondence between the neural increments $\Delta_s^{\rm EP}(t)$ in the nudged phase of EqProp and the gradients $\nabla_s^{\rm BPTT}(t)$ of BPTT. Corresponding computations in EqProp and BPTT at timestep $t=0$ (resp. $t=1, 2, 3$) are colored in blue (resp. red, yellow, green). Forward-time computation in EqProp corresponds to backward-time computation in BPTT.}
  \label{fig:gdd}
\end{figure}
\chapter{Extensions of Equilibrium Propagation}
\label{chapter:future}

In this chapter, we present research directions for the development of the equilibrium propagation framework. In section \ref{sec:time-varying-setting}, we present a general framework for training dynamical systems with time-varying inputs, which exploits the principle of least action. In section \ref{sec:stochastic-setting}, we adapt the EqProp framework to the setting of stochastic systems. In section \ref{sec:contrastive-meta-learning}, we briefly present the \textit{contrastive meta-learning} framework of \citet{zucchet2021contrastive}, where they use the EqProp method to train the meta-parameters of a meta-learning model.

\section{Equilibrium Propagation in Dynamical Systems with Time-Varying Inputs}
\label{sec:time-varying-setting}

In Chapter \ref{chapter:eqprop}, we have derived the EqProp training procedure for a class of models called energy-based models (EBMs). A key element of the theory is the fact that, in EBMs, equilibrium states are characterized by variational equations. In this section, we show that the EqProp training strategy can be applied to other situations where variational equations appear. The equations of motion of many physical systems can also be characterized by variational equations -- their trajectory can be derived through a \textit{principle of stationary action} (e.g. a principle of least action). In such systems, the quantity that is stationary is not the energy function (as in an EBM), but the \textit{action functional}, which is by definition the time integral of the Lagrangian function. Such systems, which we call \textit{Lagrangian-based models} (LBMs), can play the role of machine learning models with time-varying inputs. This idea was first proposed by \citet{baldi1991contrastive} in the context of the \textit{contrastive learning framework}.

\subsection{Lagrangian-Based Models}

A Lagrangian-based model (LBM) is specified by a set of adjustable parameters, denoted $\theta$, a time-varying input, and a state variable. We write $\mathbf{x}_t$ the input value at time $t$, and $\mathbf{s}_t$ the state of the model at time $t$. We study the evolution of the system over a time interval $[0,T]$, and we write $\mathbf{x}$ and $\mathbf{s}$ the entire input and state trajectories over this time interval. The model is further described in terms of a \textit{functional} $\mathcal{S}$ which, given a parameter value $\theta$ and an input trajectory $\mathbf{x}$, associates to each trajectory $\mathbf{s}$ the real number
\begin{equation}
    \mathcal{S}(\theta, \mathbf{x}, \mathbf{s}) = \int_0^T L(\theta, \mathbf{x}_t, \mathbf{s}_t, \dot{\mathbf{s}}_t) dt,
\end{equation}
where $L(\theta, \mathbf{x}_t, \mathbf{s}_t, \dot{\mathbf{s}}_t)$ is a scalar function of the parameters ($\theta$), the external input ($\mathbf{x}_t$), the state of the system ($\mathbf{s}_t$) as well as its time derivative ($\dot{\mathbf{s}_t}$). The function $L$ is called the \textit{Lagrangian function} of the system, and $\mathcal{S}$ is called the \textit{action functional}. The action functional is defined for any \textit{conceivable} trajectory $\mathbf{s}$, but, among all conceivable trajectories, the \textit{effective} trajectory of the system (subject to $\theta$ and $\mathbf{x}$), denoted $\mathbf{s}(\theta, \mathbf{x})$, satisfies by definition $\left. \frac{d}{d \epsilon} \right|_{\epsilon=0} \mathcal{S}(\theta, \mathbf{x}, \mathbf{s}(\theta, \mathbf{x}) + \epsilon \mathbf{s}) = 0$ for any trajectory $\mathbf{s}$ that satisfies the boundary conditions $\mathbf{s}_0=0$ and $\mathbf{s}_T=0$. We write for short
\begin{equation}
    \label{eq:free-effective-trajectory}
    \frac{\delta \mathcal{S}}{\delta \mathbf{s}}(\theta, \mathbf{x}, \mathbf{s}(\theta, \mathbf{x})) = 0.
\end{equation}
Intuitively, $\delta \mathcal{S}$ can be thought of as the variation of $\mathcal{S}$ associated to a small variation $\delta \mathbf{s}$ around the trajectory $\mathbf{s}(\theta, \mathbf{x})$. Mathematically, $\frac{\delta \mathcal{S}}{\delta \mathbf{s}}(\theta, \mathbf{x}, \mathbf{s}(\theta, \mathbf{x}))$ represents the differential of the function $\mathcal{S}(\theta, \mathbf{x}, \cdot)$ at the point $\mathbf{s}(\theta, \mathbf{x})$. We say that the effective trajectory is stationary with respect to the action functional, and that the dynamics of the system derives from a \textit{principle of stationary action}. Since the action functional ($\mathcal{S}$) is defined in terms of the Lagrangian of the system ($L$), we call such a time-varying system a \textit{Lagrangian-based model}.

The loss to be minimized is an integral of cost values of the states along the effective trajectory:
\begin{equation}
    \mathcal{L}_0^T(\theta, \mathbf{x}, \mathbf{y}) = \int_0^T c_t(\mathbf{s}_t(\theta, \mathbf{x}), \mathbf{y}_t) dt.
\end{equation}
In this expression, $\mathbf{y}_t$ is the desired target at time $t$, $\mathbf{y}$ is the corresponding target trajectory, $\mathbf{s}_t(\theta, \mathbf{x})$ is the state at time $t$ along the effective trajectory $\mathbf{s}(\theta, \mathbf{x})$, and $c_t(\mathbf{s}_t, \mathbf{y}_t)$ is a scalar function (the \textit{cost function} at time $t$).

Similarly to the setting of EBMs, the concept of \textit{sum-separability} is useful in LBMs. Let $\theta = (\theta_1, \ldots, \theta_N)$ be the adjustable parameters of the system. Let $\{ \mathbf{x}_t, \mathbf{s}_t, \dot{\mathbf{s}}_t \}_k$ denote the information about $(\mathbf{x}_t, \mathbf{s}_t, \dot{\mathbf{s}}_t)$ at time $t$ which is locally available to parameter $\theta_k$. We say that the Lagrangian $L$ is \textit{sum-separable} if it is of the form
\begin{equation}
    \label{eq:sum-separability-Lagrangian}
    L(\theta, \mathbf{x}_t, \mathbf{s}_t, \dot{\mathbf{s}}_t) = L_0(\mathbf{x}_t, \mathbf{s}_t, \dot{\mathbf{s}}_t) + \sum_{k=1}^N L_k( \theta_k, \{ \mathbf{x}_t, \mathbf{s}_t, \dot{\mathbf{s}}_t \}_k),
\end{equation}
where $L_0(\mathbf{x}_t, \mathbf{s}_t, \dot{\mathbf{s}}_t)$ is a term that is independent of the parameters to be adjusted, and $L_k$ is a scalar function of $\theta_k$ and $\{\mathbf{x}_t, \mathbf{s}_t, \dot{\mathbf{s}}_t\}_k$ for each $k \in \{ 1, \ldots, N \}$.

\subsection{Gradient Formula}

Similarly to the static case (Section \ref{sec:loss-gradients}), we introduce the \textit{total action functional}
\begin{equation}
    \mathcal{S}^\beta(\mathbf{s}) = \int_0^T \left( L(\theta, \mathbf{x}_t, \mathbf{s}_t, \dot{\mathbf{s}}_t) + \beta \; c_t(\mathbf{s}_t, \mathbf{y}_t) \right) dt,
\end{equation}
defined for any value of the nudging factor $\beta$ (for fixed $\theta$, $\mathbf{x}$ and $\mathbf{y}$). Intuitively, by varying $\beta$, the action functional $\mathcal{S}^\beta$ is modified, and so is the stationary solution of the action, i.e. the effective trajectory of the system. Specifically, let us denote $\mathbf{s}^\beta$ the trajectory characterized by the stationarity condition $\frac{\delta \mathcal{S}^\beta}{\delta \mathbf{s}}(\mathbf{s}^\beta) = 0$. Note in particular that for $\beta=0$ we have $\mathbf{s}^0 = \mathbf{s}(\theta, \mathbf{x})$.

\medskip

\begin{thm}[Gradient formula for Lagrangian-based models]
\label{thm:time-varying-eqprop}
The gradient of the loss can be computed using the following formula:
\begin{equation}
    \label{eq:time-varying-eqprop}
    \frac{\partial \mathcal{L}_0^T}{\partial \theta}(\theta, \mathbf{x}, \mathbf{y}) = \left. \frac{d}{d\beta} \right|_{\beta=0} \int_0^T  \frac{\partial L}{\partial \theta} \left( \theta, \mathbf{x}_t, \mathbf{s}^\beta_t, \dot{\mathbf{s}}^\beta_t \right) dt
\end{equation}
Furthermore, if the Lagrangian function $L$ is sum-separable, then the gradient for each parameter $\theta_k$ depends only on information that is locally available to $\theta_k$:
\begin{equation}
    \label{eq:time-varying-eqprop-local}
    \frac{\partial \mathcal{L}_0^T}{\partial \theta_k}(\theta, \mathbf{x}, \mathbf{y}) = \left. \frac{d}{d\beta} \right|_{\beta=0} \int_0^T \frac{\partial L_k}{\partial \theta_k} (\theta_k, \{ \mathbf{x_t}, \mathbf{s}^\beta_t, \dot{\mathbf{s}}^\beta_t \}_k) dt.
\end{equation}
\end{thm}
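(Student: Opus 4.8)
The plan is to mirror the proof of Theorem~\ref{thm:static-eqprop}, treating the \emph{entire trajectory} $\mathbf{s}$ as the "state variable" and the action functional as the scalar function $F$ of Lemma~\ref{lma:main}. For a fixed input trajectory $\mathbf{x}$ and target trajectory $\mathbf{y}$, I would set
\[
F(\theta,\beta,\mathbf{s}) \;:=\; \mathcal{S}^\beta(\mathbf{s}) \;=\; \int_0^T \left( L(\theta,\mathbf{x}_t,\mathbf{s}_t,\dot{\mathbf{s}}_t) + \beta\, c_t(\mathbf{s}_t,\mathbf{y}_t) \right) dt,
\]
where $\mathbf{s}$ ranges over the affine Banach space of admissible trajectories with the prescribed (here, fixed) endpoint values, so that the tangent space consists of variations vanishing at $t=0$ and $t=T$. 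Under this identification the defining relation $\frac{\delta \mathcal{S}^\beta}{\delta \mathbf{s}}(\mathbf{s}^\beta)=0$ is exactly the stationarity condition $\frac{\partial F}{\partial s}(\theta,\beta,\mathbf{s}^\beta)=0$ of Eq.~\ref{eq:stationary} (the infinite-dimensional analog of Eq.~\ref{eq:free-equilibrium-state}), and $\mathbf{s}^0 = \mathbf{s}(\theta,\mathbf{x})$.

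Next I would invoke the Banach-space version of Lemma~\ref{lma:main}, as described in the "Infinite dimensions" remark of Section~\ref{sec:remarks}: assuming the second variation $\frac{\delta^2 \mathcal{S}^\beta}{\delta \mathbf{s}^2}$ is invertible at the stationary trajectory, the map $(\theta,\beta)\mapsto\mathbf{s}^\beta$ is well-defined and differentiable in a neighborhood of $\beta=0$, and
\[
\frac{d}{d\theta}\frac{\partial F}{\partial \beta}\bigl(\theta,\beta,\mathbf{s}^\beta\bigr) = \frac{d}{d\beta}\frac{\partial F}{\partial \theta}\bigl(\theta,\beta,\mathbf{s}^\beta\bigr).
\]
The role of stationarity is the envelope identity: when differentiating $F(\theta,\beta,\mathbf{s}^\beta)$ with respect to $\beta$ (resp.\ $\theta$), the term coming from $\partial\mathbf{s}^\beta/\partial\beta$ (resp.\ $\partial\mathbf{s}^\beta/\partial\theta$) vanishes, because that variation is admissible (it vanishes at the endpoints, which do not depend on $\theta$ or $\beta$) and $\frac{\delta \mathcal{S}^\beta}{\delta\mathbf{s}}(\mathbf{s}^\beta)=0$. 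Hence $\frac{\partial F}{\partial\beta} = \int_0^T c_t(\mathbf{s}^\beta_t,\mathbf{y}_t)\,dt$ and $\frac{\partial F}{\partial\theta} = \int_0^T \frac{\partial L}{\partial\theta}(\theta,\mathbf{x}_t,\mathbf{s}^\beta_t,\dot{\mathbf{s}}^\beta_t)\,dt$.

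To conclude, I would evaluate the mixed-derivative identity at $\beta=0$. Since $\mathbf{s}^0 = \mathbf{s}(\theta,\mathbf{x})$, the left-hand side becomes $\frac{d}{d\theta}\int_0^T c_t(\mathbf{s}_t(\theta,\mathbf{x}),\mathbf{y}_t)\,dt = \frac{\partial \mathcal{L}_0^T}{\partial\theta}(\theta,\mathbf{x},\mathbf{y})$, while the right-hand side is exactly $\left.\frac{d}{d\beta}\right|_{\beta=0}\int_0^T \frac{\partial L}{\partial\theta}(\theta,\mathbf{x}_t,\mathbf{s}^\beta_t,\dot{\mathbf{s}}^\beta_t)\,dt$, which is Eq.~\ref{eq:time-varying-eqprop}. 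The local statement Eq.~\ref{eq:time-varying-eqprop-local} then follows by substituting the sum-separable form of the Lagrangian (Eq.~\ref{eq:sum-separability-Lagrangian}) into Eq.~\ref{eq:time-varying-eqprop}, since in that case $\frac{\partial L}{\partial\theta_k} = \frac{\partial L_k}{\partial\theta_k}\bigl(\theta_k,\{\mathbf{x}_t,\mathbf{s}_t,\dot{\mathbf{s}}_t\}_k\bigr)$ — precisely as Eq.~\ref{eq:static-eqprop-local} follows from Eq.~\ref{eq:static-eqprop} in the static case.

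The main obstacle is functional-analytic rather than conceptual: making the "state $=$ trajectory" picture rigorous requires choosing an appropriate function space (e.g.\ $H^1([0,T])$ with fixed endpoints), verifying that $\mathcal{S}^\beta$ is twice Fréchet-differentiable there, checking that its second variation is invertible along the stationary trajectory (a non-degeneracy/Jacobi-type condition, the analog of the invertible Hessian in Lemma~\ref{lma:main}), and justifying differentiation under the integral sign. In keeping with the "Singularities" remark, I would state these regularity hypotheses and otherwise pretend everything is differentiable, since the substance of the argument — stationarity annihilates the trajectory-variation terms, then symmetry of mixed partials closes the identity — is identical to the static case.
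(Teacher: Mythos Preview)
Your proposal is correct and follows essentially the same route as the paper: identify the trajectory $\mathbf{s}$ with the state variable, the action functional $\mathcal{S}$ with the energy, the integrated cost $\mathcal{C}(\mathbf{s},\mathbf{y})=\int_0^T c_t(\mathbf{s}_t,\mathbf{y}_t)\,dt$ with the cost function, and then apply the static result (the paper cites Theorem~\ref{thm:static-eqprop} directly, you go one level deeper to Lemma~\ref{lma:main}, which amounts to the same thing). Your discussion of the functional-analytic hypotheses is more explicit than the paper's, but the substance of the argument is identical.
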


\begin{proof}[Proof of Theorem \ref{thm:time-varying-eqprop}]
We derive Theorem \ref{thm:time-varying-eqprop} as a corollary of Theorem \ref{thm:static-eqprop}. Recall that the action functional is by definition $\mathcal{S}(\theta, \mathbf{x}, \mathbf{s}) = \int_0^T L(\theta, \mathbf{x}_t, \mathbf{s}_t, \dot{\mathbf{s}}_t)$ and that the effective trajectory $\mathbf{s}(\theta, \mathbf{x})$ satisfies the stationary condition $\frac{\delta \mathcal{S}}{\delta \mathbf{s}}(\theta, \mathbf{x}, \mathbf{s}(\theta, \mathbf{x})) = 0$. We can define a cost functional $\mathcal{C}$ on any conceivable trajectory $\mathbf{s}$ by the formula $\mathcal{C}(\mathbf{s}, \mathbf{y}) = \int_0^T c_t(\mathbf{s}_t, \mathbf{y}_t) dt$. The loss $\mathcal{L}_0^T$ then rewrites $\mathcal{L}_0^T(\theta, \mathbf{x}, \mathbf{y}) = \mathcal{C}(\mathbf{s}(\theta, \mathbf{x}), \mathbf{y})$, the total action functional rewrites $\mathcal{S}^\beta(\mathbf{s}) = \mathcal{S}(\theta, \mathbf{x}, \mathbf{s}) + \beta \; \mathcal{C}(\mathbf{s}, \mathbf{y})$, and the nudged trajectory $\mathbf{s}^\beta$ satisfies the stationarity condition $\frac{\delta \mathcal{S}}{\delta \mathbf{s}}(\theta, \mathbf{x}, \mathbf{s}^\beta) + \beta \; \frac{\delta \mathcal{C}}{\delta \mathbf{s}}(\mathbf{s}^\beta, \mathbf{y}) = 0$.

Using these notations, the first formula to be proved (Eq.~\ref{eq:time-varying-eqprop}) rewrites
\begin{equation}
    \frac{\partial \mathcal{L}_0^T}{\partial \theta}(\theta, \mathbf{x}, \mathbf{y}) = \left. \frac{d}{d\beta} \right|_{\beta=0} \frac{\partial \mathcal{S}}{\partial \theta}\left(\theta, \mathbf{x}, \mathbf{s}^\beta \right),
\end{equation}
which is exactly the first formula of Theorem \ref{thm:static-eqprop}.
Finally, the second formula to be proved (Eq.~\ref{eq:time-varying-eqprop-local}) is a direct consequence of Eq.~\ref{eq:time-varying-eqprop} and the definition of sum-separability (Eq.~\ref{eq:sum-separability-Lagrangian}).

\end{proof}

\subsection{Training Sum-Separable Lagrangian-Based Models}

Theorem \ref{thm:time-varying-eqprop} suggests the following EqProp-like training procedure for Lagrangian-based models, to update the parameters in proportion to their loss gradients. Let us assume that the Lagrangian function has the sum-separability property. 

\paragraph{Free phase (inference).}
Set the system in some initial state $(\mathbf{s}_0,\dot{\mathbf{s}}_0)$ at time $t=0$, and set the nudging factor $\beta$ to zero. Play the input trajectory $\mathbf{x}$ over the time interval $[0, T]$, and let the system follow the trajectory $\mathbf{s}^0$ (i.e. the effective trajectory characterized by Eq.~\ref{eq:free-effective-trajectory}). We call $\mathbf{s}^0$ the \textit{free trajectory}. For each parameter $\theta_k$, the quantity $\frac{\partial L_k}{\partial \theta_k} (\theta_k, \{ \mathbf{x_t}, \mathbf{s}^0_t, \dot{\mathbf{s}}^0_t \}_k)$ is measured and integrated from $t=0$ to $t=T$, and the result is stored locally.

\paragraph{Nudged phase.}
Set the system in the same initial state $(\mathbf{s}_0,\dot{\mathbf{s}}_0)$ as in the free phase, and set now the nudging factor $\beta$ to some positive or negative (nonzero) value. Play again the input trajectory $\mathbf{x}$ over the time interval $[0, T]$, as well as the target trajectory $\mathbf{y}$, and let the system follow the trajectory $\mathbf{s}^\beta$ (i.e. the effective trajectory that is stationary with respect to $\mathcal{S}^\beta$). For each parameter $\theta_k$, the quantity $\frac{\partial L_k}{\partial \theta_k} (\theta_k, \{ \mathbf{x_t}, \mathbf{s}^\beta_t, \dot{\mathbf{s}}^\beta_t \}_k)$ is measured and integrated from $t=0$ to $t=T$.

\paragraph{Update rule.}
Finally, each parameter $\theta_k$ is updated locally in proportion to its gradient, i.e.
$\Delta \theta_k = - \eta \widehat{\nabla}_{\theta_k}(\beta)$, where $\eta$ is a learning rate, and
\begin{equation}
\widehat{\nabla}_{\theta_k}(\beta) = \frac{1}{\beta} \left( \int_0^T \frac{\partial L_k}{\partial \theta_k} (\theta_k, \{ \mathbf{x_t}, \mathbf{s}^\beta_t, \dot{\mathbf{s}}^\beta_t \}_k) dt - \int_0^T  \frac{\partial L_k}{\partial \theta_k} (\theta_k, \{ \mathbf{x_t}, \mathbf{s}^0_t, \dot{\mathbf{s}}^0_t \}_k) dt \right).
\end{equation}

As in the static setting (Chapter \ref{chapter:eqprop}), it is possible to reduce the bias and the variance of the gradient estimator by using the symmetrized version
\begin{equation}
\widehat{\nabla}_{\theta_k}^{\rm sym}(\beta) = \frac{1}{2 \beta} \left( \int_0^T \frac{\partial L_k}{\partial \theta_k} (\theta_k, \{ \mathbf{x_t}, \mathbf{s}^\beta_t, \dot{\mathbf{s}}^\beta_t \}_k) dt - \int_0^T  \frac{\partial L_k}{\partial \theta_k} (\theta_k, \{ \mathbf{x_t}, \mathbf{s}^{-\beta}_t, \dot{\mathbf{s}}^{-\beta}_t \}_k) dt \right).
\end{equation}
This requires two nudged phases: one with a positive nudging ($+\beta$) and one with a negative nudging ($-\beta$).

Although the EqProp training method for Lagrangian-based models requires running the input trajectory twice (in the free phase and in the nudged phase), we stress that we do not require to store the past states of the system, unlike the backpropagation through time (BPTT) algorithm used to train conventional recurrent neural networks.

\subsection{From Energy-Based to Lagrangian-Based Models}

Conceptually, we have the following correspondence between the static setting (energy-based models) and the time-varying setting (Lagrangian-based models).
\begin{itemize}
\item The concept of \textit{configuration} ($s$) is replaced by that of \textit{trajectory} ($\mathbf{s}$). A trajectory $\mathbf{s}$ is a function from the time interval $[0, T]$ to the space of configurations, which assigns to each time $t \in [0, T]$ a configuration $\mathbf{s}_t$.
\item The concept of \textit{energy function} ($E$) is replaced by that of \textit{action functional} ($\mathcal{S}$). Whereas an energy function $E$ assigns a real number $E(s)$ to each configuration $s$, an action functional $\mathcal{S}$ assigns a real number $\mathcal{S}(\mathbf{s})$ to each trajectory $\mathbf{s}$.
\item The concept of \textit{equilibrium state} (denoted $s(\theta, x)$ or $s_\star$) is replaced by that of \textit{effective trajectory} (denoted $\mathbf{s}(\theta, \mathbf{x})$). Whereas an equilibrium state is characterized by the stationarity of the energy ($\frac{\partial E}{\partial s} = 0$), an effective trajectory is characterized by the stationarity of the action ($\frac{\delta \mathcal{S}}{\delta \mathbf{s}} = 0$).
\end{itemize}

\subsection{Lagrangian-Based Models Include Energy-Based Models}

Consider a Lagrangian-based model whose Lagrangian function does not depend on $\dot{\mathbf{s}}_t$, i.e. $L$ is of the form
\begin{equation}
L(\theta,\mathbf{x}_t,\mathbf{s}_t,\dot{\mathbf{s}}_t) = E(\theta,\mathbf{x}_t,\mathbf{s}_t).
\end{equation}
Further suppose that the input signal $\mathbf{x}$ is static, i.e. $\mathbf{x}_t = x$ for any $t$. Denote $s_\star$ the equilibrium state characterized by $\frac{\partial E}{\partial s}(\theta,x,s_\star) = 0$. Then the trajectory $\mathbf{s}$ constantly equal to $s_\star$ (i.e. such that $\mathbf{s}_t = s_\star$ for all $t$) is a stationary solution of the action functional
\begin{equation}
\mathcal{S}(\theta,x,\mathbf{s}) = \int_0^T E(\theta,x,\mathbf{s}_t) dt.
\end{equation}
Indeed, for any variationa $\delta \mathbf{s}$ around $\mathbf{s}$, we have $\delta \mathcal{S} = \int_0^T \delta E(\theta,x,\mathbf{s}_t) dt = \int_0^T \frac{\partial E}{\partial s}(\theta,x,s_\star) \cdot \delta \mathbf{s_t} dt = 0$.
In this sense, energy-based models are special instances of Lagrangian-based models. Furthermore, assuming that the target signal $\mathbf{y}$ and the cost function $c_t$ are also static (i.e. $\mathbf{y}_t = y$ and $c_t = c$ at any time $t$), then the loss is equal to $\mathcal{L}_0^T = \int_0^T c(s_\star,y) dt$, which is the loss in the static setting (up to a constant $T$). In this case, the EqProp learning algorithm for Lagrangian-based models boils down to the EqProp learning algorithm for energy-based models (up to a constant $T$).

\section{Equilibrium Propagation in Stochastic Systems}
\label{sec:stochastic-setting}

Unlike neural networks trained on digital computers which can reliably process information in a deterministic way, physical systems (including analog circuits and biological networks) are subject to noise. In this section we present an extension of the equilibrium propagation framework to stochastic systems, which allows us to take such forms of noise into account, and may therefore be useful both from the neuromorphic and neuroscience points of view.

We note that the question whether the brain is stochastic or deterministic is controversial. However, even if the brain were deterministic, the precise trajectory of the neural activity is likely to be fundamentally unpredictable (i.e. chaotic) and thus easier to study statistically. In this case, the brain can still be usefully modelled with probability distributions (using probability theory or ergodic theory).


\subsection{From Deterministic to Stochastic Systems}

In the stochastic setting, when presented with an input $x$, instead of an equilibrium state $s_\star$, the model defines a probability distribution $p_\star(s)$ over the space of possible configurations $s$. Thus, rather than a stationary condition of the form $\frac{\partial E}{\partial s}(\theta, x, s_\star) = 0$, we now have an equilibrium distribution $p_\star(s)$ such that
\begin{equation}
	\label{eq:free-boltzmann-distribution}
	p_\star(s) = \frac{e^{-E(\theta, x, s)}}{Z_\star}, \qquad \text{with} \qquad Z_\star = \int e^{-E(\theta, x, s)}ds.
\end{equation}
The probability distribution defined by $p_\star(s)$ is called the Boltzmann distribution (or Gibbs distribution), and the normalizing constant $Z_\star$ is called the \textit{partition function}.
In this setting, the loss that we want to minimize is the expected cost over the equilibrium distribution
\begin{equation}
	\label{eq:loss-stochastic}
	\mathcal{L}_{\rm sto}(\theta, x, y) = \mathbb{E}_{s \sim p_\star(s)} \left[ C \left( s, y \right) \right].
\end{equation}
We note that $\mathcal{L}_{\rm sto}$ depends on $\theta$ and $x$ through the equilibrium distribution $p_\star(s)$.

\subsection{Gradient Formula}

As in the deterministic framework, the stochastic version of equilibrium propagation makes use of the total energy function $E(\theta, x, s) + \beta \; C(s, y)$. The notion of nudged equilibrium state ($s_\star^\beta$) is replaced accordingly by a nudged equilibrium distribution $p_\star^\beta(s)$, which is the Boltzmann distribution associated to the total energy function, i.e.
\begin{equation}
    \label{eq:nudged-boltzmann-distribution}
    p_\star^\beta(s) = \frac{e^{-E(\theta, x, s) - \beta \; C(s, y)}}{Z_\star^\beta}, \qquad \text{with} \qquad Z_\star^\beta = \int e^{-E(\theta, x, s) - \beta \; C(s, y)} ds.
\end{equation}

The following theorem extends Theorem \ref{thm:static-eqprop} to stochastic systems.

\medskip

\begin{thm}[\citet{scellier2017equilibrium}]
    \label{thm:equi-prop-langevin}
    The gradient of the objective function with respect to $\theta$ is equal to
    \begin{equation}
        \label{eq:thm-stochastic}
        \frac{\partial \mathcal{L}_{\rm sto}}{\partial \theta}(\theta, x, y) =
        \left. \frac{d}{d \beta} \right|_{\beta=0} \mathbb{E}_{s \sim p_\star^\beta(s)} \left[
        \frac{\partial E}{\partial \theta} \left( \theta, x, s \right) \right].
    \end{equation}
    Furthermore, if the energy function is sum-separable (in the sense of Eq.~\ref{eq:sum-separability}), then
    \begin{equation}
        \frac{\partial \mathcal{L}_{\rm sto}}{\partial \theta_k}(\theta, x, y) =
        \left. \frac{d}{d \beta} \right|_{\beta=0} \mathbb{E}_{s \sim p_\star^\beta(s)} \left[
        \frac{\partial E_k}{\partial \theta_k} \left( \theta_k, \{ x, s \}_k \right) \right].
    \end{equation}
\end{thm}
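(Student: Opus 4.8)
The plan is to mirror the structure of the deterministic argument (Theorem \ref{thm:static-eqprop} via Lemma \ref{lma:main}), but with the state variable $s$ marginalized out, so that the role of the ``total energy function'' is played by a \emph{free energy}. For a fixed input-target pair $(x,y)$, I would define
\begin{equation}
    \mathcal{F}(\theta, \beta) = - \log Z_\star^\beta = - \log \int e^{-E(\theta, x, s) - \beta \; C(s, y)} \, ds.
\end{equation}
Assuming enough regularity on $E$ and $C$ to differentiate under the integral sign (in line with the manuscript's convention that everything is differentiable), a direct computation of $\partial_\theta Z_\star^\beta$ and $\partial_\beta Z_\star^\beta$ yields, with $p_\star^\beta$ the Boltzmann distribution of Eq.~\ref{eq:nudged-boltzmann-distribution},
\begin{equation}
    \frac{\partial \mathcal{F}}{\partial \theta}(\theta, \beta) = \mathbb{E}_{s \sim p_\star^\beta(s)} \left[ \frac{\partial E}{\partial \theta}(\theta, x, s) \right], \qquad \frac{\partial \mathcal{F}}{\partial \beta}(\theta, \beta) = \mathbb{E}_{s \sim p_\star^\beta(s)} \left[ C(s, y) \right].
\end{equation}

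Next I would observe that, by Eq.~\ref{eq:loss-stochastic} together with $p_\star = p_\star^0$ (Eq.~\ref{eq:free-boltzmann-distribution}), the loss is exactly the $\beta$-derivative of the free energy at $\beta = 0$, i.e. $\mathcal{L}_{\rm sto}(\theta, x, y) = \frac{\partial \mathcal{F}}{\partial \beta}(\theta, 0)$. Differentiating this identity with respect to $\theta$ and using the symmetry of the second partial derivatives of the (twice continuously differentiable) function $\mathcal{F}$ gives
\begin{equation}
    \frac{\partial \mathcal{L}_{\rm sto}}{\partial \theta}(\theta, x, y) = \frac{\partial}{\partial \theta} \frac{\partial \mathcal{F}}{\partial \beta}(\theta, 0) = \left. \frac{d}{d\beta} \right|_{\beta=0} \frac{\partial \mathcal{F}}{\partial \theta}(\theta, \beta) = \left. \frac{d}{d\beta} \right|_{\beta=0} \mathbb{E}_{s \sim p_\star^\beta(s)} \left[ \frac{\partial E}{\partial \theta}(\theta, x, s) \right],
\end{equation}
which is Eq.~\ref{eq:thm-stochastic}. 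Equivalently, one can phrase this as Lemma \ref{lma:main} applied to $F = \mathcal{F}$, the marginalization over $s$ rendering the stationarity hypothesis vacuous.

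The local (sum-separable) version then follows with no extra work: if $E(\theta, x, s) = E_0(x,s) + \sum_{k=1}^N E_k(\theta_k, \{x,s\}_k)$ as in Eq.~\ref{eq:sum-separability}, then $\frac{\partial E}{\partial \theta_k}(\theta, x, s) = \frac{\partial E_k}{\partial \theta_k}(\theta_k, \{x,s\}_k)$, and substituting this component-wise into Eq.~\ref{eq:thm-stochastic} gives the second formula. The main technical obstacle is the justification of differentiating under the integral sign and of exchanging $\partial_\theta$ and $\partial_\beta$: this needs integrability/dominated-convergence hypotheses on $e^{-E}$, $\frac{\partial E}{\partial \theta} e^{-E}$, $C \, e^{-E}$ and their derivatives, which are not made explicit in the statement; one either imposes mild growth conditions on $E$ (ensuring the Boltzmann distribution and the relevant moments are well-defined and smooth in $(\theta,\beta)$) or, as elsewhere in the manuscript, simply assumes everything is differentiable. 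Everything else is a routine computation.
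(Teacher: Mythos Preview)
Your proof is correct and is essentially the same as the paper's: both arguments hinge on computing the partial derivatives of the free energy $-\log Z_\star^\beta$ as expectations under $p_\star^\beta$ and then invoking the symmetry of its mixed second partials in $(\theta,\beta)$. The only cosmetic difference is that the paper states the cross-derivative identity (your Schwarz step) for general $\beta$ before specializing to $\beta=0$, and writes things in terms of the total energy $F$ rather than $E$ and $C$ separately; your remark that this is Lemma~\ref{lma:main} with $s$ marginalized out is exactly the spirit of the paper's observation that Eq.~\ref{eq:lemma-eqprop-stochastic} is a stochastic variant of that lemma.
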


\begin{proof}[Proof of Theorem \ref{thm:equi-prop-langevin}]
    Recall that the total energy function is by definition $F(\theta, \beta, s) = E(\theta, x, s) + \beta \; C(s, y)$, where the notations $x$ and $y$ are dropped for simplicity (since they do not play any role in the proof). We also (re)define $Z_\theta^\beta = \int e^{-F(\theta, \beta, s)}ds$, the partition function, as well as $p_\theta^\beta(s) = \frac{e^{-F(\theta, \beta, s)}}{Z_\theta^\beta}$, the corresponding Boltzmann distribution. Recalling the definition of the loss $\mathcal{L}_{\rm sto}$ (Eq.~\ref{eq:loss-stochastic}), and using the fact that $\frac{\partial F}{\partial \beta} = C$ and that $\frac{\partial F}{\partial \theta} = \frac{\partial E}{\partial \theta}$, the formula to show (Eq.~\ref{eq:thm-stochastic}) is a particular case of the following formula, evaluated at the point $\beta=0$:
    \begin{equation}
        \label{eq:lemma-eqprop-stochastic}
        \frac{d}{d\theta} \mathbb{E}_{s \sim p_\theta^\beta(s)} \left[ \frac{\partial F}{\partial \beta} \left( \theta, \beta, s \right) \right]
        = \frac{d}{d\beta} \mathbb{E}_{s \sim p_\theta^\beta(s)} \left[ \frac{\partial F}{\partial \theta} \left( \theta, \beta, s \right) \right].
    \end{equation}
    Therefore, in order to prove Theorem \ref{thm:equi-prop-langevin}, it is sufficient to prove  Eq.~\ref{eq:lemma-eqprop-stochastic}. We do this in two steps. First, the cross-derivatives of the log-partition function $\ln \left( Z_\theta^\beta \right)$ are equal:
    \begin{equation}
        \label{eq:cross-derivatives}
        \frac{d}{d \theta} \frac{d}{d \beta} \ln \left( Z_\theta^\beta \right)
        = \frac{d}{d \beta} \frac{d}{d \theta} \ln \left( Z_\theta^\beta \right).
    \end{equation}
    Second, we have
    \begin{equation}
      \label{eq:derivative-beta}
      \frac{d}{d\beta} \ln \left( Z_\theta^\beta \right)
      = \mathbb{E}_{s \sim p_\theta^\beta(s)} \left[ \frac{\partial F}{\partial \beta}(\theta, \beta, s) \right],
    \end{equation}
    and similarly
    \begin{equation}
      \label{eq:derivative-theta}
      \frac{d}{d\theta} \ln \left( Z_\theta^\beta \right)
      = \mathbb{E}_{s \sim p_\theta^\beta(s)} \left[ \frac{\partial F}{\partial \theta}(\theta, \beta, s) \right].
    \end{equation}
    Plugging Eq.~\ref{eq:derivative-beta} and Eq.~\ref{eq:derivative-theta} in Eq.~\ref{eq:cross-derivatives}, we get Eq.~\ref{eq:lemma-eqprop-stochastic}.
    Hence the result.
\end{proof}

We note that Eq.~\ref{eq:lemma-eqprop-stochastic} is a stochastic variant of the fundamental lemma of EqProp (Lemma \ref{lma:main}).
The quantity $-\ln \left( Z_\theta^\beta \right)$ is called the \textit{free energy} of the system.

\subsection{Langevin Dynamics}

The prototypical dynamical system to sample from the equilibrium distribution $p_\star(s)$ (Eq.~\ref{eq:free-boltzmann-distribution}) is the \textit{Langevin dynamics}, which we describe here. Recall from Chapter \ref{chapter:hopfield} the gradient dynamics $\frac{d s_t}{dt} = -\frac{\partial E}{\partial s}(\theta, x, s_t)$. To go from this (deterministic) gradient dynamics to the (stochastic) Langevin dynamics, we add a new term (a Brownian term) which models a form of noise:
\begin{equation}
    \label{eq:free-langevin-dynamics}
    d S_t = - \frac{\partial E}{\partial s} \left( \theta, x, S_t \right) dt + \sqrt{2} \; dB_t.
\end{equation}
In this expression, $B_t$ is a mathematical object called a \textit{Brownian motion}. Instead of defining $B_t$ formally, we give here an intuitive definition. Intuitively, each increment $dB_t$ (between time $t$ and time $t+dt$) can be thought of as a normal random variable with mean $0$ and variance $dt$, which is "independent of past increments". By following this noisy form of gradient descent with respect to the energy function $E$, the state of the system ($S_t$) settles to the Boltzmann distribution. This can be proved using the Kolmogorov forward equation (a.k.a. Fokker-Planck equation) for diffusion processes.

Here we have chosen the constant $\sqrt{2}$ in the Langevin dynamics, so that the `temperature' of the system is $1$. More generally, if the Brownian motion is scaled by a factor $\sigma(\theta, x)$, i.e. if the dynamics is of the form $d S_t = - \frac{\partial E_\theta}{\partial s} \left( \theta, x, S_t \right) dt + \sigma(\theta, x) \cdot dB_t$, then the exponent in the Boltzmann distribution needs to be rescaled by a factor $\frac{1}{2}\sigma^2(\theta, x)$. We call this modified equilibrium distribution the Boltzmann distribution with temperature $T=\frac{1}{2}\sigma^2(\theta, x)$. We note that if $\sigma(\theta, x)=\sqrt{2}$ then $T=1$.

\subsection{Equilibrium Propagation in Langevin Dynamics}

In the setting of Langevin dynamics, EqProp takes the following form.

\paragraph{Free phase (inference).}
In the free phase, the network is shown an input $x$ and the state of the system follows the Langevin dynamics of Eq.~\ref{eq:free-langevin-dynamics}. `Free samples' are drawn
from the equilibrium distribution $p_\star(s) \propto e^{-E(\theta, x, s)}$.

\paragraph{Nudged phase.}
In the nudged phase, a term $-\beta \; \frac{\partial C}{\partial s}$ is added to the dynamics of Eq.~\ref{eq:free-langevin-dynamics}, where $\beta$ is a scalar hyperparameter (the nudging factor). Denoting $S_t^\beta$ the state of the network at time $t$ in the nudged phase, the dynamics reads:
\begin{equation}
	d S_t^\beta = \left[ - \frac{\partial E}{\partial s} \left( \theta, x, S_t^\beta \right) - \beta \; \frac{\partial C}{\partial s} \left( S_t^\beta, y \right) \right] dt + \sqrt{2} \; dB_t.
\end{equation}
Here for readability we use the same notation $B_t$ for the Brownian motion of the nudged phase, but it should be understood that this is a new Brownian motion, independent of the one used in the free phase.
`Nudged samples' are drawn from the nudged distribution $p_\star^\beta(s) \propto e^{-E(\theta, x, s)- \beta \; C(s, y)}$.

\paragraph{Gradient estimate.}
Finally, the gradient of the loss $\mathcal{L}_{\rm sto}$ of Eq.~\ref{eq:loss-stochastic} can be approximated using the samples from the free and nudged distributions to estimate:
\begin{equation}
    \widehat{\nabla}_\theta(\beta) = \frac{1}{\beta} \left( \mathbb{E}_{s \sim p_\star^\beta(s)} \left[
    \frac{\partial E}{\partial \theta} \left( \theta, x, s \right) \right]
    - \mathbb{E}_{s \sim p_\star(s)} \left[
    \frac{\partial E}{\partial \theta} \left( \theta, x, s \right) \right] \right).
\end{equation}

\section{Contrastive Meta-Learning}
\label{sec:contrastive-meta-learning}

Recently, \citet{zucchet2021contrastive} introduced the \textit{contrastive meta-learning} framework, where they propose to train the meta-parameters of a meta-learning model using the EqProp method. In this section, we briefly present the setting of meta-learning and show how \citet{zucchet2021contrastive} derive the contrastive meta-learning rule.

\subsection{Meta-Learning and Few-Shot Learning}

Meta learning, or \textit{learning to learn}, is a broad field that encompasses \textit{hyperparameter optimization}, \textit{few-shot learning}, and many other use cases. Here, for concreteness, we present the setting of few-shot learning.

In the setting of few-shot learning, the aim is to build a system that is able to learn (or `adapt' to) a given task $\mathcal{T}$ when only very limited data is available for that task.
The system should be able to do so for a variety of tasks coming from a distribution of tasks $p(\mathcal{T})$.
In this setting, the system has two types of parameters: a \textit{meta-parameter} $\theta$ which is shared across all tasks, and a \textit{task-specific parameter} $\phi$ which can be adapted to a given task. In the \textit{adaptation phase}, the task-specific parameter $\phi$ adapts to some task $\mathcal{T}$ using a training set $\mathcal{D}_{\rm train}$ corresponding to that task. The resulting value of the task-specific parameter after this adaptation phase is denoted $\phi(\theta,\mathcal{D}_{\rm train})$, which depends on both $\theta$ and $\mathcal{D}_{\rm train}$. The performance of the resulting $\phi(\theta,\mathcal{D}_{\rm train})$ is then evaluated on a test set $\mathcal{D}_{\rm test}$ from the same task $\mathcal{T}$. This performance is denoted $L(\phi(\theta,\mathcal{D}_{\rm train}),\mathcal{D}_{\rm test})$, where $L$ is a loss function. The goal of meta-learning is then to find the value of the meta-parameter $\theta$ that minimizes the expected loss $\mathcal{R}(\theta) = \mathbb{E}_{(\mathcal{D}_{\rm train},\mathcal{D}_{\rm test})} \left[ L(\phi(\theta,\mathcal{D}_{\rm train}),\mathcal{D}_{\rm test}) \right]$ over pairs of training/test sets $(\mathcal{D}_{\rm train},\mathcal{D}_{\rm test})$ coming from the distribution of tasks $p(\mathcal{T})$. In other words, the goal is to find $\theta$ that generalizes well across tasks from the distribution $p(\mathcal{T})$.

\subsection{Contrastive Meta-Learning}

The idea of the \textit{contrastive meta-learning} framework of \citet{zucchet2021contrastive} is the following. In the adaptation phase, the task-specific parameter $\phi$ minimizes an \textit{inner loss} $L^{\rm in}$, so that
\begin{equation}
    \phi(\theta,\mathcal{D}_{\rm train}) = \underset{\phi}{\arg\min} \; L^{\rm in}(\theta,\mathcal{D}_{\rm train},\phi).
\end{equation}
In the setting of \textit{regularization learning} for example, the inner loss is of the form $L^{\rm in}(\theta,\mathcal{D}_{\rm train},\phi) = L(\phi,\mathcal{D}_{\rm train}) + R(\theta,\phi)$, where $L$ is the same loss as the one used on the test set, and $R(\theta,\phi)$ is a regularization term. Exploiting the fact that, at the end of the adaptation phase, the task-specific parameter $\phi(\theta,\mathcal{D}_{\rm train})$ satisfies the `equilibrium condition' $\frac{\partial L^{\rm in}}{\partial \phi}(\theta,\mathcal{D}_{\rm train},\phi(\theta,\mathcal{D}_{\rm train})) = 0$, \citet{zucchet2021contrastive} then propose to use the EqProp method to compute the gradients (with respect to $\theta$) of the meta loss
\begin{equation}
    \mathcal{L}_{\rm meta} = L^{\rm out}(\phi(\theta,\mathcal{D}_{\rm train}),\mathcal{D}_{\rm test}),
\end{equation}
where $L^{\rm out}$ is a so-called \textit{outer loss}, e.g. $L^{\rm out} = L$ in regularization learning. To this end, they introduce the `total loss'
\begin{equation}
    L^{\rm total}(\theta,\beta,\phi) = L^{\rm in}(\theta,\mathcal{D}_{\rm train},\phi) + \beta \; L^{\rm out}(\phi,\mathcal{D}_{\rm test}),
\end{equation}
where $\beta$ is a scalar parameter (the nudging factor). Then they consider
\begin{equation}
    \phi_\star^\beta = \underset{\phi}{\arg\min} \; L^{\rm total}(\theta,\beta,\phi),
\end{equation}
defined for any $\beta$, and they exploit the formula
\begin{equation}
    \frac{\partial \mathcal{L}_{\rm meta}}{\partial \theta} = \left. \frac{d}{d\beta} \right|_{\beta=0} \frac{\partial L^{\rm in}}{\partial \theta} \left( \theta, \mathcal{D}_{\rm train}, \phi_\star^\beta \right),
\end{equation}
which is a reformulation of Theorem \ref{thm:static-eqprop}.

More generally, the contrastive meta-learning method applies to any functions $L^{\rm in}$ and $L^{\rm out}$, and any bilevel optimization problem where the aim is to optimize $\mathcal{L}_{\rm meta}(\theta) = L^{\rm out}(\theta,\phi(\theta))$ with respect to $\theta$, under the constraint that $\frac{\partial L^{\rm in}}{\partial \phi}(\theta,\phi(\theta)) = 0$.
\chapter{Conclusion}

In this thesis, we have presented a mathematical framework that applies to systems that are described by variational equations, while maintaining the benefits of backpropagation. This framework may have implications both for neuromorphic computing and for neuroscience.

\section{Implications for Neuromorphic Computing}

Current deep learning research is grounded on a very general and powerful mathematical principle: automatic differentiation for backpropagating error gradients in differentiable neural networks. This mathematical principle is at the heart of all deep learning libraries (TensorFlow, PyTorch, Theano, etc.). The emergence of such software libraries has greatly eased deep learning research and fostered the large scale development of parallel processors for deep learning (e.g. GPUs and TPUs). However, these processors are power inefficient by orders of magnitude, if we take the brain as a benchmark. The rapid increase in energy consumption raises concerns as the use of deep learning systems in society keeps growing \citep{strubell2019energy}.

At a more abstract level of description, the backpropagation algorithm of conventional deep learning allows to train neural networks by stochastic gradient descent (SGD). In this thesis, we have presented a mathematical framework which allows to preserve the key benefits of SGD, but opens a path for implementation on neuromorphic processors which directly exploit physics and the in-memory computing concept to perform the desired computations. Building neuromorphic systems that can match the performance of current deep learning systems is still in the future, but the potential speedup and power reduction is extremely appealing. This would also allow us to scale neural networks to sizes beyond the reach of current GPU-based deep learning models. Besides, by mimicking the working mechanisms of the brain more closely, such neuromorphic systems could also inform neuroscience.

\section{Implications for Neuroscience}

How do the biophysical mechanisms of neural computation give rise to intelligence? Ultimately, if we want to explain how our thoughts, memories and behaviours emerge from neural activities, we need a mathematical theory. Here, we explain how the mathematical framework presented in this thesis may help for this purpose.

\subsection{Variational Formulations of Neural Computation ?}

A number of ideas at the core of today's deep learning systems draw inspiration from the brain. However, these deep neural networks are not biologically realistic in details. In particular, the neuron models may look overly simplistic from a neurophysiological point of view. In these models, the state of a neuron is described by a single number, which can be thought of as its firing rate. A real neuron on the other hand, like any other biological cell, is an extraordinarily complex machinery, composed of a very large quantity of proteins interacting in complex ways. Because of this complexity, the hope to ever come up with a mathematical theory of the brain may seem vain.

This complexity should not discourage us, however.
One key point is that not all details of neurobiology may be relevant to explain the fundamental working mechanisms of the brain that give rise to emerging properties such as memory and learning. \citet{hertz1991introduction} puts it in these words: "Just as most of the details of the separate parts of a large ship are unimportant in understanding the behaviour of the ship (e.g. that it floats or transport cargo), so many details of single nerve cells may be unimportant in understanding the collective behaviour of a network of cells". Which biophysical characteristics of neural computation are essential to explain how information is processed in brains, and which can be abstracted away? While current deep learning systems use \textit{rate models} (i.e. neuron models relying on the neuron's firing rate), a simple but more realistic neuron model is the leaky-integrate and fire (LIF) model, which accounts for the \textit{spikes} (a.k.a. \textit{action potentials}) and the electrical activity of neurons at each point in time. A more elaborated model is the Hodgkin-Huxley model of action potentials, which takes into account ion channels to describe how spikes are initiated. At a more detailed level, real neurons have a spatial layout, and each part of the neuron has its own voltage value and ion concentration values. In recent years, more realistic neuron models that include spikes \citep{zenke2017superspike,payeur2020burst} and multiple compartments \citep{bengio2016feedforward,guerguiev2017towards,sacramento2018dendritic,richards2019dendritic,payeur2020burst} have been proposed for deep learning. Can we figure out which elements of neurobiology are essential to explain the mechanisms underlying intelligence, abstracting out those that are not necessary to understand these mechanisms?

In this thesis, we have presented a mathematical theory which applies to a broad class of systems whose state or dynamics is the solution of a variational equation. Given the predominance of variational principles in physics, a question arises: can neural dynamics in the brain be derived from variational principles too? We note that various variational principles for neuroscience modelling have been proposed \citep{friston2010free,betti2019cognitive,dold2019lagrangian,kendall2021gradient}.

\subsection{SGD Hypothesis of Learning}

Today, the neural networks of conventional deep learning are trained by stochastic gradients descent (SGD), using the backpropagation algorithm to compute the loss gradients. The backpropagation algorithm is not biologically realistic as it requires that neurons emit two quite different types of signals: an activation signal in the forward pass, and a signed gradient signal in the backward pass. Real neurons on the other hand communicate with only one sort of signals -- the \textit{spikes}. Worse, the backpropagation through time (BPTT) algorithm used in recurrent networks requires storing past hidden states of the neurons.

Although these deep neural networks are not biologically realistic in details, they have proved to be valuable not just for AI applications, but also as models for neuroscience. In recent years, deep learning models have been used for neuroscience modelling of the visual and auditory cortex. Deep neural networks have been found to outperform other biologically plausible models at matching neural representations in the visual cortex \citep{mante2013context,cadieu2014deep,kriegeskorte2015deep,sussillo2015neural,yamins2016using,pandarinath2018inferring} and at predicting auditory neuron responses \citep{kell2018task}.
Because SGD-optimized neural networks are state-of-the-art at solving a variety of tasks in AI, and also state-of-the-art models at predicting neocortical representations, a hypothesis emerges which is that the cortex may possess general purpose learning algorithms that implement SGD. More generally, a view emerges, which is that the fundamental principles of current deep learning systems may provide a useful theoretical framework for gaining insight into the principles of neural computation \citep{richards2019deep}.

While the backpropagation algorithm is not biologically realistic, a more reasonable hypothesis is that the brain uses a different mechanism to compute the loss gradients required to perform SGD. A long standing idea is that the loss gradients may be encoded in the difference of neural activities to drive synaptic changes \citep{hinton1988learning,lillicrap2020backpropagation}. If variational principles for neural dynamics exist, and if their corresponding energy function or Lagrangian function have the sum-separability property, then EqProp would suggest a learning mechanism involving local learning rules and suitable with optimization by SGD. Whereas in the setting of energy-based models, EqProp suggests that gradients are encoded in the difference of \textit{neural activities} (as hypothesized by \citet{lillicrap2020backpropagation}), in the Lagrangian-based setting, EqProp suggests that gradients are encoded in the difference of \textit{neural trajectories}.

We note that the SGD hypothesis of learning also raises several questions. First, what is the loss function that is optimized? Unlike in conventional machine learning, there are likely a variety of such loss functions, which may vary across brain areas and time \citep{marblestone2016toward}. Second, SGD dynamics depend on the metric that we choose for the space of synaptic weights \citep{surace2020choice}. Also, while the SGD hypothesis is reasonable for the function of the cortex, other components of the brain such as the hippocampus may use different learning algorithms.

\subsection{The Role of Evolution}

In this manuscript we have emphasized the importance of learning. The ability for individuals to learn within their lifetime is indeed an essential component of intelligence. But learning alone is not the only key to human and animal intelligence. Far from being a blank slate, at birth, the brain is pre-wired and structured. This structure provides us straight from birth with innate intuitions, abilities, and mechanisms which make us predisposed to learn much more quickly \citep[Chapters 3 and 4]{dehaene2020we}. These innate structures and mechanisms have arisen through evolution. Machine learning models account for these innate aspects of intelligence using \textit{inductive biases} (or \textit{priors}). Traditionally, these inductive biases are manually crafted. However, given the complexity of the brain, one may wonder whether one will ever manage to reverse-engineer the inductive biases of the brain `by hand'.

Evolution by natural selection can be regarded as another optimization process where, loosely speaking, the `adjustable parameters' are the \textit{genes}, and the `objective' that is maximized is the \textit{fitness} of the individual. The human genome has around $3 \times 10^9$ `parameters' (base pairs). Just like moving from manually crafted computations (in classical AI) to learned computations (in machine learning) proved extremely fruitful both for AI and neuroscience modelling, one may benefit from `evolving' inductive biases, by mimicking the process of evolution in  some way. One branch of machine learning which is relevant to address questions related to the optimization process carried out by evolution is \textit{meta-learning} (Section \ref{sec:contrastive-meta-learning}).
A related path, proposed by \citet{zador2019critique}, is to reverse-engineer the program encoded in the genome which wires up the brain during embryonic development.

We note that the learning rules and loss functions of the brain have also arisen through evolution and are possibly much more complex than in the traditional view of machine learning (as we have formulated it in this manuscript).

\section{Synergy Between Neuroscience and AI}

Is a mathematical theory of the brain all we need to understand the brain? Or do we need to build brain-like machines to claim that we understand it? This question depends of course on what we mean by `understanding' ; it is one of the fundamental questions of philosophy of science. In many fields of science, we have mathematical models of objects that we cannot build (for example, we have physics models of the Sun, but we cannot build one). Although a theory is all we need in principle to explain the measurements of experimentally accessible variables, it seems also clear that, if we can build a brain, or simulate one, our `understanding' of the brain will further improve, and the underlying theory will become more plausible.

Can we simulate a brain in software? In the introductory chapter, we have argued that with current digital hardware this strategy would at best be extremely slow and power hungry, and more likely just unfeasible. Just like it is impossible for statistical physicists to simulate in software the internal dynamics of a fluid composed of $10^{23}$ particles, simulating a brain composed of $10^{11}$ neurons and $10^{15}$ synapses (and many many more proteins) seems unfeasible. In these respects, the development of appropriate neuromorphic systems will eventually be necessary to emulate a brain.

More likely, by making it possible to run and train neural networks with more elaborated neural dynamics that more closely mimic those of real neurons, the development of neuromorphic hardware will help us come up with new hypotheses about the working mechanisms of the brain. As we build more brain-like AI systems, and as the performance of these AI systems improves, we can formulate new mathematical theories of the brain.
Just like the rise of deep learning as a leading approach to AI has eased the flow of information between different fields of AI (computer vision, speech recognition, natural language processing, etc.), we can expect that the development of neuromorphic systems together with mathematical frameworks to train them will ease the flow of information between AI and neuroscience too.

The problem of intelligence is thus both a problem for natural sciences and engineering. It counts to the greatest scientific problems, together with the problem of the origin of life, the problem of the origin of the universe, and many others. One specificity of the problem of intelligence is that, as we make progress towards solving this problem, we can use the knowledge that we acquire to build machines that can help us solve other scientific problems more easily. For example, most recently, a program called AlphaFold 2 promises to help us discover the 3D structure of proteins much more rapidly than prior methods, which is key to understanding most biological mechanisms in living organisms.

\bibliographystyle{abbrvnat}
\bibliography{biblio}

\appendix
\chapter{Gradient Estimators}
\label{chapter:appendix}

Recall from Theorem \ref{thm:static-eqprop} that the loss gradient is equal to
\begin{equation}
    \frac{\partial \mathcal{L}}{\partial \theta}(\theta,x,y) = \left. \frac{d}{d\beta} \right|_{\beta=0} \frac{\partial E}{\partial \theta} \left( \theta,x,s_\star^\beta \right).
\end{equation}
The \textit{one-sided gradient estimator} is defined as
\begin{equation}
\widehat{\nabla}_\theta(\beta) = \frac{1}{\beta} \left( \frac{\partial E}{\partial \theta}(\theta, x, s_\star^\beta) - \frac{\partial E}{\partial \theta}(\theta, x, s_\star^0) \right),
\end{equation}
and the \textit{symmetric gradient estimator} is defined as
\begin{equation}
\widehat{\nabla}_\theta^{\rm sym}(\beta) = \frac{1}{2\beta} \left( \frac{\partial E}{\partial \theta}(\theta, x, s_\star^\beta) - \frac{\partial E}{\partial \theta}(\theta, x, s_\star^{-\beta}) \right).
\end{equation}

\begin{lma}
\label{lma:gradient-estimators}
Let $\theta$, $x$ and $y$ be fixed.
Assuming that the function $\beta \mapsto \frac{\partial E}{\partial \theta}(\theta,x,s_\star^\beta)$ is three times differentiable, we have, as $\beta \to 0$:
\begin{align}
\widehat{\nabla}_\theta(\beta) &= \frac{\partial \mathcal{L}}{\partial \theta}(\theta,x,y) + \frac{A}{2} \beta + O(\beta^2), \label{eq:estimate-EP}\\
\widehat{\nabla}_\theta^{\rm sym}(\beta) &= \frac{\partial \mathcal{L}}{\partial \theta}(\theta,x,y) + O(\beta^2),\label{eq:estimate-EP-sym}
\end{align}
where $A = \left. \frac{d^2}{d\beta^2} \right|_{\beta=0} \frac{\partial E}{\partial \theta}(\theta,x,s_\star^\beta)$ is a constant (independent of $\beta$, but dependent on $\theta$, $x$ and $y$).
\end{lma}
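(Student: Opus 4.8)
The plan is to reduce the statement to a one-variable Taylor expansion. Define $g(\beta) = \frac{\partial E}{\partial \theta}(\theta, x, s_\star^\beta)$, viewed as a function of the single real variable $\beta$ in a neighborhood of $\beta = 0$ (with $\theta$, $x$, $y$ fixed); by hypothesis $g$ is three times differentiable there. Two facts identify its low-order derivatives: Theorem \ref{thm:static-eqprop} says precisely that $g'(0) = \frac{\partial \mathcal{L}}{\partial \theta}(\theta, x, y)$, and the quantity $A$ in the statement is by definition $g''(0)$. Taylor's theorem (with Peano remainder) then gives $g(\beta) = g(0) + g'(0)\, \beta + \tfrac{1}{2} g''(0)\, \beta^2 + \tfrac{1}{6} g'''(0)\, \beta^3 + o(\beta^3)$ as $\beta \to 0$.

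First I would handle the one-sided estimator. Dividing the expansion of $g(\beta) - g(0)$ by $\beta$ gives $\widehat{\nabla}_\theta(\beta) = g'(0) + \tfrac{1}{2} g''(0)\, \beta + \tfrac{1}{6} g'''(0)\, \beta^2 + o(\beta^2)$; absorbing the last two terms into $O(\beta^2)$ and substituting $g'(0) = \frac{\partial \mathcal{L}}{\partial \theta}$ and $g''(0) = A$ yields Eq.~(\ref{eq:estimate-EP}).

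Next I would treat the symmetric estimator. Writing the third-order expansions of both $g(\beta)$ and $g(-\beta)$ and subtracting, the even-order terms ($g(0)$ and $\tfrac{1}{2} g''(0)\, \beta^2$) cancel, leaving $g(\beta) - g(-\beta) = 2 g'(0)\, \beta + \tfrac{1}{3} g'''(0)\, \beta^3 + o(\beta^3)$. Dividing by $2\beta$ gives $\widehat{\nabla}_\theta^{\rm sym}(\beta) = g'(0) + \tfrac{1}{6} g'''(0)\, \beta^2 + o(\beta^2) = \frac{\partial \mathcal{L}}{\partial \theta}(\theta, x, y) + O(\beta^2)$, which is Eq.~(\ref{eq:estimate-EP-sym}).

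There is no genuine obstacle in this argument; the only points requiring care are that the hypothesis ``three times differentiable'' is exactly what licenses the third-order Taylor expansion with a remainder that is $o(\beta^3)$ (and hence, after division by $\beta$, error terms that are $O(\beta^2)$), and that the vanishing of the even-order terms under the antisymmetric combination $g(\beta) - g(-\beta)$ is precisely what upgrades the bias of the symmetric estimator from $O(\beta)$ to $O(\beta^2)$. If one is content with a weaker conclusion, twice differentiability alone suffices to give $\widehat{\nabla}_\theta(\beta) = \frac{\partial \mathcal{L}}{\partial \theta} + o(\beta)$, but the sharper leading-order correction $\tfrac{A}{2}\beta$ requires the stated regularity.
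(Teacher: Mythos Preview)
Your proof is correct and follows essentially the same approach as the paper: define the one-variable function $\beta \mapsto \frac{\partial E}{\partial \theta}(\theta,x,s_\star^\beta)$, identify its first two derivatives at $0$ via Theorem~\ref{thm:static-eqprop} and the definition of $A$, Taylor-expand, and read off the two estimators. The only cosmetic difference is that the paper obtains the symmetric estimator as the average of $\widehat{\nabla}_\theta(\beta)$ and $\widehat{\nabla}_\theta(-\beta)$, whereas you subtract the expansions directly; the content is identical.
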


Lemma \ref{lma:gradient-estimators} shows that the one-sided estimator $\widehat{\nabla}_\theta(\beta)$ possesses a first-order error term in $\beta$, which the symmetric estimator $\widehat{\nabla}_\theta^{\rm sym}(\beta)$ eliminates.

\begin{proof}
Define $f(\beta) = \frac{\partial E}{\partial \theta}(\theta,x,s_\star^\beta)$ and note that $f'(0) = \frac{\partial \mathcal{L}}{\partial \theta}(\theta,x,y)$ by Theorem \ref{thm:static-eqprop}, and that $f''(0) = \left. \frac{d^2}{d\beta^2} \right|_{\beta=0} \frac{\partial E}{\partial \theta}(\theta, x, s_\star^\beta)$.
As $\beta \to 0$, we have the Taylor expansion $f(\beta) = f(0) + \beta \; f'(0) + \frac{\beta^2}{2} f''(0) +  O(\beta^3)$. With these notations, the one-sided estimator reads $\widehat{\nabla}_\theta(\beta) = \frac{1}{\beta} \left( f(\beta) - f(0) \right) = f'(0) + \frac{\beta}{2} f''(0) + O(\beta^2)$, and the symmetric estimator, which is the mean of $\widehat{\nabla}_\theta(\beta)$ and $\widehat{\nabla}_\theta(-\beta)$, reads $\widehat{\nabla}_\theta^{\rm sym}(\beta) = f'(0) + O(\beta^2)$.
\end{proof}

\end{document}